%% file: main.tex
\documentclass[11pt]{article}
\usepackage[margin=1in]{geometry}
\usepackage[blocks]{authblk}
\makeatletter
\@ifundefined{AB@maketitle}{%
  \newcommand{\authorbox}[1]{#1}%
}{%
  \let\maketitle\AB@maketitle
  \newcommand{\authorbox}[1]{\makebox[0.38\textwidth][c]{#1}}%
}
\makeatother

\input{math_commands.tex}

\DeclareMathOperator{\Pa}{Pa}
\DeclareMathOperator{\MSE}{MSE}
\DeclareMathOperator{\depth}{depth}
\DeclareMathOperator{\spn}{span}
\DeclareMathOperator{\adj}{adj}
\DeclareMathOperator{\Real}{Re}
\newcommand{\ip}[2]{\langle #1,#2\rangle}
\newcommand{\nrm}[1]{\lVert #1\rVert}
\newcommand{\siR}[3]{R_{#1}(#2,#3)}
\newcommand{\siBR}[3]{\overline R_{#1}(#2,#3)}

\usepackage{amsthm}
\usepackage{hyperref}
\usepackage{url}
\usepackage{csquotes}
\usepackage[style=alphabetic,natbib=true,maxalphanames=3,minalphanames=3,maxbibnames=99]{biblatex}
\usepackage{thmtools}
\makeatletter
\@ifundefined{newcounteralias}{}{%
  \renewcommand\thmt@autorefsetup{%
    \@xa\def\csname\thmt@envname autorefname\@xa\endcsname
      \@xa{\thmt@thmname}%
  }%
}
\IfFormatAtLeastTF{2024-11-01}{
  \renewcommand*\@addtoreset[2]{%
    \bgroup
      \edef\aliasctr@@truelist{\aliasctr@follow{#2}}%
      \let\@elt\relax
      \expandafter\@cons\aliasctr@@truelist{{#1}}%
    \egroup
    \expandafter\xdef\csname theH#1\endcsname{%
      \expandafter\noexpand\csname theH#2\endcsname.%
      \noexpand\the\noexpand\value{#1}}%
  }
}{}
\makeatother
\usepackage{thm-restate}
\usepackage{cleveref}
\usepackage{amssymb}
\usepackage{booktabs}
\usepackage{adjustbox}
\usepackage{etoolbox}
\usepackage{float}

\makeatletter
\expandafter\patchcmd\csname\string\thmt@restatable\endcsname
  {\thmt@trivialref{thmt@@#3}{??}}
  {\protect\ref{thmt@@#3}}
  {}
  {\PackageError{restate-links}{Could not patch thm-restate}
    {Check whether the installed thm-restate version has changed.}}
\makeatother

\theoremstyle{plain}

\newtheorem{theorem}{Theorem}[section]

\newtheorem{lemma}[theorem]{Lemma}

\newtheorem{proposition}[theorem]{Proposition}

\theoremstyle{definition}

\newtheorem{definition}[theorem]{Definition}

\AddToHook{cmd/appendix/before}{\crefalias{section}{appendix} \crefalias{subsection}{appendix}}

\title{Optimal Networks for Agentic Information Aggregation}
\author{\authorbox{MohammadHossein~Bateni}}
\affil{Google Research\\\texttt{bateni@google.com}}
\author{\authorbox{Zahra~Hadizadeh}}
\affil{University of California, Irvine\\\texttt{zhadizad@uci.edu}}
\author{\authorbox{MohammadTaghi~Hajiaghayi}}
\affil{University of Maryland\\\texttt{hajiagha@umd.edu}}
\author{\authorbox{Mahdi~JafariRaviz}}
\affil{University of Maryland\\\texttt{mahdij@umd.edu}}
\author{\authorbox{Shayan~Taherijam}}
\affil{University of California, Irvine\\\texttt{staherij@uci.edu}}
\date{}

\begin{document}
\maketitle

\begin{abstract}
We study information aggregation in the networked learning model introduced by \citeauthor{kearns2026networked} (SODA~\citeyear{kearns2026networked}). There is a fixed distribution over $d$ features and a common label. Agents learn in topological order on a directed acyclic graph. Each observes a subset of the features and its parents' predictions, fits a linear predictor to minimize mean squared error, and passes only its prediction forward. The global predictor is the best linear predictor using all features. \citeauthor{kearns2026networked} show that the output agent's error approaches the global predictor's error along sufficiently deep paths with suitable feature coverage, while insufficient depth can prevent aggregation even in large networks.

In contrast to their main focus on a given graph and feature allocation, we consider the limits of the model under two settings. In the \emph{adaptive designer} setting, a designer chooses the graph, feature allocation, and output agent knowing the distribution. In the \emph{oblivious designer} setting, the designer fixes all three before an adversary chooses the distribution. Each agent observes one feature and receives predictions from a limited number of parents.

We call the aggregation \emph{exact} when the output agent matches the global predictor exactly. For $d\ge3$, we show that no finite depth guarantees exact aggregation for every distribution with one parent per agent, even when the designer knows the distribution.

In contrast, two parents per agent suffice for exact aggregation even in the oblivious designer setting. A fixed graph, feature allocation, and output agent achieve this for every distribution at depth $O(d\log d)$. Knowing the distribution reduces the depth to $O(d)$. Both constructions use $O(d^2)$ agents, with a very large constant for two parents. We show the bounds on the depth and number of agents are all optimal up to constant factors.
\end{abstract}

\section{Introduction}
\label{sec:introduction}

Social learning in networks studies how parties with different information learn from one another \citep{degroot1974reaching}. Parties typically learn from the opinions or predictions of others rather than from their private observations. A similar structure appears in multi-agent AI systems, where a task is split across several models and later models build on the outputs of earlier ones \citep{guo2024multiagent}. In such systems the network is a design choice: the designer decides which model sees which information and which models talk to each other. This raises a basic question. If each party passes on only its prediction, can a well-designed network still do as well as a single learner that sees everything? And if so, how many predictions must each learner receive, and how deep and how large must the network be?

\citet{kearns2026networked} model this kind of distributed learning as a network of agents, where each agent has access to a subset of the features and learns a model to predict a common label. Agents learn in turn and then forward their predictions to their successors in the network. Each prediction is therefore both an estimate of the label and the summary of an agent's information that later agents receive. They ask how the output agent's prediction compares to a global predictor that has access to all the features.

More formally, let $G=(A,E)$ be a directed acyclic graph (DAG), and let $\gD$ be a distribution over $(x,Y)$, with $x=(x_1,\ldots,x_d)$ being the vector of features and $Y$ being the label. Each agent has direct access to a subset of the features of $x$ and receives the predictions of its parents in $G$. Following a topological order, agents fit linear predictors of $Y$ from these inputs to minimize mean squared error (MSE). The \emph{excess error} of a prediction is the difference between its MSE and the MSE of the best linear global predictor using all $d$ features. We say that a network achieves \emph{information aggregation} when the output agent's prediction is competitive with the global predictor, meaning that its excess error is small. We call the aggregation \emph{exact} when this excess error is zero.

One might expect that observing every feature somewhere in the network would be enough for exact aggregation. However, a feature that does not help predict the label on its own may become useful when combined with another feature. An agent may leave such a feature out of its prediction, so later agents receive no information about it.

\citet{kearns2026networked} ask under what conditions on the graph and feature allocation the network can achieve information aggregation. Their main results focus on analyzing an instance of the problem where the distribution, the graph, the feature allocation, and the output agent are already chosen. They show that under a certain condition on the graph and feature allocation, the excess error of the output agent converges to zero as the depth $D$ grows. Here, depth is the number of agents on a longest path ending at the output agent. They also give examples of distributions, graphs, and feature allocations for which the excess error is bounded below by an inverse polynomial in $D$.

They also show that depth can be necessary even when the graph and feature allocation are chosen in the best possible way for the distribution. Specifically, \citet[Theorem~5.9]{kearns2026networked} give, for every $d$, a distribution on $d$ features such that every DAG of depth at most $D<d$ and every allocation of one feature per agent have excess error at least $1/(D+1)$ at the output agent. We emphasize the restriction $D<d$ in this result, which leaves open what is possible when the depth reaches or exceeds the number of features.

To understand the limits of the model from a practical perspective, in this work we consider two natural settings for choosing the instance. In the \emph{adaptive designer} setting, an adversary first chooses a distribution $\gD$ over $(x,Y)$ with $d$ features. A designer then chooses a DAG $G$ of depth at most $D$, an allocation of one feature per agent, and an output agent with knowledge of $\gD$. In the \emph{oblivious designer} setting, the designer first chooses the DAG $G$ of depth at most $D$, an allocation of one feature per agent, and the output agent without knowledge of $\gD$. The adversary then chooses $\gD$ with knowledge of the designer's choices, so the same graph, allocation, and output agent must work for every distribution.

We also require each agent to receive predictions from at most $b$ parents in both settings. Without this restriction, a single agent could be asked to fit a model from many parent predictions, making its learning problem as large as the global one. With this bound, each agent learns from at most $b+1$ inputs, including its raw feature.

In both settings, we ask how quickly excess error can decrease with depth, and how much depth and how many agents are necessary and sufficient for exact aggregation. We denote the optimal worst-case excess errors at depth $D$ by $\siR{b}{d}{D}$ in the adaptive designer setting and $\siBR{b}{d}{D}$ in the oblivious designer setting. We define these quantities formally in \Cref{sec:preliminaries}.

\subsection{Our results}
\label{subsec:our-results}

We summarize our main results in \Cref{tab:main}. For the excess-error bounds, we bound each feature's second moment and the sum of the $\ell_1$ norm of the global predictor's coefficients by one (\Cref{def:normalized-distribution}). The constructions for exact aggregation require only finite second moments.

\begin{table}[htbp]
\centering
\caption{Bounds for $d\ge3$ features and fixed parent limit $b$.}
\label{tab:main}
\renewcommand{\arraystretch}{1.15}
\begin{adjustbox}{width=\linewidth}
\begin{tabular}{clcc}
\toprule
\shortstack{Parent limit ($b$)} & Quantity & Adaptive designer & Oblivious designer \\
\midrule
$1$ & Excess error at depth $D$ & $\siR{b}{d}{D}=\Theta(1/D)$ & $\siBR{b}{d}{D}=\Omega(1/D)$ \\
\addlinespace
$2$ & Depth for exact aggregation & $\Theta(d)$ & $\Theta(d\log d)$ \\
\addlinespace
$\ge3$ & Depth for exact aggregation & $d$ & $\Theta(d\log d)$ \\
\addlinespace
$\ge2$ & Number of agents for exact aggregation & $\Theta(d^2)$ & $\Theta(d^2)$ \\
\bottomrule
\end{tabular}
\end{adjustbox}
\end{table}

We first consider $b=1$, where the ancestors of the output form a path. For every $d\ge3$ and $D\ge1$, we show that $\siR{1}{d}{D}=\Theta(1/D)$ (\Cref{cor:b-1:rate}). This also gives $\siBR{1}{d}{D}=\Omega(1/D)$ in the oblivious designer setting. The matching upper bound in the adaptive designer setting uses knowledge of the distribution to choose the feature allocation along a path of $D$ agents in a greedy way. An oblivious designer cannot make these greedy choices. A natural fixed choice is a path whose agents observe $x_1,\ldots,x_d$ in cyclic order, so that every $d$ consecutive agents see all features. \citet[Theorem~1]{bateni2026optimal} show that such a path has error $O(d^2/D)$. Thus $\siBR{1}{d}{D}$ lies between $\Omega(1/D)$ and $O(d^2/D)$, and we leave closing this gap in $d$ open.

In contrast to $b=1$, with three parents per agent ($b=3$), the designer can fix a graph, allocation, and output agent that achieve exact aggregation for every distribution at depth $O(d\log d)$ using $O(d^2)$ agents (\Cref{thm:b-3:fixed}). When the designer knows the distribution, we reduce the worst-case depth to $d$ while still using $O(d^2)$ agents (\Cref{thm:b-3:exact}).

We show that each three-parent agent can be replaced by a fixed gadget with a constant number of two-parent agents that reproduces its prediction for every distribution (\Cref{lem:b-2:fixed-gadget}). This constant is independent of $d$ and of the distribution, but it is very large, since the gadget runs all small two-parent networks in parallel. Applying this replacement gives exact aggregation with $b=2$, depth $O(d)$ in the adaptive designer setting and $O(d\log d)$ in the oblivious designer setting, using $O(d^2)$ agents in both (\Cref{thm:b-2:bounds}).

We then show that these depth bounds are optimal up to constant factors (\Cref{thm:b-3:depth-lower}). In the adaptive designer setting, some distributions require depth at least $d$ for exact aggregation, even with no parent limit. Thus the depth-$d$ bound with three parents is exactly optimal in the worst case. In the oblivious designer setting, every fixed parent limit $b\ge2$ requires depth $\Omega(d\log d)$, matching the construction for exact aggregation.

Finally, we show that the quadratic number of agents in our constructions is necessary. For every fixed $b\ge2$, some normalized distribution requires $\Omega(d^2)$ agents for exact aggregation, regardless of depth and even when the designer knows the distribution (\Cref{thm:size:lower-bound}). This lower bound therefore also holds in the oblivious designer setting, so the number of agents is optimal in both settings.

\section{Preliminaries}
\label{sec:preliminaries}

Let $(x,Y)\sim\gD$, where $x=(x_1,\ldots,x_d)\in\R^d$ is the feature vector and $Y\in\R$ is the label. We assume that the features and the label have finite second moments. We write $[d]=\{1,\ldots,d\}$, and all expectations are over $\gD$.

We work in $L^2(\gD)$, the space of real random variables with finite second moment, and regard two variables as equal if they agree with probability one. For $U,V\in L^2(\gD)$, we write $\ip{U}{V}=\E[UV]$ and $\nrm{U}^2=\E[U^2]$. We use $f$ for both a predictor and the random variable $f(x)$. Its mean squared error is then $\MSE(f)=\E[(Y-f)^2]=\nrm{Y-f}^2$. For subspaces $V,W\subseteq L^2(\gD)$, their sum $V+W=\{v+w\mid v\in V,\ w\in W\}$ is the smallest subspace containing both.

We call vectors $u,v\in L^2(\gD)$ orthogonal, written $u\perp v$, when $\ip{u}{v}=0$. We write $u\perp V$ when $u$ is orthogonal to every vector in $V$, and $V\perp W$ when every vector in $V$ is orthogonal to every vector in $W$. The orthogonal complement $V^\perp$ is the subspace of all vectors in $L^2(\gD)$ orthogonal to $V$.

There are agents $A=\{A_1,\ldots,A_n\}$ in a DAG $G=(A,E)$ with a designated output agent $A_G$. An edge $(A_j,A_i)\in E$ means that $A_i$ receives the prediction of $A_j$. We write $\Pa(A_i)=\{A_j\in A\mid (A_j,A_i)\in E\}$ for the set of its parents. Agent $A_i$ sees the features $x_{S_i}$ for a set $S_i\subseteq[d]$, together with its parents' predictions, and fits the best linear predictor from these inputs. Agents fit in a topological order, so every parent has been fitted before its children.

Let $f_i$ be the predictor of agent $A_i$. The linear combinations of its inputs form the space
\begin{equation*}
 V_i=\spn\left(\{x_\ell\mid \ell\in S_i\}\cup\{f_j\mid A_j\in\Pa(A_i)\}\right).
\end{equation*}
The agent therefore chooses
\begin{equation}
 \label{eq:protocol}
 f_i=\argmin_{f\in V_i}\MSE(f).
\end{equation}
We call this prediction the \emph{fit} of $Y$ from the agent's inputs. We write $f_G$ for the output agent $A_G$'s prediction.

For a finite-dimensional subspace $V\subseteq L^2(\gD)$ and a variable $Z\in L^2(\gD)$, we write $P_VZ$ for the vector in $V$ that minimizes $\nrm{Z-v}^2$ over $v\in V$. This is the orthogonal projection of $Z$ onto $V$. It is characterized by the condition that the residual $Z-P_VZ$ is orthogonal to $V$. The projected vector is unique, even when its coefficients in a given set of inputs are not. Projection is also linear in $Z$. In this notation, the fit in \eqref{eq:protocol} is $f_i=P_{V_i}Y$. 
For finite-dimensional orthogonal subspaces $V,W\subseteq L^2(\gD)$, projection onto their sum splits as $P_{V+W}Z=P_VZ+P_WZ$ for every $Z\in L^2(\gD)$.

The global predictor fits over $H=\spn\{x_1,\ldots,x_d\}$. We write $f^*=P_HY$ for its prediction and $r=\dim H$ for the feature rank. Every agent's prediction lies in $H$, since its raw features lie in $H$ and, by induction, so do its parents' predictions. For any predictor $f\in H$, its excess error is $\MSE(f)-\MSE(f^*)$. We say that the network achieves \emph{exact aggregation} when $f_G=f^*$.

\subsection{Projection}
The following identity relates the improvement in mean squared error to the change in the prediction. We will use it both to compare an agent with the global predictor and to track error along the graph. The proof is in \Cref{app:preliminaries}.

\begin{restatable}{lemma}{leastsquareserror}
\label{lem:least-squares-error}
Let $V\subseteq L^2(\gD)$ be a finite-dimensional subspace and let $f=P_VY$. Then $\ip{Y}{f}=\nrm{f}^2$, and for every $g\in V$,
\begin{equation}
 \label{eq:excess-error}
 \MSE(g)-\MSE(f)=\nrm{g-f}^2.
\end{equation}
\end{restatable}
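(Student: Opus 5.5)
The proof rests on the characterization of orthogonal projection recalled above: the residual $Y-f$ is orthogonal to $V$. Everything else is a direct consequence of this and of expanding the squared norm.

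For the first identity, since $f\in V$, orthogonality of the residual gives $\ip{Y-f}{f}=0$. By bilinearity this is $\ip{Y}{f}-\nrm{f}^2=0$, which is the claim $\ip{Y}{f}=\nrm{f}^2$.

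For \eqref{eq:excess-error}, fix $g\in V$ and decompose the residual of $g$ as
\begin{equation*}
 Y-g=(Y-f)+(f-g).
\end{equation*}
The vector $f-g$ lies in $V$, because both $f$ and $g$ do and $V$ is a subspace. Hence $f-g\perp Y-f$. Expanding the square,
\begin{equation*}
 \nrm{Y-g}^2=\nrm{Y-f}^2+2\ip{Y-f}{f-g}+\nrm{f-g}^2=\nrm{Y-f}^2+\nrm{f-g}^2,
\end{equation*}
since the cross term vanishes. Recalling that $\MSE(h)=\nrm{Y-h}^2$, this gives $\MSE(g)-\MSE(f)=\nrm{g-f}^2$, i.e.\ the Pythagorean theorem.

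The only step needing care is the orthogonality characterization. The paper states it, but if one wants it from first principles in $L^2(\gD)$, the route is a variational argument. Finite-dimensionality of $V$ guarantees that a minimizer exists, since the problem reduces to a finite least-squares problem in a basis of $V$. For any $v\in V$ and $t\in\R$, the function $t\mapsto\nrm{Y-f-tv}^2$ is minimized at $t=0$. Setting its derivative at $t=0$ to zero gives $\ip{Y-f}{v}=0$. There is no real obstacle beyond this point.
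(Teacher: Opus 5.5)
Your proof is correct and is the paper's argument: orthogonality of the residual $Y-f$ to $V$ gives $\ip{Y}{f}=\nrm{f}^2$, and the orthogonal decomposition $Y-g=(Y-f)+(f-g)$ with the Pythagorean identity gives the excess-error formula. The paper takes the orthogonality characterization as given, so your variational justification of it is extra.
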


Applying \Cref{lem:least-squares-error} with $V=H$ and $f=f^*$ gives $\MSE(f_i)-\MSE(f^*)=\nrm{f_i-f^*}^2$. Thus an agent's excess error is its squared distance from the global prediction. For an edge $(A_j,A_i)\in E$, the parent prediction $f_j$ belongs to $V_i$. Applying the lemma with $V=V_i$ and $f=f_i$ gives $\MSE(f_j)-\MSE(f_i)=\nrm{f_j-f_i}^2$. Error is therefore non-increasing along every edge. By \eqref{eq:excess-error}, exact aggregation $f_G=f^*$ is equivalent to having zero excess error.

The part of the label orthogonal to the raw features cannot affect any agent's fit. The next lemma lets us remove that part when analyzing a network. The proof is in \Cref{app:preliminaries}.

\begin{restatable}{lemma}{replaceyfstar}
\label{lem:replace-y-fstar}
Replacing $Y$ by $f^*$ leaves every agent's prediction unchanged.
\end{restatable}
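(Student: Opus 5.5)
We argue by strong induction along the topological order in which the agents fit. Write $Y=f^*+e$ with $e=Y-f^*$. Since $f^*=P_HY$, the residual $e$ is orthogonal to $H$. Run the protocol twice on the same graph, feature allocation and distribution of features. In the first run the label is $Y$ and the predictions are $f_i$. In the second run the label is $f^*$ and the predictions are $\tilde f_i$. We show $f_i=\tilde f_i$ for every agent $A_i$.

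Fix an agent $A_i$ and assume $f_j=\tilde f_j$ for all earlier agents, in particular for every parent $A_j\in\Pa(A_i)$. Then the input spaces of $A_i$ in the two runs coincide: both equal
\[
V_i=\spn\left(\{x_\ell\mid \ell\in S_i\}\cup\{f_j\mid A_j\in\Pa(A_i)\}\right).
\]
As noted in the preliminaries, every $f_j$ lies in $H$ by induction, and raw features lie in $H$, so $V_i\subseteq H$. This space is finite-dimensional, so the projection $P_{V_i}$ is well defined and linear. Therefore
\[
f_i=P_{V_i}Y=P_{V_i}f^*+P_{V_i}e.
\]

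It remains to show $P_{V_i}e=0$. Since $e\perp H\supseteq V_i$, the vector $0\in V_i$ leaves a residual $e-0$ that is orthogonal to $V_i$. By the characterization of orthogonal projection through the residual, $P_{V_i}e=0$. Hence $f_i=P_{V_i}f^*=\tilde f_i$, which completes the induction. The base case is the same argument with no parents, where $V_i$ is spanned by raw features only.

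The only delicate point is that the input space $V_i$ depends on the label, through the parents' predictions. This is why we need the induction: it lets us treat $V_i$ as a fixed subspace of $H$ when comparing the two runs. The rest is linearity of projection. As a consistency check, the global predictor of the second run is $P_Hf^*=f^*$, so exact aggregation and excess errors keep the same meaning after the replacement.
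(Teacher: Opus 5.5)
Your proof is correct and is essentially the paper's argument. Induction along the topological order makes the input space $V_i$ the same for both labels, and since $Y-f^*\perp H\supseteq V_i$, linearity of projection gives $P_{V_i}Y=P_{V_i}f^*$.
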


\subsection{Graph constraints and the two settings}

We study networks in which each agent sees one raw feature. An allocation $a\colon A\to[d]$ specifies $S_i=\{a(A_i)\}$. Different agents may observe the same feature. Agents with no path to the output can be deleted without changing its prediction.

The depth $\depth(A_i)$ of agent $A_i$ is the number of agents on a longest directed path ending at $A_i$, so a source has depth one. The depth of the network $\depth(G)$ is $\depth(A_G)$, the depth of its output agent. After deleting agents with no path to $A_G$, this is also the maximum depth in $G$. We write $\Delta^-(G)=\max_{A_i\in A}|\Pa(A_i)|$ for its maximum in-degree. Edges may skip depths, and an agent may send its prediction to any number of children.

For bounds on excess error, we must also fix the scale of the distribution. Otherwise, multiplying the label by a constant can make any positive excess error arbitrarily large. We bound the feature second moments and the sum of the absolute values of the global predictor's coefficients.

\begin{definition}[Normalized distribution]
\label{def:normalized-distribution}
Fix constants $M_X,A^*>0$. A distribution $\gD$ is \emph{normalized} at these bounds if $\nrm{x_i}^2\le M_X^2$ for every $i\in[d]$ and there is a coefficient vector $w^*\in\R^d$ such that $f^*=\sum_{i=1}^d w_i^*x_i$ and $\sum_{i=1}^d|w_i^*|\le A^*$. We write $\gC_d$ for the class of distributions with $M_X=A^*=1$.
\end{definition}

Dividing the features by $M_X$ and the label by $A^*M_X$ reduces these bounds to $M_X=A^*=1$. Thus excess-error bounds for the unit case are multiplied by $(A^*M_X)^2$ at the original scale. Our exact-aggregation results require only finite second moments and do not require normalization.

For positive integers $b,d,D$, we take the infimum over finite DAGs $G=(A,E)$, feature allocations $a\colon A\to[d]$, and output agents $A_G\in A$, subject to $\Delta^-(G)\le b$ and $\depth(A_G)\le D$. We define
\begin{align}
 \siR{b}{d}{D}
 &=\sup_{\gD\in\gC_d}
   \inf_{\substack{G,a,A_G\\
                    \Delta^-(G)\le b,\ \depth(A_G)\le D}}
   \nrm{f^*-f_G}^2,\label{eq:R}\\
 \siBR{b}{d}{D}
 &=\inf_{\substack{G,a,A_G\\
                    \Delta^-(G)\le b,\ \depth(A_G)\le D}}
   \sup_{\gD\in\gC_d}\nrm{f^*-f_G}^2.
 \label{eq:barR}
\end{align}
In the adaptive designer setting, \eqref{eq:R} chooses the best graph, allocation, and output agent for each distribution, then takes the worst error over distributions. In the oblivious designer setting, \eqref{eq:barR} first takes the worst error over distributions for each fixed graph, allocation, and output agent, then minimizes over these choices. Agents fit their coefficients from $\gD$ in both settings.

\begin{restatable}{proposition}{minimaxrelations}
\label{prop:minimax}
For all positive integers $b,d,D$, we have $0\le\siR{b}{d}{D}\le\siBR{b}{d}{D}\le1$. Both quantities are non-increasing in $b$ and $D$, and non-decreasing in $d$.
\end{restatable}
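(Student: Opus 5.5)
We prove the chain of inequalities first and then the three monotonicity claims, each by comparing the sets over which the infima and suprema in \eqref{eq:R} and \eqref{eq:barR} range.

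\textbf{Bounds.} The lower bound $0\le\siR{b}{d}{D}$ holds because every term $\nrm{f^*-f_G}^2$ is nonnegative. The middle inequality $\siR{b}{d}{D}\le\siBR{b}{d}{D}$ is the standard minimax inequality: for every fixed triple $(G,a,A_G)$ and every $\gD$,
\[
\inf_{G',a',A_{G'}}\nrm{f^*-f_{G'}}^2\le\nrm{f^*-f_G}^2\le\sup_{\gD'\in\gC_d}\nrm{f^*-f_G}^2 ,
\]
where the right side is computed under $\gD'$. We take the supremum over $\gD$ on the left and then the infimum over $(G,a,A_G)$ on the right.

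For the upper bound $\siBR{b}{d}{D}\le1$, we exhibit one feasible network and bound its worst-case error by $1$. Take a single agent, which has depth $1\le D$ and in-degree $0\le b$, observing $x_1$. By \Cref{lem:least-squares-error} applied with $V=H$ and $g=0\in H$, we get $\nrm{f_G-f^*}^2=\MSE(f_G)-\MSE(f^*)\le\MSE(0)-\MSE(f^*)=\nrm{f^*}^2$. The inequality holds because $f_G$ minimizes MSE over its span, and $0$ lies in that span. For $\gD\in\gC_d$, the triangle inequality gives $\nrm{f^*}\le\sum_i|w_i^*|\nrm{x_i}\le1$. So $\nrm{f^*-f_G}^2\le1$ uniformly in $\gD$.

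\textbf{Monotonicity in $b$ and $D$.} Increasing $b$ or $D$ only enlarges the feasible set of triples $(G,a,A_G)$, since the constraints $\Delta^-(G)\le b$ and $\depth(A_G)\le D$ are relaxed. The infimum over a larger set is no larger. This holds both for the inner infimum in \eqref{eq:R}, for each fixed $\gD$ before taking the supremum, and for the outer infimum in \eqref{eq:barR}.

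\textbf{Monotonicity in $d$.} This step needs an embedding and is the only delicate part. Given $\gD\in\gC_d$, define $\gD'$ on $d+1$ features by appending $x_{d+1}=0$. Then $\gD'\in\gC_{d+1}$, with the same $f^*$ and $w^*_{d+1}=0$.

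For the adaptive quantity, any network over $d+1$ features can be converted into one over $d$ features with the same output prediction. Replace each agent allocated to feature $d+1$ with one allocated to feature $1$, but keep track of what this does to the predictions. I expect this part to be the main obstacle, because seeing $x_1$ instead of $0$ could change later fits.

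A cleaner route is to compare the two settings directly. Under $\gD'$, an agent seeing $x_{d+1}=0$ has the same span as an agent with no raw feature. We need to show that such an agent can be simulated in the $d$-feature model. One approach is to note that these agents are allowed to be suboptimal, so the $(d+1)$-side is at least as hard as a relaxation. The simpler approach is to reverse the roles, as follows.

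For the adaptive setting, the worst case over $\gC_{d+1}$ is at least the value attained at $\gD'$. Any $(d+1)$-network evaluated on $\gD'$ can be matched by a $d$-network on $\gD$. Map each feature-$(d+1)$ agent to feature $1$ under $\gD$; its span then contains the original span. Then argue by induction in topological order that the resulting predictions achieve MSE no worse, so the output is at least as close to $f^*$. Closeness follows from \Cref{lem:least-squares-error} together with the fact that all predictions lie in $H$.

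Since larger spans give a weakly smaller MSE but not obviously a smaller MSE for descendants, I would instead choose a different padding: set $x_{d+1}=x_1$ as a duplicate feature. This $\gD'$ is still normalized with the same $w^*$. Under $\gD'$, each feature-$(d+1)$ agent has exactly the same span as a feature-$1$ agent. The relabeled network therefore produces identical predictions on $\gD$. Hence the inner infimum for $\gD'$ equals the one for $\gD$, which gives $\siR{b}{d}{D}\le\siR{b}{d+1}{D}$.

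For the oblivious setting, take a $(d+1)$-network and restrict the adversary to duplicate-padded distributions. The same relabeling converts it to a $d$-network with identical worst-case error over $\gC_d$. This gives $\siBR{b}{d}{D}\le\siBR{b}{d+1}{D}$.
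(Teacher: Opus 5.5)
Your proof is correct and follows the paper's argument: nonnegativity, the minimax inequality, the zero-predictor bound $\nrm{f^*}^2\le1$, and enlarging the feasible set for $b$ and $D$ are the paper's steps, and your final duplicate padding $x_{d+1}=x_1$ (with the distribution-independent relabeling $d+1\mapsto1$) is exactly the paper's argument for monotonicity in $d$. Drop the exploratory $x_{d+1}=0$ detour from the write-up: its span-monotonicity induction does not go through, as you yourself noted, and the duplicate-feature argument is complete without it.
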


The proof is in \Cref{app:preliminaries}. After deleting agents with no path to the output, there are only finitely many DAGs and allocations under these constraints, up to relabeling. Consequently, $\siR{b}{d}{D}=0$ means that, after seeing any distribution, the designer can choose a graph, allocation, and output agent that achieve exact aggregation within these bounds. For $\siBR{b}{d}{D}=0$, the designer can fix these choices before seeing the distribution and achieve exact aggregation for every distribution.

\section{Related Work}
\label{sec:related-work}

\citet{kearns2026networked} introduce the networked information aggregation model. They prove an $O(M/\sqrt D)$ excess-MSE bound on a path of depth $D$ when every $M$ consecutive agents collectively observe all features, under bounds on feature second moments and the global predictor's coefficient $\ell_1$ norm. \citet{bateni2026networked} extend the protocol to binary classification, where agents minimize binary cross-entropy and pass logits, and prove an $O(M/\sqrt D)$ excess-loss bound under the same coverage condition. \citet{pal2026optimal} sharpens the lower bound for cyclic feature allocations and extends it to a class of convex losses, including logistic loss. \citet{bateni2026optimal} determine the optimal worst-case covered-path rate under fixed moment and coefficient bounds: excess error can remain constant through depth of order $M^2$, and the optimal rate beyond that scale is $\Theta(M^2/D)$. They obtain analogous bounds for logistic classification. These rate results concern given networks and feature allocations under certain conditions. We instead optimize over the graph, feature allocation, and output agent, and study the resulting worst-case excess error as a function of depth and the number of allowed parents.

The closest result to ours is \citet[Theorem~5.9]{kearns2026networked}, discussed in \Cref{sec:introduction}, which applies only to depth $D<d$. An extended discussion of other related work is in \Cref{app:extended-related-work}.

\section{One parent per agent}
\label{sec:one-parent}

With at most one parent per agent, deleting agents with no path to the output leaves a path $A_1,\ldots,A_n$ with $n\le D$.

\citet[Theorem~5.9]{kearns2026networked} give a $1/(D+1)$ excess-error lower bound for every graph and single-feature allocation, but require $D<d$ and use an unnormalized distribution. For every depth $D$, we give a normalized three-feature distribution with error $\Omega(1/D)$ for every path allocation. Knowing the distribution lets the designer achieve a matching upper bound.

\begin{restatable}{theorem}{pathrate}
\label{cor:b-1:rate}
For every $d\ge3$ and $D\ge1$,
\begin{equation*}
 \frac1{640D}\le\siR{1}{d}{D}\le\frac1{D+1},\qquad
 \siBR{1}{d}{D}\ge\frac1{640D}.
\end{equation*}
In particular, $\siR{1}{d}{D}=\Theta(1/D)$.
\end{restatable}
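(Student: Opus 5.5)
The upper bound comes from a greedy path. The lower bound comes from one explicit three-feature distribution, lifted to $d$ features by monotonicity. The oblivious bound then follows from $\siR{1}{d}{D}\le\siBR{1}{d}{D}$ in \Cref{prop:minimax}. By \Cref{lem:replace-y-fstar} we may take $Y=f^*$ throughout. We write $f_0=0$ and $e_i=\nrm{f^*-f_i}^2$ along a path $A_1,\dots,A_n$.

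\emph{Upper bound.} Fix $\gD\in\gC_d$ and build a path of $D$ agents. Agent $A_i$ gets the feature $x_a$ maximizing $|\ip{f^*-f_{i-1}}{x_a}|$. Let $r=f^*-f_{i-1}$. Since $f_{i-1}=P_{V_{i-1}}Y$ and $f_{i-1}\in V_{i-1}$, we have $r\perp f_{i-1}$. Hence
\[
e_{i-1}=\ip{r}{f^*}=\sum_j w^*_j\ip{r}{x_j}\le \max_j|\ip{r}{x_j}|,
\]
using $\sum_j|w_j^*|\le1$. The agent's fit dominates $f_{i-1}+c\,x_a$ for the best scalar $c$. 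So by \Cref{lem:least-squares-error} its error drops by at least $\ip{r}{x_a}^2/\nrm{x_a}^2\ge e_{i-1}^2$, using $\nrm{x_a}\le1$. This gives $e_i\le e_{i-1}-e_{i-1}^2$. Also $e_0=\nrm{f^*}^2\le(\sum_j|w_j^*|\nrm{x_j})^2\le1$. The recursion then gives $1/e_i\ge 1/e_{i-1}+1$, so $e_D\le 1/(D+1)$. An agent may receive a zero residual, in which case we simply stop.

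\emph{Lower bound: reduction.} By \Cref{prop:minimax}, $\siR{1}{d}{D}$ is non-decreasing in $d$. Alternatively, pad with features identically zero, which never help a fit. So it suffices to handle $d=3$. For each $D$ I will build a normalized distribution on three features, depending on $D$ through a small parameter $\varepsilon\asymp D^{-1/2}$. The goal is that every path of at most $D$ agents, with any allocation, ends with $e_n\ge 1/(640D)$. Paths shorter than $D$ are covered because error only decreases along edges.

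\emph{Lower bound: construction (main obstacle).} Each step moves $f_i$ inside the two-dimensional space $\spn(f_{i-1},x_{a_i})$. Repeating a feature is useless, since the parent fit already contains it. So the path behaves like alternating projections onto planes, each containing one fixed feature direction. I would take features $x_j=\alpha u+\varepsilon v_j$ for $j=1,2,3$. Here $u\perp v_j$, and the $v_j$ are unit vectors at mutual angles $2\pi/3$ in a plane orthogonal to $u$. The target $f^*$ needs a nontrivial component along the $v$-directions, so it uses cancellation among the three features while keeping $\ell_1$ mass at most one. Since $\sum_j v_j=0$, the plane spanned by the $v_j$ is two-dimensional. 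With three nearly parallel features, no two of them together with an accumulated fit that is mostly along $u$ can recover $f^*$.

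The step I expect to be hardest is proving a potential inequality. I would track the coordinates of $f_i$ in the orthogonal basis $(u,v\text{-plane})$. Then I would show that each agent shrinks the residual's $v$-plane component by only a factor $1-O(\varepsilon^2)$, or reduces $e_i$ additively by at most $O(\varepsilon^2 e_{i-1})$ plus a controllable term. Choosing $\varepsilon^2=c/D$ with $e_0$ of order $\varepsilon^2$ would then give $e_D\ge e_0(1-O(1/D))^D=\Omega(1/D)$. The explicit constant $1/640$ would come from tracking these constants. I would first try proving the invariant directly by computing the two-dimensional projection in closed form. If that is messy, I would fall back on bounding the cosine of the angle between $f^*-f_{i-1}$ and the new plane.
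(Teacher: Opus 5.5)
Your upper bound is correct and is essentially the paper's greedy argument. The paper divides the selection score by $\nrm{x_i}$, which matters for general $M_X$; in $\gC_d$ your unnormalized rule also works because $\nrm{x_a}\le1$. Your reduction to $d=3$ and the oblivious bound via \Cref{prop:minimax} also match the paper.

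\textbf{The lower bound has a genuine gap.} It is only a plan, and its central step fails as stated. You never fix $f^*$, and the choice matters: if, say, $f^*\propto x_1-x_2$, a two-agent path is already exact. You need every pair of features to leave error $\Omega(\varepsilon^2)$.

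More seriously, the uniform per-step inequality $e_i\ge(1-O(\varepsilon^2))e_{i-1}$ is false in your own construction. Take $f^*=\beta v_1$. Normalization forces $\beta\le3\varepsilon/4$, since $v_1=\tfrac{2}{3\varepsilon}(x_1-\tfrac12x_2-\tfrac12x_3)$; this is essentially the paper's $Y=\rho U$. An agent observing $x_1$ predicts a small multiple of $x_1\approx\alpha u$, with error about $\beta^2$. The next agent observing $x_2$ fits from $\spn\{x_1,x_2\}$, which contains $x_1-x_2=\varepsilon(v_1-v_2)$. Its fit therefore captures at least $3\beta^2/4$, so the error drops by a constant factor in one step. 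Neither version of your potential inequality holds here, unless the ``controllable term'' is a one-time constant-fraction loss that you isolate and bound. Your heuristic is also inverted. A fit lying mostly along the common direction $u$ is exactly what permits a large step. Progress is slow only once the fit is nearly orthogonal to $u$.

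\textbf{What the paper does to close this.} It uses a case split plus a conditional step lemma. Agents predict zero or a multiple of a single feature until the first agent combines a nonzero prediction with a different feature. If that never happens, the error stays near $\nrm{Y}^2$. Otherwise that agent computes a two-feature fit, with error at least $\rho^2/6$ and norm at least $\rho/\sqrt5$ (\Cref{lem:b-1:lower-bound-two-features}).

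From then on, \Cref{lem:b-1:lower-bound-step} bounds each decrease $|\ip{e}{z}|^2/\nrm{z}^2$, where $e=Y-f_t$ and $z$ is the part of $x_j$ orthogonal to $f_t$. The numerator is small because the parent's residual is orthogonal to the parent's own feature $x_i$. Hence $\ip{e}{z}=\ip{e}{x_j-x_i}$ with $\nrm{x_j-x_i}=O(\rho)$, giving at most $5\rho^2\nrm{e}^2$. This is the key idea missing from your sketch. The denominator is at least $1/2$ because $|\ip{f_t}{x_i}|=|\ip{Y}{x_i}|\le\rho^2$ while $\nrm{f_t}\ge\rho/\sqrt5$. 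That norm bound persists because $\nrm{f_t}^2=\nrm{Y}^2-\MSE(f_t)$ never decreases along the path.

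Only then does $(1-10\rho^2)^D$ with $\rho^2=1/(40D)$ give $1/(320D)$. Rescaling into $\gC_3$ halves every excess error and gives the stated $1/(640D)$. Without a specified $f^*$ and these explicit estimates, your plan yields neither the $\Omega(1/D)$ bound nor that constant.
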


Thus no finite depth guarantees exact aggregation for every distribution with $d\ge3$ features, even when the designer knows the distribution. We sketch both bounds below. The full proofs are in \Cref{app:one-parent}, including the cases of one or two features in \Cref{app:b-1:two-features}.

\subsection{Lower bound}

The features in our construction share a large common component and have small informative components. Similar geometry is used in the covered-path lower bound of \citet{bateni2026optimal}.

Fix an integer $D\ge1$, choose $\rho>0$ with $\rho^2=1/(40D)$, and let $U,V,Z$ be independent standard Gaussians. Define
\begin{equation}
\label{eq:b-1:lower-bound-distribution}
 x_1=Z+\rho U,\qquad
 x_2=Z+\rho V,\qquad
 x_3=Z-\rho(U+V),\qquad
 Y=\rho U.
\end{equation}
Together, the features recover $Y=(2x_1-x_2-x_3)/3$, so $f^*=Y$. A constant rescaling gives a normalized distribution, as shown in \Cref{app:b-1:rate}.

We show that an arbitrary graph with depth at most $D$ has error at least $1/(320D)$ under this distribution. We reduce to the case where the path reaches error $\Omega(\rho^2)$ and every later agent retains at least a $1-10\rho^2$ fraction of its parent's error, as shown in \Cref{app:b-1:lower-bound}. Bernoulli's inequality and our choice of $\rho$ ensure that a constant fraction remains after at most $D$ further steps, giving error $\Omega(1/D)$.

\subsection{Upper bound in the adaptive designer setting}

We give a greedy allocation that selects the feature most correlated with the remaining error, relative to its norm. On a path $A_1,\ldots,A_D$, set $f_0=0$ and, for $t=0,\ldots,D-1$, choose
\begin{equation}
\label{eq:b-1:greedy-allocation}
 a(A_{t+1})\in
 \argmax_{i\in[d]:\,\nrm{x_i}>0}
 \frac{|\ip{Y-f_t}{x_i}|}{\nrm{x_i}}.
\end{equation}
Break ties arbitrarily. Agent $A_{t+1}$ uses this feature and its parent's prediction $f_t$, with no parent when $t=0$.

For distributions in $\gC_d$, write $e_t=\nrm{f^*-f_t}^2$. The coefficient and feature bounds give $e_0\le1$ and ensure that the selected feature has enough correlation with the residual. Adding a multiple of this feature to $f_t$ gives the decrease proved in \Cref{lem:b-1:upper-bound-step}:
\begin{equation*}
 e_{t+1}\le e_t-e_t^2.
\end{equation*}
This recurrence gives $e_D\le1/(D+1)$, as proved for general normalization bounds in \Cref{app:b-1:upper-bound}.

\section{Three parents per agent}
\label{sec:three-parents}

We now show that three parents per agent suffice for exact aggregation in both settings, and we find the optimal depth in each. Throughout this section, we use $f^*$ as the label, as permitted by \Cref{lem:replace-y-fstar}. No normalization is needed for the constructions.

\subsection{The oblivious designer setting}
\label{subsec:b-3:fixed}

The designer must choose the same graph and allocation for every distribution. We organize this graph into rounds. Each round first finds a better prediction, if the current prediction is not exact. It then combines that prediction with the predictions from earlier rounds. This second step ensures that progress in one round is preserved in all later rounds.

\begin{restatable}{theorem}{fixedthreeparents}
\label{thm:b-3:fixed}
For every $d\ge2$, the designer can fix a graph, a single-feature allocation, and an output agent that achieve exact aggregation for every distribution with finite second moments, using at most $4d^2$ agents, at most three parents per agent, and depth $O(d\log d)$.
For $d=1$, one agent suffices.
\end{restatable}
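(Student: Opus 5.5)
The plan is to build the fixed graph in $O(d)$ \emph{rounds}, each of depth $O(\log d)$, with $O(d)$ agents per round. That gives depth $O(d\log d)$ and $O(d^2)$ agents, and I would tune constants to reach $4d^2$. By \Cref{lem:replace-y-fstar} we may take the label to be $f^*\in H$. Let $g_k$ denote the prediction carried out of round $k$, with $g_0=0$. The argument needs two ingredients: each round must make \emph{exact, dimension-counted} progress, and that progress must never be lost in later rounds. Convergence alone is useless here, since \Cref{cor:b-1:rate} shows that merely decreasing error can take forever.

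\textbf{Search step.} In round $k$, for every feature $i\in[d]$, place a leaf agent that sees $x_i$ and has parent $g_{k-1}$. Its prediction is $h_{k,i}=P_{\spn(g_{k-1},x_i)}f^*$. If $g_{k-1}\ne f^*$, then $f^*-g_{k-1}\in H$ is nonzero. Since the residual is orthogonal to $g_{k-1}$, some $x_i$ must have $\ip{f^*-g_{k-1}}{x_i}\ne0$, and that leaf improves strictly. These $d$ leaves are then merged by a balanced binary tree of depth $\lceil\log_2 d\rceil$. Each internal node takes its two children and $g_{k-1}$ as its three parents, plus an arbitrary feature. Its prediction projects $f^*$ onto a space containing both children. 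By induction on the tree (monotonicity of projection onto nested spaces), the root $r_k$ satisfies $\ip{f^*-r_k}{h_{k,i}}=0$ for every $i$, and therefore $\ip{f^*-r_k}{x_i}=\ip{f^*-r_k}{g_{k-1}}$ up to the coefficients. More precisely, the root's residual is orthogonal to every $h_{k,i}$, so it is orthogonal to $\spn\{h_{k,i}\}$, whose dimension strictly exceeds that of $\spn(g_{k-1})$ restricted to the residual directions.

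\textbf{Memory step.} Next I would chain the rounds. An agent with parents $g_{k-1}$ and $r_k$, together with a third slot, outputs $g_k$. I want the invariant
\[
g_k=P_{W_k}f^*,\qquad W_1\subsetneq W_2\subsetneq\cdots\subseteq H\quad\text{while }g_{k-1}\ne f^*,
\]
where $W_k$ is spanned by all the $h_{j,i}$ with $j\le k$. The key point is that since $r_k$ is itself the projection onto a space containing $g_{k-1}$ and all $h_{k,i}$, the fit over $\spn(g_{k-1},r_k)$ agrees with $P_{W_k}f^*$ whenever $W_{k-1}\subseteq\spn$ of round $k$'s inputs. Once the invariant holds, $\dim W_k\ge k$ until exactness, and $W_k\subseteq H$ with $\dim H=r\le d$. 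Hence after $d$ rounds $g_d=P_Hf^*=f^*$, so the output agent is the last memory agent. The case $d=1$ is immediate: a single agent seeing $x_1$ outputs $P_Hf^*$.

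\textbf{Main obstacle.} The hard step is making the invariant actually hold: proving that round $k$'s root projects onto a space \emph{containing} $W_{k-1}$, not just $g_{k-1}$. A single scalar prediction carries only one direction of $W_{k-1}$, and a fit over $\spn(g,x_i)$ is generally not $P_{W+x_i}f^*$. If the naive version fails, my first fix would be to carry, alongside $g_{k-1}$, the previous round's root $r_{k-1}$ into every leaf and tree node of round $k$ through the third parent slot. I would then verify by a Gram--Schmidt style computation (using $\ip{f^*}{P_Vf^*}=\nrm{P_Vf^*}^2$ from \Cref{lem:least-squares-error}) that the residual of the new root is orthogonal to each previously absorbed direction. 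If only a weaker statement survives, namely that each round increases by one the dimension of the subspace the residual is orthogonal to, that still suffices for termination within $r\le d$ rounds.
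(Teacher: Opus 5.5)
Your round structure and search step match the paper's. However, the memory step is the heart of the proof, and it is not established; several of the claims you make about it are false.

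\textbf{What fails in your sketch.} Inside the merge tree, each agent's residual is orthogonal only to its own inputs. So the root $r_k$ is \emph{not} the projection onto a space containing all the $h_{k,i}$, and $\ip{f^*-r_k}{h_{k,i}}=0$ fails in general. The only property that propagates up the tree is that error is non-increasing along edges. That is still enough to show $r_k$ strictly improves on $g_{k-1}$, so it contributes one direction outside the retained space, and this is all the paper uses from the search step.

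Your invariant $g_k=P_{W_k}f^*$, with $W_k$ spanned by all leaf predictions, is also far too strong. Since $g_0=0$, each $h_{1,i}=P_{\spn\{x_i\}}f^*$ is a nonzero multiple of $x_i$ whenever $\ip{f^*}{x_i}\ne0$. Generically $W_1=H$, so the invariant would demand $g_1=f^*$ after a single round.

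Finally, an agent receiving $g_{k-1},r_k$ (and $r_{k-1}$ in your fix) fits over the span of at most four vectors. Nothing forces its residual to stay orthogonal to directions found two or more rounds earlier, so earlier progress can be lost. Even your fallback, ``one new orthogonal dimension per round,'' does not follow.

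\textbf{The missing idea.} You need to keep every earlier round output available to later agents and fold each new direction into the \emph{entire} history exactly. The paper maintains $p_i=P_{V_i}f^*$ with nested $V_i=\spn\{x_1,p_0,\ldots,p_i\}$. In round $t+1$ it computes $p_{t+1}=P_{V_t+\spn\{q_t\}}f^*$, so the retained space grows by one dimension whenever $p_t\ne f^*$, and $d-1$ rounds suffice.

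The nontrivial part is computing this projection with three parents per agent and only $O(\log t)$ extra depth. The gadget works as follows.
\begin{itemize}
\item It uses interval fits $g_{\ell,r}$ from $x_1,p_t,q_t,p_\ell,\ldots,p_r$.
\item It splits each interval at a shared index $m$ and uses $p_j-p_m\perp V_m$ to obtain $P_{L+R}=P_L+P_R-P_C$ with $C=\spn\{x_1,p_t,p_m\}$ (\Cref{lem:b-3:interval-overlap}).
\item This makes $g_{\ell,r}$ an explicit linear combination of $g_{\ell,m}$, $g_{m,r}$, and $g_{m,m}$, so a single agent observing $x_1$ merges them (\Cref{lem:b-3:history-merge}).
\item Intervals $[i,i+1]$ are handled by \Cref{lem:b-3:history-leaf}.
\end{itemize}
This costs at most $4(t-1)$ agents and $O(\log t)$ depth per round, which yields both the $O(d\log d)$ depth and the $4d^2$ agent count. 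Without this gadget, or some other mechanism that provably retains all earlier directions, your sketch does not prove the theorem.
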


In particular, $\siBR{b}{d}{D}=0$ for $b\ge3$ once $D$ reaches this bound. The full proof is in \Cref{app:b-3:fixed} and below we give a sketch of the construction.

We now describe the construction. Start with a source observing the fixed feature $x_1$, whose prediction is $p_0=P_{\spn\{x_1\}}f^*$. Let $V_0=\spn\{x_1\}$. We build the rest of the graph one round at a time. At round $t+1$, we find a vector $q_t$ that improves on the current prediction $p_t$ if it is not already exactly $f^*$. This would mean that $q_t \notin V_t$ and thus contains a new direction. We then set $V_{t+1} = V_t + \spn\{q_t\}$ and compute the prediction $p_{t+1}=P_{V_{t+1}}f^*$. We will show that the following invariant holds for every $0\le t<d-1$:
\begin{equation}
\label{eq:b-3:invariant}
 V_{t+1}=\spn\{x_1,p_0,\ldots,p_t,q_t\}=\spn\{x_1,p_0,\ldots,p_t,p_{t+1}\}.
\end{equation}
Since the remaining $d-1$ features span $H$ together with $V_0$, the space can grow at most $d-1$ times. It grows whenever $p_t\ne f^*$, so $p_{d-1}=f^*$, as proved in \Cref{app:b-3:fixed}.

To find a new direction $q_t$, we look for an improvement using each raw feature. For every $i=2,\ldots,d$, add an agent observing $x_i$ and receiving $p_t$. If $p_t\ne f^*$, the residual $f^*-p_t$ is a nonzero vector in $H$, so it has a nonzero inner product with some raw feature. That feature cannot be $x_1$, since the residual is orthogonal to $V_t$. At least one of these agents therefore strictly improves on $p_t$.

To collect this improvement in a single prediction, feed all these agents' predictions, together with $p_t$, into a fixed balanced binary tree. We denote the prediction of the root of this tree as $q_t$. Each internal agent observes $x_1$ and receives its two child predictions and $p_t$. By \Cref{lem:least-squares-error}, its error is no larger than either child's error, so the root prediction $q_t$ keeps any improvement found at the leaves. The tree uses $O(d)$ new agents and adds $O(\log d)$ to the depth. Since $p_t$ is already the best predictor in $V_t$, a strict improvement also proves that $q_t\notin V_t$.

It remains to add this direction to $V_{t}$ to get $V_{t+1}$. We want to compute
\begin{equation}
\label{eq:b-3:correction}
 p_{t+1}=P_{V_t+\spn\{q_t\}}f^*.
\end{equation}
By \eqref{eq:b-3:invariant} and the initialization, we have $V_t+\spn\{q_t\} = \spn\{x_1,p_0,\ldots,p_t,q_t\}$. We use a gadget that combines these vectors through a balanced tree, with at most three parents per agent.

The gadget recursively splits $p_0,\ldots,p_t$ into two nearly equal intervals sharing a boundary prediction. Each interval computes the best prediction from its entries and $x_1,p_t,q_t$. An agent observing $x_1$ merges the two interval predictions with a third fitted from $x_1,p_t,q_t$ and their shared boundary. The nested projections $p_i=P_{V_i}f^*$ make this merge exact by \Cref{lem:b-3:history-merge}. The recursion stops at pairs $p_i,p_{i+1}$, handled by \Cref{lem:b-3:history-leaf}. The gadget depends only on $t$, and its added agents observe $x_1$ and have at most three parents. It adds $O(t)$ agents and $O(\log t)$ depth (\Cref{lem:b-3:history-extension} in \Cref{app:b-3:fixed}).

Applying this gadget gives \eqref{eq:b-3:correction} and preserves \eqref{eq:b-3:invariant}, as proved in \Cref{app:b-3:fixed}. We run $d-1$ rounds and use $p_{d-1}$ as the output. Since each round uses $O(d)$ agents and adds $O(\log d)$ to the depth, the total number of agents is $O(d^2)$ and the total depth is $O(d\log d)$.

\subsection{The adaptive designer setting}
\label{subsec:b-3:adaptive}

When the designer knows the distribution, it can choose which feature to use next. This reduces the depth bound for exact aggregation to the feature rank $r$.

\begin{restatable}{theorem}{adaptivethreeparents}
\label{thm:b-3:exact}
For every distribution with finite second moments and feature rank $r\ge1$, the designer can choose a graph, a single-feature allocation, and an output agent achieving exact aggregation with at most three parents per agent, depth at most $r$, and at most $1+\binom r2$ agents. In particular, $\siR{b}{d}{D}=0$ for $b\ge3$ and $D\ge d$.
\end{restatable}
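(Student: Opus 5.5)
The plan is to build a layered network in which every agent at depth $k$ computes a projection of $f^*$ onto a span of $k$ raw features. We use $f^*$ as the label, by \Cref{lem:replace-y-fstar}. First I would choose features $x_{i_1},\dots,x_{i_r}$ forming a basis of $H$, with the order chosen using knowledge of $\gD$; rename them $x_1,\dots,x_r$. Write $U_k=\spn\{x_1,\dots,x_k\}$ and $u_k=P_{U_k}f^*$, with $U_0=\{0\}$ and $u_0=0$. For $j>k$ let $g_{k,j}=P_{U_k+\spn\{x_j\}}f^*$.

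The network has one source at depth $1$ observing $x_1$, which computes $u_1=P_{U_1}f^*$. For each depth $k=2,\dots,r$ there are agents $A_{k,j}$ for $j=k,\dots,r$, and $A_{k,j}$ observes $x_j$. The intended invariant is $f_{A_{k,j}}=g_{k-1,j}$. In particular $A_{k,k}$ computes $g_{k-1,k}=u_k$, so the output $A_{r,r}$ computes $u_r=f^*$. This uses $1+\sum_{k=2}^r(r-k+1)=1+\binom r2$ agents and has depth $r$, matching the counts in \Cref{thm:b-3:exact}. Agent $A_{k,j}$ takes three parents: $A_{k-1,j}$ with prediction $g_{k-2,j}$, the diagonal agent $A_{k-1,k-1}$ with prediction $u_{k-1}$, and, when $k\ge3$, $A_{k-2,k-2}$ with prediction $u_{k-2}$. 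The source plays the role of $A_{1,j}$ for all $j$ at depth $2$.

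The core step is a merge lemma in the spirit of \Cref{lem:b-3:history-merge}. Let $U$ be a subspace and $a,b$ vectors, and let $a',b'$ be the components of $a,b$ orthogonal to $U$. Then $P_{U+\spn\{a,b\}}f^*$ lies in $\spn\{P_Uf^*,\,a',\,b'\}$. I would apply it with $U=U_{k-2}$, $a=x_{k-1}$ and $b=x_j$. The increment $u_{k-1}-u_{k-2}$ is a multiple of $a'$, and $g_{k-2,j}-u_{k-2}$ is a multiple of $b'$. If both multiples are nonzero, the agent's input space contains $u_{k-2}$, $a'$ and $b'$. It is also contained in $U_{k-1}+\spn\{x_j\}$, because its raw feature $x_j$ and all three parent predictions lie in that space. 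The agent's fit therefore equals $g_{k-1,j}$, because projecting onto a space sandwiched between $\spn\{u_{k-2},a',b'\}$ and $U_{k-1}+\spn\{x_j\}$ gives the same answer when the latter's projection lies in the former. Seeing $x_j$ directly is what lets us avoid needing $b'$ from the parent in some degenerate cases. The argument is to check that $\spn\{x_j,g_{k-2,j},u_{k-1},u_{k-2}\}$ still contains the needed direction.

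The main obstacle is degeneracy, meaning some increment $u_{k-1}-u_{k-2}$ or $g_{k-2,j}-u_{k-2}$ is zero. In that case the parents carry no information about $a'$ or $b'$. This is exactly the phenomenon of features that are useless alone. I would handle it through the adaptive choice of ordering and basis. At each step I would choose $x_k$ to be a feature whose addition strictly improves $u_{k-1}$ whenever such a feature exists. If none exists, I would argue that the missing direction is irrelevant, since $f^*$ then has zero component along it. Concretely, I would show by induction that whenever a coefficient vanishes, the corresponding term in the target projection $g_{k-1,j}$ also vanishes. The needed projection would then still lie in the agent's input span, using the raw feature $x_j$ to supply $b'$ when necessary. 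If this fails for a pure ordering argument, I would fall back on choosing the basis vectors among features so that each $g_{k-1,j}$ differs from $u_{k-1}$ exactly when it must. The statement $\siR{b}{d}{D}=0$ for $b\ge3$ and $D\ge d$ then follows from $r\le d$ and \Cref{prop:minimax}.
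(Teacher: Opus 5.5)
\textbf{Verdict: genuine gap.} Your graph is the paper's graph, but the way you handle the degenerate case fails, and a concrete counterexample breaks it.

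Your agent $A_{k,j}$ observes $x_j$ and has parents predicting $g_{k-2,j},u_{k-1},u_{k-2}$. With $t=k-2$, this is exactly the paper's agent with parents $p_t,p_{t+1},q_{t,i}$. Your nondegenerate case, via the merge and sandwich argument, is correct.

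The gap is the degenerate case $g_{k-2,j}=u_{k-2}$, which you flag but do not resolve. There the agent's input span is $\spn\{x_j,u_{k-2},a'\}$. It contains $g_{k-1,j}=u_{k-2}+P_{\spn\{a',b'\}}f^*$ only if that projection has no $b'$-component, or if $P_{U_{k-2}}x_j\in\spn\{u_{k-2}\}$ so that $x_j$ can supply $b'$.
\begin{itemize}
\item Your claim that the $b'$-term vanishes because $\ip{f^*}{b'}=0$ is false. Whenever $\ip{a'}{b'}\neq0$ and $\ip{f^*}{a'}\neq0$, the projection onto $\spn\{a',b'\}$ has a nonzero $b'$-coefficient.
\item Selecting any strictly improving feature does not guarantee the other alternative.
\end{itemize}

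\emph{Counterexample.} Take independent standard Gaussians $e_1,\dots,e_4$, features $x_1=e_1$, $x_2=e_2$, $x_3=e_3+e_4$, $x_4=e_1+e_4$, and $Y=e_1+e_2+e_3$.
\begin{itemize}
\item The order $x_1,x_2,x_3,x_4$ obeys your rule: each selected feature strictly improves the current prediction.
\item Yet $g_{1,4}=u_1=e_1$ and $g_{2,4}=u_2=e_1+e_2$. All earlier agents do compute their intended targets.
\item The output agent $A_{4,4}$ observes $x_4$ and receives $u_2$, $u_3=e_1+e_2+\tfrac12(e_3+e_4)$, and $g_{2,4}=u_2$.
\item Its input span $\spn\{e_1+e_4,e_1+e_2,e_3+e_4\}$ is orthogonal to $e_1-e_2+e_3-e_4$, which has inner product $1$ with $Y$. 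So the output has excess error $1/4$.
\item Here $a'=e_3+e_4$, $b'=e_4$, and $P_{U_2}x_4=e_1\notin\spn\{u_2\}$.
\end{itemize}

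The missing idea is the paper's selection rule together with an orthogonality invariant. At each step, choose the feature with the \emph{smallest nonzero} improvement $\nrm{g_{k-1,j}-u_{k-1}}^2$. Maintain the invariant that $g_{k,j}=u_k$ implies $x_j\perp U_k$.

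The invariant is preserved for two reasons.
\begin{itemize}
\item A feature that improved at step $k$ still improves afterwards. Minimality gives $\nrm{f^*-g_{k-1,j}}^2\le\nrm{f^*-u_k}^2$ for every such $j$. Also $g_{k-1,j}\neq u_k$, because their increments over $u_{k-1}$ are nonzero multiples of the linearly independent components of $x_j$ and $x_k$ orthogonal to $U_{k-1}$. Since $g_{k-1,j}\in U_k+\spn\{x_j\}$, the best predictor in that space is not $u_k$.
\item Hence a feature can only be useless now if it was useless before. Then $x_j\perp U_{k-1}$ by induction, and $\ip{u_k-u_{k-1}}{x_j}=0$ gives $x_j\perp U_k$.
\end{itemize}

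With this invariant, $x_j$ itself equals $b'$ in every degenerate case, so the inputs contain $u_{k-2},a',b'$ and your sandwich argument applies (\Cref{lem:exact:feature-update}). Your fallback of choosing the basis ``so that each $g_{k-1,j}$ differs from $u_{k-1}$ exactly when it must'' names the needed property but gives no way to achieve it.
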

We sketch the construction below. The proof and further construction details are in \Cref{app:exact:feature-update}.

If $f^*=0$, one source suffices. Otherwise, choose linearly independent features $x_1,\ldots,x_r$ spanning $H$. We construct sets $J_t\subseteq[r]$ of $t$ selected features, starting with $J_0=\varnothing$. Write
\begin{equation*}
 H_t=\spn\{x_j:j\in J_t\},\qquad
 p_t=P_{H_t}f^*,\qquad
 q_{t,i}=P_{H_t+\spn\{x_i\}}f^*\quad(i\notin J_t).
\end{equation*}
We stop as soon as a selected prediction $p_t$ equals $f^*$, using its agent as the output.

After round $t$, we will have agents predicting $p_t$ and $q_{t,i}$ for every $i\notin J_t$. Consider round $t+1$. First, compare the improvements $\nrm{q_{t,i}-p_t}^2$ and select $j\notin J_t$ with the smallest nonzero improvement. Set $J_{t+1}=J_t\cup\{j\}$ and $p_{t+1}=q_{t,j}$. This choice makes the update below exact, as proved in \Cref{app:exact:feature-update}.

The agent predicting $q_{t,j}$ already predicts $p_{t+1}$ by definition. Unless we stop, what remains is constructing the agents predicting $q_{t+1,i}$ for $i\notin J_{t+1}$. We will show that an agent observing $x_i$ and receiving $p_t,p_{t+1},q_{t,i}$ predicts $q_{t+1,i}$.

Each round adds at most $r-t$ new agents and one depth, so the total number of agents is at most $1+\sum_{t=0}^{r-1}(r-t)=1+\binom r2$ and the total depth is at most $r$. The output agent predicts $p_r=f^*$.

\subsection{Depth lower bounds}
\label{subsec:b-3:depth}

We now show that the depth bounds in the two settings are optimal up to constant factors. We use a normalized version of the Gaussian example from \citet[Theorem~5.9]{kearns2026networked}. Let $Z_0,\ldots,Z_{d-1}$ be independent standard Gaussians. Define
\begin{equation}
\label{eq:b-3:ordered-features}
 x_i=\frac{Z_{i-1}-Z_i}{\sqrt2}\quad(1\le i<d),\qquad
 x_d=\frac{Z_{d-1}}{\sqrt2},\qquad
 Y=\frac{Z_0}{\sqrt2\,d}.
\end{equation}
The features sum to give $Y=d^{-1}\sum_{i=1}^d x_i$, so $f^*=Y$ and the global coefficient norm is one. Each feature has second moment at most one. Thus the distribution is normalized.

Exact aggregation for this distribution requires a path that observes $x_1,\ldots,x_d$ in order. Only $x_1$ is correlated with the label. If the incoming predictions lie in $\spn\{x_1,\ldots,x_\ell\}$ for some $\ell<d$, every feature beyond $x_{\ell+1}$ is independent of the label and those predictions. Thus $x_{\ell+1}$ is the only feature that can extend the prediction beyond this span. This refines the propagation argument used to prove \citet[Theorem~5.9]{kearns2026networked}. The proof is given in \Cref{lem:b-3:ordered-path} in \Cref{app:b-3:ordered-path}.

\begin{restatable}{theorem}{depthlowerbounds}
\label{thm:b-3:depth-lower}
In the adaptive designer setting, for every $d\ge1$, some normalized rank-$d$ distribution requires depth $D\ge d$ for exact aggregation, even without a parent limit. In the oblivious designer setting, every fixed parent limit $b\ge2$ requires $D=\Omega(d\log d)$ for exact aggregation.
\end{restatable}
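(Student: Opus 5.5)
The plan rests on the ordered-feature distribution \eqref{eq:b-3:ordered-features} and on \Cref{lem:b-3:ordered-path}, which we take to state the following. In any network achieving exact aggregation for this distribution, the output agent has an ancestor path on which agents observing $x_1,x_2,\ldots,x_d$ appear in this order. The first claim then follows at once. Exact aggregation forces a directed path ending at the output that contains at least $d$ distinct agents. This holds because each agent observes a single feature, so $d$ distinct features require $d$ distinct agents along the path. The depth is therefore at least $d$, with no parent limit used. We must also verify that the distribution has rank $d$. This holds because the map $(Z_0,\ldots,Z_{d-1})\mapsto(x_1,\ldots,x_d)$ is triangular with nonzero diagonal, hence invertible.

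For the oblivious claim, fix $b\ge2$ and a graph, allocation, and output agent that achieve exact aggregation for every distribution. The adversary may permute the features in \eqref{eq:b-3:ordered-features}. For every permutation $\pi$ of $[d]$, the lemma then gives an ancestor path of the output whose observed features contain $x_{\pi(1)},\ldots,x_{\pi(d)}$ as a subsequence. So the DAG restricted to ancestors of the output, with its labels, must contain every permutation as a subsequence of the label sequence of some path. We then count paths. The number of ancestors with in-degree at most $b$ is not directly bounded, so we count root-to-source paths in reverse instead. Starting from the output, each step moves to one of at most $b$ parents, so there are at most $\sum_{k<D}b^k\le b^D$ paths of length at most $D$. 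A single path with label sequence $s$ of length $\le D$ can contain many permutations as subsequences, so a naive count fails. We therefore refine the bound: the number of permutations embedded in a sequence of length $D$ over $[d]$ is at most $\binom{D}{d}$, since each embedding picks $d$ positions. The requirement becomes
\begin{equation*}
 d!\le b^D\binom{D}{d}\le b^D 2^D,
\end{equation*}
which gives $D\ge \log(d!)/\log(2b)=\Omega(d\log d)$ for fixed $b$.

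The main obstacle is the exact form of \Cref{lem:b-3:ordered-path}. We need it to yield an ancestor path containing the ordered subsequence, and not merely some set of agents with that property. We expect the propagation argument to give this. The first agent whose prediction leaves $\spn\{x_1,\ldots,x_{\ell}\}$ must itself observe $x_{\ell+1}$. It must also have a parent, or be one, whose prediction already escapes $\spn\{x_1,\ldots,x_{\ell-1}\}$. Following these witnesses backward traces a single path. If the lemma is stated only for the identity order, we apply it after relabeling features, which is valid because the network is fixed while the adversary chooses the order. Another point to check is that the permuted distribution remains normalized. It does, since permutation preserves second moments and the $\ell_1$ coefficient norm.
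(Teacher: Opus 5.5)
Your proposal is correct and follows the paper's proof essentially step for step. For the adaptive claim you use the triangular rank argument and the ordered-path proposition. For the oblivious claim you use relabeling of the features, a bound of $b^{D}$ on paths ending at the output (the paper bounds source-to-output paths by $b^{D-1}$), the $\binom{D}{d}$ bound on permutations embedded in one path, and the resulting inequality $(2b)^D\ge d!$. The obstacle you flag does not arise: the paper's proposition is already stated as the existence of a directed path ending at $A_G$ that contains agents observing $x_1,\ldots,x_d$ in order.
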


\section{Two parents per agent}
\label{sec:two-parents}

We obtain the two-parent constructions by replacing each agent with three parents by a fixed gadget that uses at most two parents per agent. The gadget uses the same raw feature and reproduces the agent's prediction with only a constant increase in size and depth.

\begin{restatable}{lemma}{fixedtwoparentgadget}
\label{lem:b-2:fixed-gadget}
Consider an agent observing a raw feature $x_j$ and receiving three parent predictions $f_1,f_2,f_3$. There is a fixed gadget that reproduces this agent's prediction
\begin{equation*}
 P_{\spn\{x_j,f_1,f_2,f_3\}}Y
\end{equation*}
for every distribution with finite second moments. The gadget uses $O(1)$ agents, each observing $x_j$ and having at most two parents, and adds $O(1)$ depth above the original parents.
\end{restatable}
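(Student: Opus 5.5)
The plan is to separate the construction into a distribution-independent combining step and a distribution-dependent existence step. We then remove the dependence on the distribution by running every candidate in parallel, which is what makes the constant large.

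\emph{Reductions.} Let $W=\spn\{x_j,f_1,f_2,f_3\}$, so $\dim W\le4$. Every agent in a gadget observes $x_j$ and has ancestors among $f_1,f_2,f_3$. Hence every prediction it forms lies in $W$ and depends only on the Gram data of $(Y,x_j,f_1,f_2,f_3)$. By the argument of \Cref{lem:replace-y-fstar} applied to $W$ instead of $H$, we may replace $Y$ by $w^\circ=P_WY$, the target prediction.

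\emph{Combining step.} Call a \emph{candidate} a two-parent DAG with at most $N$ agents, for a constant $N$ fixed later. In a candidate, every agent observes $x_j$, and its parents are either earlier agents of the candidate or among $f_1,f_2,f_3$. Up to relabeling there are finitely many candidates, say $K(N)$. The gadget places all candidates side by side, each with its own copies of the agents. It feeds their outputs into a balanced binary tree whose internal agents observe $x_j$ and receive their two children.

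By \Cref{lem:least-squares-error}, error does not increase along edges, so the root has error at most the minimum error among the candidates. Every prediction lies in $W$, and $w^\circ$ is optimal in $W$. So if some candidate outputs $w^\circ$, the root's excess error over $w^\circ$ is zero, and \eqref{eq:excess-error} with $V=W$ forces the root's prediction to equal $w^\circ$. The gadget is then fixed, with $O(K(N))$ agents and depth $N+O(\log K(N))$, both constants.

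\emph{Existence step, the main obstacle.} It remains to show that for every distribution some two-parent network of size at most $N$, with $N$ independent of the distribution, outputs $w^\circ$. The tool will be the nested-projection merge behind \Cref{lem:b-3:history-merge} and the ``smallest nonzero improvement'' selection of \Cref{thm:b-3:exact}, now used with two parents.

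First, let $u_{ab}$ be the fit from $x_j$ and $f_a,f_b$, for each pair $a,b$. This is $P_{\spn\{x_j,f_a,f_b\}}w^\circ$, computed by a single agent. We then order the indices $a,b,c$ adaptively, as in \Cref{thm:b-3:exact}, choosing the ordering according to which improvements are zero or smallest. This choice is allowed because the gadget contains every candidate.

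The goal is to show that one of the following produces $w^\circ$:
\begin{itemize}
\item an agent receiving $u_{ab}$ and $f_c$;
\item an agent receiving $u_{ab}$ and $u_{ac}$;
\item a short chain alternating these two kinds of merges.
\end{itemize}
The intended mechanism is that, after the right ordering, the residual $w^\circ-u_{ab}$ becomes collinear with the residual direction available through the second parent. This is exactly the exactness condition used in the history merge.

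I expect the degenerate cases to need separate handling. These are the cases where some $f_a$ lies in $\spn\{x_j\}$, where some improvement is zero, or where $\dim W<4$. Each such case collapses to fewer effective parents and is handled by one or two agents.

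If this direct case analysis fails in some generic configuration, the fallback is as follows. The set of Gram data is semialgebraic, and for each fixed candidate, exactness is a semialgebraic condition. I would try to show that the alternating chains of bounded length cover all Gram data. Since $\dim W\le4$, the space can grow at most four times along any chain, and I would use this to bound $N$.
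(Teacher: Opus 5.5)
Your combining step is the same as the paper's: run every bounded-size two-parent candidate in parallel and merge the outputs through a tree of agents observing $x_j$. That part is fine. The gap is the existence step, which is the entire content of the lemma, and the mechanism you sketch for it fails.

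\textbf{Why the proposed merges fail.} Split off $h_0=P_{\spn\{x_j\}}Y$ and remove the $x_j$-component of each parent. The paper then uses the map $n(f)=(\nrm{g}^2/\nrm{f}^2)f-g$, where $g$ is the target. This map sends every fit into a plane orthogonal to $g$, sends $g$ to $0$, and turns a two-parent fit into $F(a,b)$, the foot of the perpendicular from $0$ to the line through $a,b$. Write $a,b,c$ for the points of the three (rescaled) parent components.
\begin{itemize}
\item Your merge of $u_{ab}$ with $u_{ac}$ is $F(F(a,b),F(a,c))$. This is never $0$ for noncollinear $a,b,c$ unless a pairwise fit was already exact. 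The two feet are perpendicular to the lines $ab$ and $ac$, so they are collinear with $0$ only if those lines are parallel.
\item Your merge of $u_{ab}$ with $f_c$ is exact only if $c$ lies on the line through $0$ and $F(a,b)$. That is a codimension-one condition.
\end{itemize}
Choosing among finitely many orderings cannot rescue conditions that each fail on an open dense set of Gram data. The tools you cite, the history merge and the smallest-nonzero-improvement update, both use a third parent ($g_{\ell,m},g_{m,r},g_{m,m}$, respectively $p_t,p_{t+1},q_{t,i}$), so they do not transfer.

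\textbf{What is actually needed.} Suppose finitely many fixed candidates covered all distributions. Then one of them would be exact on an open set of configurations. There its output is a fixed rational function of the data, so exactness becomes an algebraic identity. You must therefore exhibit a bounded sequence of $F$ operations that reaches $0$ from \emph{every} noncollinear triple. Your semialgebraic fallback only restates this requirement. The remark that the span grows at most four times is irrelevant: the inputs already span the whole space, and the difficulty is hitting the exact projection.

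\textbf{How the paper supplies the identity.} Identify the plane with $\sC$. Then $F(\eta z,\eta w)=\eta F(z,w)$ for complex $\eta$, so a fixed sequence taking a triple $T$ to $\mu T$ takes $\eta T$ to $\mu\eta T$. Composing and combining such sequences evaluates expressions in $\mu$, with separate numerator and denominator triples to allow division. The paper builds a nonreal $\mu$ with $F(1,\mu)=\mu$ and sets
\begin{equation*}
u=1/\mu,\qquad v=1/F(1,u^2),\qquad m=1/F(1,v^2).
\end{equation*}
It checks that $\Real u=\Real v=1$ and $m=(u+v)/2$. Hence $q=F(uv,m^2)$ is purely imaginary, and $F(1,q^2)=0$, all within $300$ steps. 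Only after this does the parallel-candidates argument yield a fixed gadget.
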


We sketch the proof and give the details in \Cref{app:two-parents}. We first build a replacement that may depend on the distribution, and then remove this dependence. Every agent in the gadget observes $x_j$, so the part of each prediction along $x_j$ can be split off, and we ignore $x_j$ here (\Cref{lem:b-2:three-predictions}). The goal is the three-parent agent's prediction $g=P_{\spn\{f_1,f_2,f_3\}}Y$, but each new agent can fit $Y$ from only two available predictions. Consider the map $n(f)=(\nrm{g}^2/\nrm{f}^2)f-g$ on nonzero predictions $f$. Every fit $f$ satisfies $\ip{g}{f}=\nrm{f}^2$ by \Cref{lem:least-squares-error}, so $n(f)$ is orthogonal to $g$. Thus the map sends every prediction into the two-dimensional space of vectors in $\spn\{f_1,f_2,f_3\}$ orthogonal to $g$, and $n(g)=0$. Fitting from two predictions $f_1,f_2$ gives the prediction whose image is the point closest to zero on the line through $n(f_1)$ and $n(f_2)$ (\Cref{lem:b-2:prediction-map}). So we must reach zero using only such steps.

We move to the complex plane by identifying this space with $\sC$. Multiplying all points by a complex number rotates and scales the plane about zero, so it commutes with the step. Hence if some steps turn three points $T=(T_1,T_2,T_3)$ into $\mu T=(\mu T_1, \mu T_2, \mu T_3)$, the same steps turn $\mu T$ into $\mu^2T$. Combining such steps, we can evaluate expressions in $\mu$. We construct a suitable $\mu$ and an expression in $\mu$ that equals zero. Evaluating it takes at most $300$ steps, each done by one agent, and gives zero, the image of $g$ (\Cref{lem:b-2:planar}). To remove the dependence on the distribution, the fixed gadget runs all two-parent graphs with at most $303$ agents in parallel and combines their outputs through a binary tree. Its size and depth are still constant but very large. Applying it to \Cref{thm:b-3:fixed,thm:b-3:exact} gives the same asymptotic bounds for $b=2$ in both settings (\Cref{app:b-2:bounds}).

\begin{restatable}{theorem}{twoparentbounds}
\label{thm:b-2:bounds}
With at most two parents per agent and a single-feature allocation, exact aggregation is possible for every distribution with finite second moments. In the oblivious designer setting, $O(d^2)$ agents and depth $O(d\log d)$ suffice for $d\ge2$, and one agent suffices for $d=1$. In the adaptive designer setting, $O(r^2)$ agents and depth $O(r)$ suffice for every feature rank $r\ge1$.
\end{restatable}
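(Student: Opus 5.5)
The plan is to take the three-parent constructions of \Cref{thm:b-3:fixed} and \Cref{thm:b-3:exact} and replace every agent with three parents by the gadget of \Cref{lem:b-2:fixed-gadget}, processing agents in topological order. Agents with at most two parents are left unchanged. The one-agent case $d=1$ and the case $f^*=0$ carry over directly from those theorems.

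First I will check that the replacement preserves every prediction, by induction along the topological order of the original graph. Suppose the original parents $f_1,f_2,f_3$ of an agent observing $x_j$ have already been reproduced exactly by agents of the new graph. The gadget then receives these same three predictions and, by \Cref{lem:b-2:fixed-gadget}, its final agent outputs $P_{\spn\{x_j,f_1,f_2,f_3\}}Y$. This is the original agent's prediction, for every distribution with finite second moments. The gadget's final agent then plays the role of the original agent: it sends its prediction to the original children. Its inputs are unchanged, so by the inductive hypothesis every downstream prediction is unchanged as well, including the output. Hence the new output agent predicts $f^*$.

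In the oblivious designer setting, the gadget does not depend on the distribution. The modified graph, allocation and output are therefore still fixed in advance, and exactness holds for every distribution. In the adaptive designer setting the original graph already depended on $\gD$, and the gadget adds no further dependence.

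Next I will bound size and depth. Each gadget has $c=O(1)$ agents, so the agent count grows by at most a factor of $c$. This gives $O(d^2)$ agents from the $4d^2$ bound and $O(r^2)$ agents from the $1+\binom r2$ bound.

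For depth, I will show by induction that every agent of original depth $k$ corresponds to an agent of new depth at most $c'k$, where $c'$ is the constant added depth of the gadget plus one. The gadget adds $O(1)$ depth above its parents. Along any new path, the path can therefore be partitioned into segments, one per original agent, each of length at most $c'$. Consequently the new depth is at most $c'$ times the old depth. This gives $O(d\log d)$ in the oblivious setting and $O(r)$ in the adaptive setting.

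The main subtlety is whether the gadget's agents may receive the parent predictions $f_1,f_2,f_3$ at several of its internal agents, rather than at a single entry point. That is allowed, since an agent may send its prediction to any number of children. The depth statement in \Cref{lem:b-2:fixed-gadget} is measured relative to its original parents, which is exactly what the path-segmentation argument needs. Nothing beyond bookkeeping should be hard here, because the real work is in \Cref{lem:b-2:fixed-gadget}.
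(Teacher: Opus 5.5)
Your proposal is correct and matches the paper's proof. The paper packages the same replacement, topological induction, and constant-factor size and depth bookkeeping as \Cref{lem:b-2:network-replacement}, then applies it to \Cref{thm:b-3:fixed,thm:b-3:exact}; your segmentation of each path into constant-length blocks, one per original agent, is the same as the paper's step of contracting each gadget to a single vertex.
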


\section{Lower bound on the number of agents for exact aggregation}
\label{sec:agent-size}

The constructions in \Cref{sec:two-parents,sec:three-parents} use $O(d^2)$ agents. We prove a matching lower bound in the adaptive designer setting for every fixed parent limit $b\ge2$. The adversary chooses one distribution for which every exact network needs this many agents, even when the designer knows the distribution and the depth is unrestricted.

\begin{theorem}
\label{thm:size:lower-bound}
For every $d\ge2$, the adversary can choose a distribution, normalized with $M_X=A^*=1$, with $d$ linearly independent features and $Y=f^*$ such that every single-feature DAG $G=(A,E)$ achieving exact aggregation satisfies
\begin{equation}
\label{eq:size:parent-pairs}
 \sum_{i=1}^{|A|}\binom{|\Pa(A_i)|+1}{2}\ge\binom d2.
\end{equation}
\end{theorem}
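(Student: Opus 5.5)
The plan is a genericity and dimension-counting argument. Fix $d\ge2$ and parametrize the candidate distributions by a covariance matrix $\Sigma$ of $d$ jointly Gaussian features, with $\nrm{x_i}^2=1$, and a label $Y=f^*=\sum_i w_i x_i$ with $\sum_i|w_i|=1$. The free parameters are the $\binom d2$ off-diagonal entries $\Sigma_{ij}$ together with $w$. After deleting agents with no path to the output, a network whose left side in \eqref{eq:size:parent-pairs} is at most $\binom d2-1$ has a bounded number of agents, since every agent contributes at least $1$. So only finitely many networks and allocations need to be ruled out, up to relabeling. I will show that for each such network, exactness holds only on a proper real-algebraic (hence measure-zero) subset of the parameter domain. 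A generic parameter choice then defeats all of them simultaneously, and the designer's knowledge of the distribution does not help. After this reduction, the distribution is normalized by construction.

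For a fixed network, I will first make every agent's prediction an explicit function of the parameters. By \eqref{eq:protocol} and \Cref{lem:replace-y-fstar}, an agent with $k$ parents fits $Y$ from $k+1$ inputs. Its coefficients are determined by the Gram matrix of those inputs and by their inner products with $Y$. The only new inner products it needs are the $\binom{k+1}{2}$ off-diagonal Gram entries, since the diagonal entries and the label inner products equal $\nrm{f}^2=\ip{Y}{f}$ for each fitted input by \Cref{lem:least-squares-error}. On the open set where all relevant Gram matrices are nonsingular, this makes $f_G$ a rational map of the parameters. It also gives a factorization of that map through a list of scalar quantities $g_1,\ldots,g_S$, one for each input pair at each agent. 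Here $S$ equals the left side of \eqref{eq:size:parent-pairs}, and each $g_s$ is an inner product $\ip{u}{v}$ of two inputs $u,v$ of the same agent.

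The core step, and the one I expect to be the main obstacle, is turning ``$f_G$ is determined by $S<\binom d2$ pairwise inner products'' into a contradiction with $f_G=f^*$. The approach I would try first is a tangent-space count at a generic point. Suppose exactness holds on a set with nonempty interior. Then $f_G\equiv f^*$ near a point, and I differentiate with respect to the off-diagonal entries $\Sigma_{ij}$. The goal is to show that, along the subspace of perturbations $\delta\Sigma$ that leave every $g_s$ unchanged to first order, the predictions are unchanged while the true projection $f^*$ moves. That subspace has dimension at least $\binom d2-S\ge1$. The delicate part is the claim that $f^*$ actually moves there. Since $Y$ is held as a fixed random variable while the geometry of the features changes, the coefficient vector $w=\Sigma^{-1}\E[xY]$ responds to every $\delta\Sigma_{ij}$. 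I would need to check that no nonzero such perturbation keeps $f^*$, viewed as a function of $x$, fixed. If the naive count fails because $w$ has only $d$ coordinates, I would switch to tracking the entire family of intermediate predictions jointly. One option is to design the distribution so that different pairs $\{i,j\}$ act on orthogonal ``interaction'' directions, forcing some agent to have both an input carrying $x_i$ information and an input carrying $x_j$ information. Each agent can then be charged for at most $\binom{k+1}2$ pairs.

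Once the generic exactness set is shown to have empty interior, it is a proper algebraic subset. Removing the finitely many such sets leaves a distribution that satisfies \eqref{eq:size:parent-pairs} for every exact network, which completes the proof.
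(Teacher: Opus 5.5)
\textbf{There is a genuine gap: the core step you flag as the main obstacle is not established, and the route you propose for it does not work.}

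\textbf{What matches.} Your setup reproduces the first half of the paper's argument. With $\E[xY]$ held fixed, consider the zero-diagonal symmetric perturbations $\Delta$ of $\Sigma$ that annihilate every pair of input coefficient vectors at every agent. They form a space of dimension at least $\binom d2-S$, and along any such $\Delta$ every agent's fitted coefficient vector stays exactly the same. This is \Cref{lem:size:input-count} together with the perturbation argument preceding it.

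One correction first: you must hold $c=\E[xY]$ fixed, not $w$. The raw-feature correlations $c_\ell=\ip{Y}{x_\ell}$ enter every agent's fit and are not among your $g_s$. Also, with $w$ fixed in your parametrization, $f^*=w^\top x$ does not move at all.

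\textbf{Why the proposed check fails.} You say you would need that no nonzero invariant perturbation keeps $f^*$ fixed. That is false in general. The set $\{\Delta:\Delta w^*=0\}$ has codimension at most $d$ in the $\binom d2$-dimensional space of zero-diagonal symmetric matrices. It therefore meets the invariant space nontrivially whenever $S<\binom d2-d$. For such a $\Delta$, an exact network stays exact on the entire segment of $\Sigma+t\Delta$ inside the positive definite cone: the output coefficients are unchanged and $(\Sigma+t\Delta)w^*=c$. So a count of tangent directions at an interior point cannot by itself produce a contradiction. Your fallback with ``orthogonal interaction directions'' and a charging scheme does not supply a mechanism either.

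\textbf{The missing idea is global, not infinitesimal.} The paper proceeds as follows.
\begin{enumerate}
\item It writes each agent's coefficient vector as $K_i(\Sigma)/L_i(\Sigma)$, where $K_i,L_i$ have rational coefficients and are built recursively from the adjugate and determinant of $M_i(C)=B_i(C)^\top CB_i(C)$.
\item Along an invariant line, $K$ and $L$ are constant for every real $t$, including outside the positive definite cone.
\item Choosing $\Sigma$ with algebraically independent upper-triangular entries lifts exactness $\Sigma K(\Sigma)=L(\Sigma)c$ to the polynomial identity $CK(C)=L(C)c$.
\item Cramer's rule then gives $4d\,K_1(C)\det C=L(C)\det C'$, where $C'$ is $C$ with its first column replaced by $\vone_d$. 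Since $\deg\det C'<d$ and the generic symmetric determinant is irreducible, $\det C$ divides $L(C)$.
\item Every nonzero symmetric $\Delta$ makes $\Sigma+t\Delta$ singular for some real $t$. There $L$ must vanish, contradicting $L(\Sigma+t\Delta)=L(\Sigma)>0$.
\end{enumerate}
The contradiction lives at the boundary of the cone, which is why a first-order analysis at a generic interior point cannot see it.

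\textbf{Minor points.}
\begin{itemize}
\item A source contributes $\binom12=0$ to the sum, not at least one. Finiteness still follows because the number of edges is at most $S$.
\item The set where all input Gram matrices are nonsingular can be empty, for instance when an agent observing $x_1$ has a parent observing only $x_1$. You would need a generically fixed independent subset of inputs, as the paper chooses.
\end{itemize}
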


If each agent has at most $b\ge1$ parents, then each summand in \eqref{eq:size:parent-pairs} is at most $\binom{b+1}{2}$, giving $|A|=\Omega(d^2/b^2)$. For fixed $b\ge2$, this proves an $\Omega(d^2)$ lower bound in the adaptive designer setting. The same bound holds in the oblivious designer setting, since a graph and allocation fixed in advance must also achieve exact aggregation on this distribution. Together with the constructions in \Cref{sec:two-parents,sec:three-parents}, this makes $\Theta(d^2)$ agents optimal in both settings.

We will first pick a matrix $E$ with certain properties and define the distribution using this matrix. Choose a symmetric $d\times d$ matrix $E$ with $|E_{ij}|<1/(8d)$ for all $i,j\in[d]$ such that the upper-triangular entries of $\Sigma=\frac12 I+E$ are algebraically independent over $\mathbb{Q}$: no nonzero polynomial with rational coefficients vanishes at these entries. Such a matrix exists by \Cref{lem:size:algebraic-independence}, proved in \Cref{app:size:algebraic-independence}. The algebraic independence will be used in \Cref{lem:size:exact-inputs}. Set $\Sigma=\frac12 I+E$ and $c=\vone_d/(4d)$, with $\vone_d$ the vector of all ones in $\R^d$. Define
\begin{equation}
\label{eq:size:distribution}
 x\sim N(0,\Sigma),\qquad
 w^*=\Sigma^{-1}c,\qquad
 Y={w^*}^\top x.
\end{equation}
By \Cref{lem:size:distribution}, the distribution is normalized with $M_X=A^*=1$. The definitions also give $\E[xx^\top]=\Sigma$ and $\E[xY]=\Sigma w^*=c$. Fix a graph $G=(A,E)$ and a single-feature allocation on the chosen distribution. Each fitted prediction has a unique representation $f_i=w_i^\top x$ with $w_i\in\R^d$, since the features are linearly independent by \Cref{lem:size:distribution}.
By \eqref{eq:size:fixed-correlations}, the squared error of $f_i$ is $\E[Y^2]+w_i^\top\Sigma w_i-2w_i^\top c$. Let $e_1,\ldots,e_d$ be the coordinate vectors. Agent $A_i$ observing $x_\ell$ fits a linear combination of this feature and its parents' predictions. Omitting the constant $\E[Y^2]$ therefore gives
\begin{equation}
\label{eq:size:agent-minimization}
 w_i=\argmin_{w\in\spn(\{e_\ell\}\cup\{w_j:A_j\in\Pa(A_i)\})}
 \left(w^\top\Sigma w-2w^\top c\right).
\end{equation}

We seek a nonzero symmetric matrix $\Delta$ such that replacing $\Sigma$ by $\Sigma+t\Delta$, with $c$ fixed, preserves every fitted coefficient vector. For sufficiently small $|t|$, the new matrix remains positive definite and defines a distribution as in \eqref{eq:size:distribution}. For fixed parent coefficient vectors, the objective in \eqref{eq:size:agent-minimization} changes by $t w^\top\Delta w$. Requiring $u^\top\Delta v=0$ for every pair of input coefficient vectors at each agent, including $u=v$, makes this change zero throughout each allowed span by bilinearity. Induction along the graph then preserves all fitted coefficients. The following lemma constructs such a $\Delta$ under the stated count bound, while \Cref{lem:size:exact-inputs} rules it out for an exact network on our chosen distribution.

\begin{restatable}{lemma}{sizeinputcount}
\label{lem:size:input-count}
Fix a single-feature DAG on a distribution with $d$ linearly independent features. If
\begin{math}
 \sum_{i=1}^{|A|}\binom{|\Pa(A_i)|+1}{2}<\binom d2,
\end{math}
there is a nonzero symmetric matrix $\Delta$ with zero diagonal such that
\begin{equation}
\label{eq:size:input-direction}
 u^\top\Delta v=0
\end{equation}
for every agent $A_i$ observing $x_\ell$ and all $u,v\in\{e_\ell\}\cup\{w_j:A_j\in\Pa(A_i)\}$, including $u=v$.
\end{restatable}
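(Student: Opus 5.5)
This is a dimension count: we impose linear constraints on the space of symmetric matrices with zero diagonal and show there are fewer constraints than dimensions. The space $\mathcal S_0$ of symmetric $d\times d$ matrices with zero diagonal has dimension $\binom d2$, with coordinates $\Delta_{ij}$ for $i<j$.

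For each agent $A_i$ observing $x_\ell$, let $W_i=\spn(\{e_\ell\}\cup\{w_j:A_j\in\Pa(A_i)\})$. The set $\{e_\ell\}\cup\{w_j\}$ has at most $k_i=|\Pa(A_i)|+1$ elements. For a fixed pair $(u,v)$, the condition $u^\top\Delta v=0$ is a single linear equation in $\Delta$. Since $\Delta$ is symmetric, $u^\top\Delta v=v^\top\Delta u$, so the unordered pair $\{u,v\}$ contributes only one constraint. The requirement at $A_i$, over all $u,v$ in its input set with repetition allowed, therefore amounts to at most $\binom{k_i}{2}+k_i=\binom{k_i+1}{2}$ linear equations.

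Here the count must match the statement, which uses $\binom{|\Pa(A_i)|+1}{2}$, not $\binom{|\Pa(A_i)|+2}{2}$. The diagonal pair $u=v=e_\ell$ is automatic: $e_\ell^\top\Delta e_\ell=\Delta_{\ell\ell}=0$ because $\Delta$ has zero diagonal. The remaining pairs are $\{e_\ell,w_j\}$ for each parent, $\{w_j,w_{j'}\}$ with $j\le j'$, and the diagonal pairs $(w_j,w_j)$. Let $p=|\Pa(A_i)|$. These number $p+\binom p2+p$, so in total $p+\binom{p+1}{2}=\binom{p+1}{2}+p$. That is $p$ too many.

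I would recover the claimed count by noting that the constraints are imposed as a bilinear form restricted to $W_i$. For a bilinear symmetric form on a space of dimension $k$, vanishing identically takes $\binom{k+1}{2}$ conditions. Here $\dim W_i\le p+1$, which gives $\binom{p+2}{2}$ conditions, and subtracting the automatic one from $e_\ell$ leaves $\binom{p+2}{2}-1$. This still exceeds $\binom{p+1}{2}$. So the main obstacle is identifying additional constraints that hold automatically. I expect these to be the conditions involving the parents' own vectors: each $w_j$ lies in $W_j$, where $\Delta$ already vanishes as a form, by induction along the topological order.

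More precisely, the conditions $w_j^\top\Delta w_j=0$ and $e_\ell^\top\Delta w_j$ need not all be new. The cleanest accounting charges each agent $A_i$ for the $\binom{p+1}{2}$ pairs $\{w_j,w_{j'}\}$ with $j\le j'$, which includes the diagonal pairs, plus the $p$ cross terms $e_\ell^\top\Delta w_j$. Then I would show that the diagonal terms $w_j^\top\Delta w_j$ vanish automatically because $w_j\in W_j$, where the form is already zero by induction. That leaves $\binom p2+p=\binom{p+1}{2}$ genuinely new conditions per agent. Sources have $p=0$ and contribute $0$ new conditions, since $e_\ell^\top\Delta e_\ell=0$ is automatic.

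Summing over agents, the new conditions number at most $\sum_i\binom{|\Pa(A_i)|+1}{2}<\binom d2=\dim\mathcal S_0$. The homogeneous linear system therefore has a nonzero solution $\Delta\in\mathcal S_0$. By construction this $\Delta$ satisfies every required equation $u^\top\Delta v=0$ at every agent, including the automatic ones. Linear independence of the features is used only so that the coefficient vectors $w_j$ are well defined, which makes the constraints well-defined linear equations in $\Delta$.
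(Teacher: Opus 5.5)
Your final accounting is the paper's argument. At each agent you impose only the $p$ feature--parent equations $e_\ell^\top\Delta w_j=0$ and the $\binom p2$ distinct-parent equations, obtain a nonzero zero-diagonal $\Delta$ by the dimension count, and then recover $w_j^\top\Delta w_j=0$ by induction in topological order, since $w_j$ is a linear combination of its own agent's input coefficient vectors (linear independence of the features is what makes this relation hold). The earlier exploratory counts, including the $j\le j'$ versus $j<j'$ slip, are superseded and can be dropped.
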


The proof, in \Cref{app:size:input-count}, counts equations. The matrix $\Delta$ has $\binom d2$ unknown entries, and an agent with $p$ parents imposes at most $p+\binom p2=\binom{p+1}{2}$ homogeneous linear equations on them. The next lemma, proved in \Cref{app:size:exact-inputs}, rules out a nonzero $\Delta$ for an exact network on our distribution. Together with \Cref{lem:size:input-count}, it gives \Cref{thm:size:lower-bound} (\Cref{app:size:lower-bound}).

\begin{restatable}{lemma}{sizeexactinputs}
\label{lem:size:exact-inputs}
For the distribution fixed in \Cref{eq:size:distribution}, suppose a single-feature DAG achieves exact aggregation. If a symmetric matrix $\Delta$ satisfies \eqref{eq:size:input-direction} at every agent, then $\Delta=0$.
\end{restatable}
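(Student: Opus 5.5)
The plan is a perturbation argument. By \Cref{lem:size:input-count}, any such $\Delta$ leaves every fitted coefficient vector unchanged when $\Sigma$ is replaced by $\Sigma+t\Delta$. I would combine this with algebraic independence, which turns exactness at $\Sigma$ into exactness on a whole neighbourhood of covariance matrices. I have not closed the argument: the step that upgrades $\Delta w^*=0$ to $\Delta=0$ is the main obstacle, and what follows is a route I have not verified.

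\emph{Step 1: fitted coefficients are generic rational functions.} I fix the graph and allocation, and regard $\Sigma'$ as ranging over symmetric matrices near $\Sigma$, with $c=\vone_d/(4d)$ fixed. I process the agents in topological order, choosing at each agent a maximal linearly independent subset of its input coefficient vectors. Each fitted vector $w_i$ is then obtained by solving a Gram system $\bigl(u^\top\Sigma' v\bigr)_{u,v}\alpha=\bigl(u^\top c\bigr)_u$. So $w_i(\Sigma')$ is a rational function of the entries of $\Sigma'$ with rational coefficients, on the set where these Gram minors are nonzero and where the same subsets stay maximal.

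The maximality conditions are equalities and non-equalities of polynomials in the upper-triangular entries of $\Sigma'$. Because those entries are algebraically independent over $\mathbb{Q}$, every such polynomial that vanishes at $\Sigma$ vanishes identically. Hence the combinatorial pattern at $\Sigma$ is the generic one, and it persists on a dense open neighbourhood. The exactness relation $\Sigma\,w_G(\Sigma)-c=0$ is likewise a system of rational identities. Clearing denominators, it is a polynomial relation with rational coefficients that vanishes at the algebraically independent point, so it holds identically:
\begin{equation*}
 w_G(\Sigma')=\Sigma'^{-1}c\quad\text{for all }\Sigma'\text{ near }\Sigma.
\end{equation*}

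\emph{Step 2: differentiate along $\Delta$.} The coefficient vectors are constant along $\Sigma+t\Delta$, so the derivative of $w_G$ in direction $\Delta$ is zero. More structurally, $w_G$ depends on $\Sigma'$ only through the recursively computed Gram entries $u^\top\Sigma' v$ over input pairs, and $\Delta$ annihilates all of these. The derivative of $\Sigma'^{-1}c$ in direction $\Delta$ is $-\Sigma^{-1}\Delta\Sigma^{-1}c=-\Sigma^{-1}\Delta w^*$. Comparing the two derivatives gives $\Delta w^*=0$.

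\emph{Step 3, the main obstacle: from $\Delta w^*=0$ to $\Delta=0$.} The one identity $\Delta w^*=0$ is far from enough on its own. I would try to obtain many more vector identities $\Delta w=0$, with the vectors $w$ spanning $\R^d$, and then conclude $\Delta=0$ by symmetry.

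\begin{itemize}
\item \emph{Vary $c$.} First try to show the identity of Step 1 also holds for $c'$ ranging over a neighbourhood of $c$. Rescaling features by a positive diagonal $D$ preserves every agent's span, so exactness transfers to the pair $(D\Sigma D,\,Dc)$; genericity in $\Sigma'$ then absorbs the $D\Sigma D$ part. This would yield $\Delta\Sigma^{-1}c'=0$ for an open set of $c'$, hence $\Delta=0$.
\item \emph{Catch.} $\Delta$ was built for the single point $(\Sigma,c)$, so transporting it requires the conjugation $\Delta\mapsto D\Delta D$. It is not yet clear the pieces fit together, and this is where the argument may fail.
\item \emph{Fallback.} Combine the annihilation conditions $u^\top\Delta v=0$ at every agent with the fitted normal equations $u^\top\Sigma w_i=u^\top c$, agent by agent backwards from the output. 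The aim is to show that each observed $e_\ell$ lies in the kernel of $\Delta$ on the relevant spans. Together with the zero diagonal, this would force $\Delta=0$.
\end{itemize}
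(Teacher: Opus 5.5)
\paragraph{Where the proposal stops.} Your Steps 1--2 are essentially sound, but the gap in Step 3 cannot be patched, because what Steps 1--2 extract is provably insufficient. Everything you derive concerns the fitted vector $w_G$ as a rational function near $\Sigma$, and along the line $\Sigma+t\Delta$ its only consequence is $\Delta w^*=0$. Even the all-orders version, $(\Sigma+t\Delta)^{-1}c=w^*$ wherever defined, is equivalent to $\Delta w^*=0$, since $(\Sigma+t\Delta)w^*=c+t\Delta w^*$. That identity is satisfied by a $\binom d2$-dimensional space of symmetric matrices. So no argument that sees only the ratio $w_G$ along the line can force $\Delta=0$.

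Your ``vary $c$'' route adds nothing. A diagonal rescaling is a change of coordinates, sending $\Delta\mapsto D\Delta D$ and $w^*\mapsto D^{-1}w^*$, so Step~2 just returns $D\Delta w^*=0$. You have no annihilation conditions for this $\Delta$ at any genuinely different pair $(\Sigma',c')$, as you suspected. The fallback is not yet an argument.

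\paragraph{The missing idea.} You need to keep the \emph{denominator} as a polynomial and use it \emph{globally}.
\begin{itemize}
\item \emph{Separate polynomials.} Agent by agent, with the column selections fixed once at $\Sigma$, build polynomials $K(C)$ and $L(C)$ over $\mathbb{Q}$ from the adjugates and determinants of the Gram matrices $B_i(C)^\top C B_i(C)$. Then the output vector is $K(\Sigma)/L(\Sigma)$ with $L(\Sigma)>0$.
\item \emph{Global invariance.} Your own remark that $\Delta$ annihilates every Gram entry then shows that $K$ and $L$ \emph{separately} are constant on the entire line $\Sigma+t\Delta$, $t\in\R$. This holds even where the selections stop being maximal or $\Sigma+t\Delta$ is not positive definite.
\item \emph{Polynomial identity.} Exactness and algebraic independence give $CK(C)=L(C)c$ identically. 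Replace the first column of $C$ by $CK(C)$ and expand by linearity. This yields $4d\,K_1(C)\det C=L(C)\det C'$, where $C'$ is $C$ with first column $\vone_d$.
\item \emph{Divisibility.} The generic symmetric determinant is irreducible over $\mathbb{Q}$, and $\det C'$ is a nonzero polynomial of degree at most $d-1$. Hence $\det C$ divides $L(C)$.
\item \emph{Contradiction.} If $\Delta\neq0$, take a nonzero eigenvalue $\lambda$ of $\Sigma^{-1/2}\Delta\Sigma^{-1/2}$. Then $\Sigma-\lambda^{-1}\Delta$ is singular, so $L$ vanishes there. This contradicts $L(\Sigma-\lambda^{-1}\Delta)=L(\Sigma)>0$.
\end{itemize}
That singular point is generally far from $\Sigma$. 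This is why a neighbourhood-and-derivative argument built on the ratio $w_G$ cannot reach the contradiction.
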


\subsection*{AI use statement}

We used OpenAI's GPT-6 Astra in Pro mode to help develop critical ingredients for proving mathematical claims, including key proof ideas, and to assist with proof writing. Specifically, it assisted with the construction and proof of the gadget that replaces a three-parent agent by two-parent agents in \Cref{sec:two-parents,app:two-parents}. It also assisted with the proof of the lower bound on the number of agents in \Cref{sec:agent-size,app:agent-size}. We did not use generative AI tools to develop theoretical models or conceptual frameworks, formulate mathematical claims, propose or refine hypotheses, design or assess research methods or experiments, or interpret results. Generating synthetic datasets, implementing methods, translation, cleaning or reformatting datasets, and qualitative or thematic data analysis are not applicable to this work. We verified the correctness of all AI-assisted proofs. We take responsibility for the final content of this paper, including all AI-assisted work.

\printbibliography

\clearpage
\appendix
\section{Extended Related Work}
\label{app:extended-related-work}

\paragraph{Prediction exchange and calibration.}
Two parties with different features can learn from one another by taking turns making and revising predictions. \citet{collina2026collaborative} give protocols for this task whose predictions compete with a restricted class of policies on the parties' joint feature space. The parties never share their raw features. Instead, the protocols call learning algorithms on each party's own feature space, with guarantees for both online prediction and learning from a fixed distribution. This approach builds on the agreement protocols of \citet{collina2025tractable}, which use calibration conditions to relax the assumptions of Bayesian agreement. In the classical result of \citet{aumann1976agreeing}, two agents with a common prior must agree on the probability of an event once their posterior probabilities are common knowledge. The calibration conditions support efficient prediction exchange without requiring the parties to know a common prior.

An auditor for multiaccuracy looks for a test function that correlates with a predictor's errors. Finding one gives a direction in which to correct the predictor. \citet{kim2019multiaccuracy} use this idea to post-process a given predictor until the expected product of its residual with each test is small. An indicator test measures the mean residual within a group, weighted by the group's probability. The normal equations for least squares give zero inner product between the residual and each input, whether that input is a raw feature or another learner's prediction.

\paragraph{Social learning and opinion dynamics.}
Repeated averaging is the update rule in \citet{degroot1974reaching}: each agent takes a weighted average of its neighbors' current beliefs, with the weights fixed throughout the process. When the agents reach consensus, the common belief is a weighted average of their initial beliefs. \citet{golub2010naive} study whether this consensus approaches the true state as the network grows, assuming independent noisy initial estimates. They characterize learning through the weights in the final consensus. Convergence to the true state holds precisely when the largest weight on any individual's initial estimate tends to zero.

\paragraph{Stacking and distributed learning.}
In stacked generalization, a second learner is trained to combine predictions from other models \citep{wolpert1992stacked}. Its training inputs are predictions on examples held out when fitting those models. This gives the second learner evidence about their errors on unseen data. Vertical federated learning organizes collaboration around training a shared model from feature columns held by different parties \citep{yang2019federated}. The parties have records for common examples but keep their raw data local. For example, SecureBoost trains boosted trees by exchanging encrypted gradient statistics \citep{cheng2021secureboost}. These statistics allow the parties to evaluate candidate splits using features held at different sites, and the exchanges continue as further splits and trees are added.

\section{Proofs for the preliminaries}
\label{app:preliminaries}

We restate and prove the lemmas from \Cref{sec:preliminaries}.

\leastsquareserror*
\begin{proof}
The residual $Y-f$ is orthogonal to $V$. Taking its inner product with $f\in V$ gives $\ip{Y}{f}=\ip{Y-f}{f}+\ip{f}{f}=\nrm{f}^2$. Since $f-g\in V$, the two terms in $Y-g=(Y-f)+(f-g)$ are orthogonal. The Pythagorean identity gives \eqref{eq:excess-error}.
\end{proof}

\replaceyfstar*
\begin{proof}
Let $f_i$ and $f'_i$ be the fitted predictions for the labels $Y$ and $f^*$, respectively. We will show, by induction on a topological order of the agents, that $f_i=f'_i$ for every agent $A_i$. For any agent, assume that its incoming predictions are unchanged for the two labels. Then its input space $V_i$ is unchanged for the two labels. Since $Y-f^*$ is orthogonal to $V_i$ and projection is linear, we have $f_i=P_{V_i}Y=P_{V_i}(Y-f^*)+P_{V_i}f^*=P_{V_i}f^*=f'_i$.
\end{proof}

\minimaxrelations*
\begin{proof}
Excess error is nonnegative because $f^*$ minimizes MSE over $H$. Every agent can use the zero predictor, so its fitted MSE is at most $\MSE(0)$. By \Cref{lem:least-squares-error}, its excess error is therefore at most $\nrm{f^*}^2$. The unit bounds give $\nrm{f^*}\le\sum_{i=1}^d|w_i^*|\nrm{x_i}\le1$.

For any fixed graph, allocation, and output agent, the worst error over distributions is at least the best achievable error on each distribution. Taking the supremum of the latter over distributions gives $\siR{b}{d}{D}$. Taking the infimum of the former over graphs, allocations, and output agents then proves $\siR{b}{d}{D}\le\siBR{b}{d}{D}$.

Increasing $b$ or $D$ allows more choices in each infimum and thus cannot increase either quantity. Increasing $d$ cannot decrease either quantity, since the adversary can set $x_{d+1}=x_1$ to recover the $d$-feature problem.
\end{proof}

\section{Proofs for one parent per agent}
\label{app:one-parent}

\subsection{One or two features}
\label{app:b-1:two-features}

\begin{proposition}
\label{prop:b-1:two-features}
For every distribution with finite second moments and $d\le2$ features, the designer can choose a feature allocation on a path of $d$ agents that achieves exact aggregation. For $d=2$, the fixed allocation $x_1,x_2,x_1$ on a path of three agents achieves exact aggregation for every such distribution.
\end{proposition}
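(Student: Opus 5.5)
By \Cref{lem:replace-y-fstar} we may take the label to be $f^*$. For $d=1$, one agent observing $x_1$ predicts $P_{\spn\{x_1\}}f^*=f^*$, since $f^*\in H=\spn\{x_1\}$. For $d=2$ with a designer who knows the distribution, we split into two cases. If $f^*$ lies in $\spn\{x_i\}$ for some $i$, a single agent observing that $x_i$ is already exact. Otherwise, $f^*$ has a nonzero inner product with some feature, say $x_i$, because $f^*\in H$ and $f^*\neq 0$. (If $f^*=0$, one agent gives $0$ and we are done.)

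In that second case we use the path $x_i$ then $x_k$, where $\{i,k\}=\{1,2\}$. The first agent predicts $p=P_{\spn\{x_i\}}f^*$, which is nonzero. The second agent fits over $\spn\{x_k,p\}$. Since $p$ is a nonzero multiple of $x_i$, this span equals $\spn\{x_i,x_k\}=H$, so the second prediction is $f^*$. This path has $d=2$ agents.

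For the fixed allocation $x_1,x_2,x_1$, write $p_1=P_{\spn\{x_1\}}f^*$, $p_2=P_{\spn\{x_2,p_1\}}f^*$, and $p_3=P_{\spn\{x_1,p_2\}}f^*$. We consider three cases.
\begin{itemize}
\item If $p_1\neq 0$, then $\spn\{x_2,p_1\}\supseteq\spn\{x_1,x_2\}=H$, so $p_2=f^*$. The last agent receives $f^*$ and outputs it, because error is non-increasing along edges and $f^*$ is optimal in $H$.
\item If $p_1=0$, then $f^*\perp x_1$ and $p_2=P_{\spn\{x_2\}}f^*$. If also $p_2=0$, then $f^*$ is orthogonal to both features and hence to $H$, so $f^*=0$ and every agent is exact.
\item If $p_1=0$ and $p_2\neq 0$, then $p_2$ is a nonzero multiple of $x_2$. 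The third agent therefore fits over $\spn\{x_1,x_2\}=H$ and outputs $f^*$.
\end{itemize}

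The only delicate point is a degenerate case, where $x_1$ and $x_2$ are collinear or one of them is zero. There $H$ has dimension at most one, but the same arguments still apply, because $\spn\{x_k,p\}$ with $p\neq 0$ in $H$ still equals $H$ whenever $\dim H=1$. I expect no real obstacle beyond stating these cases cleanly.
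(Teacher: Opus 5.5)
Your proof is correct and follows the paper's argument essentially step by step: you place first a feature with nonzero inner product with the label, so its prediction is a nonzero multiple of that feature and the second agent's inputs span $H$, and for the fixed path $x_1,x_2,x_1$ you split on whether $\ip{Y}{x_1}$ and $\ip{Y}{x_2}$ vanish. One cosmetic point: your single-agent subcases ($f^*\in\spn\{x_i\}$ or $f^*=0$) give a path of one agent rather than $d=2$ agents. This is harmless, because your two-agent argument already covers every $f^*\ne0$, and every path is exact when $f^*=0$. Likewise, the remark on degenerate features is unnecessary, since $\spn\{x_k,p\}=\spn\{x_i,x_k\}$ holds for any nonzero multiple $p$ of $x_i$ regardless of $\dim H$.
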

\begin{proof}
For $d=1$, a single agent sees the full feature space and predicts $f^*$. For $d=2$, if both $\ip{Y}{x_1}$ and $\ip{Y}{x_2}$ are zero, then $f^*=0$ and every agent predicts zero by \Cref{lem:replace-y-fstar}. Otherwise, choose $i\in\{1,2\}$ with $\ip{Y}{x_i}\ne0$ and give $x_i$ to the first agent. Its prediction is a nonzero multiple of $x_i$. Give the other feature to the second agent. Its inputs span $H$, so it predicts $f^*$.

Now fix the allocation $x_1,x_2,x_1$. If $\ip{Y}{x_1}\ne0$, the first prediction is a nonzero multiple of $x_1$ and the second agent predicts $f^*$. The third agent also predicts $f^*$, because its inputs include $f^*$ and lie in $H$. If $\ip{Y}{x_1}=0$ but $\ip{Y}{x_2}\ne0$, the first prediction is zero and the second is a nonzero multiple of $x_2$. The third agent then has inputs spanning $H$ and predicts $f^*$. If both inner products vanish, every prediction and $f^*$ are zero.
\end{proof}

\subsection{The lower bound}
\label{app:b-1:lower-bound}

Use the distribution in \eqref{eq:b-1:lower-bound-distribution}, with $\rho^2=1/(40D)$. Together, the features determine the label: $f^*=(2x_1-x_2-x_3)/3=Y$. The sum of the absolute values of these coefficients is $4/3$. The feature second moments are $1+\rho^2$, $1+\rho^2$, and $1+2\rho^2$, all at most $2$. Thus the distribution satisfies constant bounds on the feature moments and global coefficients. Since $f^*=Y$, an agent's MSE is also its excess error.

We first prove the two estimates used to bound error along the path, then prove \Cref{thm:b-1:lower-bound}.

\begin{lemma}
\label{lem:b-1:lower-bound-two-features}
For the distribution in \eqref{eq:b-1:lower-bound-distribution}, let $f$ be the best linear predictor from any two distinct features. Then $\MSE(f)\ge\rho^2/6$ and $\nrm{f}^2\ge\rho^2/5$.
\end{lemma}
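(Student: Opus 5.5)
The plan is to reduce both inequalities to a two-sided bound on the distance from $Y$ to the span of the chosen pair, and then compute that distance explicitly for each of the three pairs.

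First, by \Cref{lem:least-squares-error}, the fit $f$ from two features satisfies $\ip{Y}{f}=\nrm{f}^2$. Hence $\MSE(f)=\nrm{Y}^2-\nrm{f}^2=\rho^2-\nrm{f}^2$. The claim $\nrm{f}^2\ge\rho^2/5$ is therefore equivalent to $\MSE(f)\le 4\rho^2/5$. It suffices to show
\begin{equation*}
 \frac{\rho^2}{6}\le \MSE(f)\le\frac{4\rho^2}{5}
\end{equation*}
for each pair $\{x_a,x_b\}$.

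Second, the independent standard Gaussians $Z,U,V$ are orthonormal in $L^2(\gD)$, so I will identify them with the standard basis of $\R^3$. Then
\begin{equation*}
 x_1=(1,\rho,0),\qquad x_2=(1,0,\rho),\qquad x_3=(1,-\rho,-\rho),\qquad Y=(0,\rho,0),
\end{equation*}
and $\MSE(f)$ is the squared distance from $Y$ to the plane $\spn\{x_a,x_b\}$. Any two of these features are linearly independent, so this plane is genuinely two-dimensional. That squared distance equals $\det(Y,x_a,x_b)^2/\nrm{x_a\times x_b}^2$. Carrying out the determinant and cross-product computations gives:
\begin{itemize}
 \item for $\{x_1,x_2\}$: $\det=-\rho^2$ and $\nrm{x_1\times x_2}^2=\rho^2(2+\rho^2)$, so $\MSE=\rho^2/(2+\rho^2)$;
 \item for $\{x_1,x_3\}$: $\det=\rho^2$ and $\nrm{x_1\times x_3}^2=\rho^2(5+\rho^2)$, so $\MSE=\rho^2/(5+\rho^2)$;
 \item for $\{x_2,x_3\}$: $\det=2\rho^2$ and $\nrm{x_2\times x_3}^2=\rho^2(5+\rho^2)$, so $\MSE=4\rho^2/(5+\rho^2)$.
\end{itemize}

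Finally, I use $\rho^2=1/(40D)\le1/40\le1$. The smallest of the three values is $\rho^2/(5+\rho^2)$, which is at least $\rho^2/6$ because $\rho^2\le1$. The largest is $4\rho^2/(5+\rho^2)$, which is at most $4\rho^2/5$. The remaining value $\rho^2/(2+\rho^2)$ is at most $\rho^2/2\le 4\rho^2/5$. This gives both bounds in the lemma.

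There is no real obstacle here; the only care needed is in the three small $3\times3$ determinant and cross-product computations, which I would double-check. As an alternative check, I could solve the $2\times2$ normal equations directly, using the Gram entries $\ip{x_a}{x_b}$ and $\ip{Y}{x_a}$.
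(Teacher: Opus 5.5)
Your proposal is correct and follows essentially the same route as the paper. For each pair, the paper computes the squared distance from $Y$ to the span as $\ip{Y}{\xi}^2/\nrm{\xi}^2$ for an explicit normal vector $\xi$, which is a rescaling of your cross product $x_a\times x_b$. It obtains the same three values $\rho^2/(2+\rho^2)$, $\rho^2/(5+\rho^2)$, $4\rho^2/(5+\rho^2)$, and then uses $\nrm{f}^2=\nrm{Y}^2-\MSE(f)$ to convert the upper bound $4\rho^2/5$ into $\nrm{f}^2\ge\rho^2/5$.
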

\begin{proof}
We compute the error for each pair by projecting $Y$ onto the direction orthogonal to its span. We then use the error to bound the norm of the fit.

The variables $U,V,Z$ are orthonormal in $L^2(\gD)$ because they are independent standard Gaussians. Each pair of features is linearly independent: both features have $Z$ coefficient one, so they could be proportional only if they were equal, but their $U,V$ coefficients differ. Thus the orthogonal complement of each pair's span within $\spn\{U,V,Z\}$ is a line. Since $Y$ also lies in this three-dimensional space, the residual $Y-f$ is its projection onto that line. For a nonzero vector $\xi$ on the line, this gives
\begin{equation*}
 Y-f=\frac{\ip{Y}{\xi}}{\nrm{\xi}^2}\xi,
 \qquad
 \MSE(f)=\frac{\ip{Y}{\xi}^2}{\nrm{\xi}^2}.
\end{equation*}

For the pair $\{x_1,x_2\}$, take $\xi=U+V-\rho Z$. Using $x_1=Z+\rho U$ and $x_2=Z+\rho V$, we have
\begin{equation*}
 \ip{\xi}{x_1}=\rho-\rho=0,
 \qquad
 \ip{\xi}{x_2}=\rho-\rho=0.
\end{equation*}
Since $Y=\rho U$, we also have $\ip{Y}{\xi}=\rho$ and $\nrm{\xi}^2=1+1+\rho^2=2+\rho^2$. The error for this pair is therefore
\begin{equation*}
 \MSE(f)=\frac{\rho^2}{2+\rho^2}.
\end{equation*}

For the pair $\{x_1,x_3\}$, take $\xi=U-2V-\rho Z$. Using $x_3=Z-\rho U-\rho V$, we get
\begin{equation*}
 \ip{\xi}{x_1}=\rho-\rho=0,
 \qquad
 \ip{\xi}{x_3}=-\rho+2\rho-\rho=0.
\end{equation*}
Here $\ip{Y}{\xi}=\rho$ and $\nrm{\xi}^2=1+4+\rho^2=5+\rho^2$, so the error is
\begin{equation*}
 \MSE(f)=\frac{\rho^2}{5+\rho^2}.
\end{equation*}

For the pair $\{x_2,x_3\}$, take $\xi=2U-V+\rho Z$. In this case,
\begin{equation*}
 \ip{\xi}{x_2}=-\rho+\rho=0,
 \qquad
 \ip{\xi}{x_3}=-2\rho+\rho+\rho=0.
\end{equation*}
Now $\ip{Y}{\xi}=2\rho$ and $\nrm{\xi}^2=4+1+\rho^2=5+\rho^2$, giving
\begin{equation*}
 \MSE(f)=\frac{(2\rho)^2}{5+\rho^2}=\frac{4\rho^2}{5+\rho^2}.
\end{equation*}

Since $\rho^2\le1/40$, each denominator is at most $6$ and each numerator is at least $\rho^2$, so every pair has error at least $\rho^2/6$. The three errors are at most $\rho^2/2$, $\rho^2/5$, and $4\rho^2/5$, respectively, so every pair also has error at most $4\rho^2/5$.

Finally, the fit $f$ and its residual $Y-f$ are orthogonal, so $\nrm{Y}^2=\nrm{f}^2+\MSE(f)$. The upper bound on the error therefore gives
\begin{equation*}
 \nrm{f}^2=\nrm{Y}^2-\MSE(f)
 \ge\rho^2-\frac{4\rho^2}{5}
 =\frac{\rho^2}{5}. \qedhere
\end{equation*}
\end{proof}

\begin{lemma}
\label{lem:b-1:lower-bound-step}
For consecutive agents $A_t,A_{t+1}$ on a single-feature path for \eqref{eq:b-1:lower-bound-distribution}, if $\nrm{f_t}^2\ge\rho^2/5$, then
\begin{equation*}
 \MSE(f_{t+1})\ge(1-10\rho^2)\MSE(f_t).
\end{equation*}
\end{lemma}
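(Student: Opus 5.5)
The plan is to compute the one-step improvement exactly and then bound it in coordinates $(U,V,Z)$. Agent $A_{t+1}$ observes some $x_\ell$ and receives $f_t$, so $f_{t+1}=P_{\spn\{f_t,x_\ell\}}Y$. Let $r=Y-f_t$, so that $\MSE(f_t)=\nrm{r}^2$ because $f^*=Y$. Let
\begin{equation*}
\tilde x=x_\ell-\frac{\ip{x_\ell}{f_t}}{\nrm{f_t}^2}f_t,
\end{equation*}
which is well defined since $\nrm{f_t}^2\ge\rho^2/5>0$. Projection onto $\spn\{f_t,x_\ell\}$ splits orthogonally as projection onto $\spn\{f_t\}$ plus projection onto $\spn\{\tilde x\}$. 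By \Cref{lem:least-squares-error} applied to the edge $(A_t,A_{t+1})$,
\begin{equation*}
\MSE(f_t)-\MSE(f_{t+1})=\nrm{f_{t+1}-f_t}^2=\frac{\ip{r}{\tilde x}^2}{\nrm{\tilde x}^2}=\frac{\ip{r}{x_\ell}^2}{\nrm{\tilde x}^2},
\end{equation*}
where the last equality uses $r\perp f_t$. This holds when $f_t$ is itself a fit from $f_{t-1}$ and some feature. The case $t=1$ uses the same identity $\ip{Y}{f_t}=\nrm{f_t}^2$. It therefore suffices to prove
\begin{equation*}
\ip{r}{x_\ell}^2\le 10\rho^2\,\nrm{r}^2\,\nrm{\tilde x}^2 .
\end{equation*}

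Next I would parametrize. Every prediction lies in $\spn\{U,V,Z\}$, so write $f_t=cZ+\rho w$ with $w\in\spn\{U,V\}$, and $x_\ell=Z+\rho u_\ell$ with $u_\ell\in\{U,V,-(U+V)\}$. The fit identity $\ip{Y}{f_t}=\nrm{f_t}^2$ reads $\rho^2\ip{U}{w}=c^2+\rho^2\nrm{w}^2$. This gives $\nrm{w}\le1$, $c^2\le\rho^2\nrm{w}$ and $\nrm{f_t}\le\rho$. The hypothesis $\nrm{f_t}^2\ge\rho^2/5$ then keeps $\ip{U}{w}$ bounded below by $1/5$. Moreover $r=-cZ+\rho(U-w)$. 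Hence $\ip{r}{x_\ell}=-c+\rho^2\ip{U-w}{u_\ell}$ and $\nrm{r}^2=c^2+\rho^2\nrm{U-w}^2$. For $\nrm{\tilde x}^2=\nrm{x_\ell}^2-\ip{x_\ell}{f_t}^2/\nrm{f_t}^2$ we have $\ip{x_\ell}{f_t}=c+\rho^2\ip{u_\ell}{w}$.

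The remaining task is an inequality among these explicit quantities, split into two regimes.

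\textbf{Regime 1: $c^2$ is small compared with $\rho^2\nrm{f_t}^2$.} Here $\ip{x_\ell}{f_t}^2/\nrm{f_t}^2$ stays bounded away from $1$, so $\nrm{\tilde x}^2$ is of constant order. Then $\ip{r}{x_\ell}^2\lesssim c^2+\rho^4\nrm{U-w}^2\nrm{u_\ell}^2$. This is at most a constant times $\rho^2\nrm{r}^2$, using $c^2\le\nrm{r}^2$ and $\rho^2\nrm{U-w}^2\le\nrm{r}^2$, together with $\rho^2\le1/40$.

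\textbf{Regime 2: $f_t$ is nearly parallel to $Z$.} Here $\tilde x$ can shrink to order $\rho$, because the $Z$-part of $x_\ell$ is removed. Then I must show that $\ip{r}{x_\ell}$ is correspondingly of order $\rho\nrm{r}\nrm{\tilde x}$. To do so I would compute $\nrm{\tilde x}^2$ exactly, as a Gram determinant of $\{x_\ell,f_t\}$ divided by $\nrm{f_t}^2$. I would then compare it with the numerator through Cauchy--Schwarz in the $(Z,U,V)$ coordinates. The lower bound $\ip{U}{w}\ge1/5$ enters here: it prevents $\tilde x$ from being almost orthogonal to $r$ while still carrying its correlation.

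I expect this second regime to be the main obstacle: getting the explicit constant $10$ rather than just some absolute constant. If the direct bound proves messy, I would fall back on a cleaner route. Since $r\perp f_t$ and $\ip{r}{\tilde x}=\ip{r}{x_\ell}$, Cauchy--Schwarz gives $\ip{r}{x_\ell}^2\le\nrm{r}^2\nrm{P_{f_t^\perp\cap\spn\{U,V,Z\}}\,x_\ell}^2$, which is a weaker bound. I would then bound the ratio $\ip{r}{x_\ell}^2/\nrm{\tilde x}^2$ by maximizing over $c$ and $w$ subject to the two constraints above, a two-parameter optimization that can be checked by hand.
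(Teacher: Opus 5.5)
Your reduction to bounding $\ip{r}{x_\ell}^2/\nrm{\tilde x}^2$ is correct and matches the paper. After that step, however, the plan has a genuine gap. The only information you keep about $f_t$ is $\ip{Y}{f_t}=\nrm{f_t}^2$ and $\nrm{f_t}^2\ge\rho^2/5$, and under these two constraints the inequality is false.

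Take $c=2\rho/5$ and $w=U/5$, i.e.\ $f_t=\frac{\rho}{5}(2Z+U)$. Then $\ip{Y}{f_t}=\nrm{f_t}^2=\rho^2/5$ and $\MSE(f_t)=4\rho^2/5$. Yet $\frac{5}{\rho}f_t-2x_1=(1-2\rho)U$, so $Y\in\spn\{f_t,x_1\}$, and an agent observing $x_1$ reaches $\MSE(f_{t+1})=0$. Your two-parameter optimization over these constraints therefore has value equal to the whole error, not a $10\rho^2$ fraction of it. Regime~2, where the $Z$-coefficient $c$ is of order $\rho$, is exactly where this happens, so the difficulty is not a matter of constants.

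Regime~1 has the same problem. The bound $c^2\le\nrm{r}^2$ does not give $c^2\lesssim\rho^2\nrm{r}^2$. For example, take $w=(1-\delta)U$ and $c^2=\rho^2\delta(1-\delta)$ with $\delta\ll\rho^2$. This lies in Regime~1, yet observing $x_2$ removes a $(1-\delta)/(1+\rho^2-\delta)$ fraction of the error.

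The missing fact is that $A_t$ itself observes a raw feature $x_i$, so $r=Y-f_t\perp x_i$. The example $f_t$ above has a residual orthogonal to none of $x_1,x_2,x_3$, so it never arises on an actual path. In your coordinates, this orthogonality gives $c=\rho^2\ip{U-w}{u_i}=O(\rho^2)$, which rules out Regime~2 entirely.

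The paper uses this orthogonality twice and needs no case split:
\begin{itemize}
\item For the numerator, $\ip{r}{x_\ell}=\ip{r}{x_\ell-x_i}$, and $\nrm{x_\ell-x_i}^2\le5\rho^2$, so Cauchy--Schwarz bounds it by $5\rho^2\nrm{r}^2$.
\item For the denominator, $|\ip{f_t}{x_i}|=|\ip{Y}{x_i}|\le\rho^2$. Together with $\nrm{f_t}\ge\rho/\sqrt5$, this gives $|\ip{f_t}{x_\ell}|/\nrm{f_t}\le2\sqrt5\rho$, hence $\nrm{\tilde x}^2\ge1-20\rho^2\ge1/2$.
\end{itemize}
The factor $10\rho^2$ then follows directly.
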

\begin{proof}
We show that the next agent can remove at most a $10\rho^2$ fraction of its parent's error. Let $x_i$ and $x_j$ be the features observed by $A_t$ and $A_{t+1}$, respectively, and write $e=Y-f_t$ for the parent's residual. Since $f_t$ is the fit from the parent's inputs, $e$ is orthogonal to their span. In particular, $e\perp x_i$ and $e\perp f_t$.

The next agent receives $f_t$ and observes $x_j$, so it can improve on $f_t$ only through the part of $x_j$ orthogonal to $f_t$. The hypothesis $\nrm{f_t}\ge\rho/\sqrt5>0$ lets us define this part as
\begin{equation*}
 z=x_j-\frac{\ip{x_j}{f_t}}{\nrm{f_t}^2}f_t.
\end{equation*}
Thus $z\perp f_t$ and $\spn\{f_t,x_j\}=\spn\{f_t,z\}$. If $z=0$, the next agent's input span is just $\spn\{f_t\}$. Since $e\perp f_t$, its fit remains $f_t$, which proves the claim in this case.

Suppose now that $z\ne0$. The projection of $Y$ onto the line spanned by $f_t$ is $f_t$, because $Y=f_t+e$ and $e\perp f_t$. Its projection onto the orthogonal line spanned by $z$ is $\ip{e}{z}z/\nrm{z}^2$. The next agent therefore predicts
\begin{equation*}
 f_{t+1}=f_t+\frac{\ip{e}{z}}{\nrm{z}^2}z.
\end{equation*}
Since $f_t$ belongs to the next agent's input span, \Cref{lem:least-squares-error} gives the exact decrease in error:
\begin{equation*}
 \MSE(f_t)-\MSE(f_{t+1})
 =\nrm{f_{t+1}-f_t}^2
 =\frac{|\ip{e}{z}|^2}{\nrm{z}^2}.
\end{equation*}
It remains to bound the numerator by $5\rho^2\nrm{e}^2$ and the denominator from below by $1/2$.

For the numerator, we use the fact that the features are close together. Subtracting any two features in \eqref{eq:b-1:lower-bound-distribution} cancels their common term $Z$. Since $U,V$ are orthonormal, the three squared distances are
\begin{align*}
 \nrm{x_1-x_2}^2&=\rho^2\nrm{U-V}^2=2\rho^2,\\
 \nrm{x_1-x_3}^2&=\rho^2\nrm{2U+V}^2=5\rho^2,\\
 \nrm{x_2-x_3}^2&=\rho^2\nrm{U+2V}^2=5\rho^2.
\end{align*}
Thus $\nrm{x_j-x_i}^2\le5\rho^2$, also when $i=j$. Since $z$ differs from $x_j$ by a multiple of $f_t$ and $e\perp f_t$, we have $\ip{e}{z}=\ip{e}{x_j}$. Using $e\perp x_i$ and then Cauchy--Schwarz gives
\begin{align*}
 |\ip{e}{z}|^2
 &=|\ip{e}{x_j-x_i}|^2\\
 &\le\nrm{e}^2\nrm{x_j-x_i}^2
 \le5\rho^2\nrm{e}^2.
\end{align*}

For the denominator, we show that removing the component along $f_t$ leaves most of the squared norm of $x_j$. Since $Y=\rho U$ and $U,V,Z$ are orthonormal,
\begin{equation*}
 \ip{Y}{x_1}=\rho^2,\qquad
 \ip{Y}{x_2}=0,\qquad
 \ip{Y}{x_3}=-\rho^2.
\end{equation*}
Thus $|\ip{Y}{x_i}|\le\rho^2$ for every feature. Since $e=Y-f_t$ is orthogonal to $x_i$, this also gives $|\ip{f_t}{x_i}|=|\ip{Y}{x_i}|\le\rho^2$.

Writing $x_j=x_i+(x_j-x_i)$, the triangle inequality followed by Cauchy--Schwarz gives
\begin{align*}
 |\ip{f_t}{x_j}|
 &\le |\ip{f_t}{x_i}|+|\ip{f_t}{x_j-x_i}|\\
 &\le |\ip{f_t}{x_i}|+\nrm{f_t}\nrm{x_j-x_i}.
\end{align*}
Dividing by $\nrm{f_t}$ and using the hypothesis $\nrm{f_t}\ge\rho/\sqrt5$, together with the bounds just proved, we obtain
\begin{align*}
 \frac{|\ip{f_t}{x_j}|}{\nrm{f_t}}
 &\le\frac{|\ip{f_t}{x_i}|}{\nrm{f_t}}+\nrm{x_j-x_i}\\
 &\le\frac{\rho^2}{\rho/\sqrt5}+\sqrt5\rho
 =2\sqrt5\rho.
\end{align*}
The left-hand side is the norm of the component removed from $x_j$ to obtain $z$. The three features have squared norms $1+\rho^2$, $1+\rho^2$, and $1+2\rho^2$, so $\nrm{x_j}^2\ge1$. Since the removed component is orthogonal to $z$, Pythagoras gives
\begin{equation*}
 \nrm{z}^2
 =\nrm{x_j}^2-\frac{|\ip{x_j}{f_t}|^2}{\nrm{f_t}^2}
 \ge1-20\rho^2
 \ge\frac12.
\end{equation*}
The last inequality uses $\rho^2=1/(40D)\le1/40$.

Substituting the numerator and denominator bounds into the error decrease formula, and using $\nrm{e}^2=\MSE(f_t)$, we conclude that
\begin{equation*}
 \MSE(f_t)-\MSE(f_{t+1})
 \le\frac{5\rho^2\nrm{e}^2}{1/2}
 =10\rho^2\MSE(f_t).
\end{equation*}
Rearranging proves the claim.
\end{proof}

\begin{theorem}
\label{thm:b-1:lower-bound}
For every integer $D\ge1$, the distribution in \eqref{eq:b-1:lower-bound-distribution} satisfies
\begin{equation*}
 \MSE(f_G)-\MSE(f^*)\ge\frac1{320D}
\end{equation*}
for every graph $G$ with $\Delta^-(G)\le1$ and $\depth(A_G)\le D$, every single-feature allocation, and every output agent $A_G$.
\end{theorem}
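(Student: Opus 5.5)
With $\Delta^-(G)\le1$, I first delete agents with no path to $A_G$. What remains is a path $A_1,\ldots,A_n$ with $n\le D$ ending at $A_G=A_n$. Since $f^*=Y$, each agent's MSE equals its excess error, so it suffices to show $\MSE(f_n)\ge 1/(320D)$. Errors are non-increasing along the path by \Cref{lem:least-squares-error}. The plan has three steps: locate the first agent whose prediction is large, show its error is already $\Omega(\rho^2)$, and then use \Cref{lem:b-1:lower-bound-step} to show later agents remove only a small fraction.

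\textbf{The first large prediction.} Let $s$ be the first index with $\nrm{f_s}^2\ge\rho^2/5$. If no such $s$ exists, then $\MSE(f_n)=\nrm{Y}^2-\nrm{f_n}^2\ge\rho^2-\rho^2/5=4\rho^2/5$, using $\nrm{Y}^2=\rho^2$ and orthogonality of the fit and its residual. This is at least $1/(320D)$, so we are done in that case.

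Otherwise, $f_s$ lies in $\spn\{x_{a(A_s)},f_{s-1}\}$. I claim it lies in the span of at most two distinct features, so that \Cref{lem:b-1:lower-bound-two-features} gives $\MSE(f_s)\ge\rho^2/6$. If $s=1$ this is immediate. If $s>1$, the claim fails as stated, because $f_{s-1}$ can involve several features. I handle this instead by bounding the error directly. The error only drops by $\nrm{f_s-f_{s-1}}^2$, so
\begin{equation*}
 \MSE(f_s)=\nrm{Y}^2-\nrm{f_s}^2 .
\end{equation*}
It is therefore enough to show that $\nrm{f_s}^2$ cannot jump far above $\rho^2/5$ in one step. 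The gain from adding $x_j$ to a parent $f_{s-1}$ is $|\ip{e}{z}|^2/\nrm{z}^2$, where $e=Y-f_{s-1}$ and $z$ is the part of $x_j$ orthogonal to $f_{s-1}$. Here $\ip{e}{z}=\ip{e}{x_j}$, and $|\ip{Y}{x_j}|\le\rho^2$. I then need $\nrm{z}^2\ge 1/2$. For this I repeat the computation in \Cref{lem:b-1:lower-bound-step}, with the smallness of $\nrm{f_{s-1}}$ (it is below $\rho/\sqrt5$) replacing the hypothesis there. This bounds the one-step gain by $O(\rho^2)$, but with a constant that may not leave room below $\rho^2$.

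The cleaner route is to pass to two-feature predictors. Each $f_t$ with $\nrm{f_t}^2<\rho^2/5$ has error at least $4\rho^2/5$. So $f_s$ is the fit of $Y$ from $\spn\{x_j,f_{s-1}\}$, and that fit is no better than the fit from $\spn\{x_j\}+\spn\{f_{s-1}\}$. I would rather bound $\MSE(f_s)\ge\rho^2/6$ via a Gram-determinant computation in the three-dimensional space $\spn\{U,V,Z\}$. Any two-dimensional subspace $\spn\{x_j,g\}$ has residual equal to $Y$ projected onto a normal vector $\xi$. Here $\xi\perp x_j$ forces $\xi$ into a specific plane, and the constraint that $g$ has small norm and error $\ge4\rho^2/5$ pins down its structure. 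This step is the main obstacle. I would try first to show directly that the normal $\xi$ of any plane containing some $x_j$ makes $\ip{Y}{\xi}^2/\nrm{\xi}^2\ge c\rho^2$ whenever the plane's fit has norm only slightly above $\rho^2/5$. Failing that, I would weaken the threshold in the definition of $s$ to adjust constants.

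\textbf{Propagating the error.} Given $\MSE(f_s)\ge c\rho^2$ and $\nrm{f_t}^2\ge\nrm{f_s}^2\ge\rho^2/5$ for all $t\ge s$ (norms of fits are nondecreasing along edges), I apply \Cref{lem:b-1:lower-bound-step} $n-s\le D$ times. Bernoulli's inequality gives
\begin{equation*}
 \MSE(f_n)\ge(1-10\rho^2)^D c\rho^2\ge(1-10D\rho^2)c\rho^2=\tfrac34 c\rho^2 ,
\end{equation*}
since $10D\rho^2=1/4$. With $c=1/6$ and $\rho^2=1/(40D)$ this yields $1/(320D)$.
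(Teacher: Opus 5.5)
\textbf{There is a genuine gap.} It is at the step you flag yourself: you never show $\MSE(f_s)\ge c\rho^2$ at the first index with $\nrm{f_s}^2\ge\rho^2/5$, and the routes you propose cannot show it.

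Both routes use only that $f_{s-1}$ is a fit of small norm, and that information is insufficient. For small $\alpha\ne0$, the vector $g=P_{\spn\{x_2+\alpha Y\}}Y$ satisfies $\ip{Y}{g}=\nrm{g}^2=\alpha^2\rho^4/\nrm{x_2+\alpha Y}^2$, since $\ip{Y}{x_2}=0$. Yet $\spn\{x_2,g\}=\spn\{x_2,Y\}$, so the fit from $x_2$ and $g$ is $Y$, with zero error.

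In the one-step-gain route specifically, the bound $\nrm{z}^2\ge1/2$ does not follow from smallness of $\nrm{f_{s-1}}$. The vector $z$ depends only on the direction of $f_{s-1}$, and in \Cref{lem:b-1:lower-bound-step} that bound came from a \emph{lower} bound on $\nrm{f_t}$. For $f_{s-1}\propto x_1$ and $x_j=x_2$ one gets $\nrm{z}^2=(2\rho^2+\rho^4)/(1+\rho^2)$, so the gain at this step really is of order $\rho^2$.

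The Gram-determinant route conditions on the new fit having norm only slightly above $\rho^2/5$, which is exactly what needs proof, so it is circular. Weakening the threshold does not help either, because the obstruction is not a matter of constants.

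\textbf{The missing idea is structural.} It also shows that your worry that $f_{s-1}$ can involve several features is unfounded. Along the path:
\begin{itemize}
\item An agent observing $x_2$ with no parent, or a zero parent, predicts $0$, because $\ip{Y}{x_2}=0$.
\item The first agent observing some $x_i\in\{x_1,x_3\}$ therefore predicts $\frac{\ip{Y}{x_i}}{\nrm{x_i}^2}x_i$. This has squared norm at most $\rho^4<\rho^2/5$, and agents that repeat $x_i$ keep this prediction.
\item The first later agent observing a different feature $x_j$ has input span exactly $\spn\{x_i,x_j\}$. By \Cref{lem:b-1:lower-bound-two-features}, its fit has $\MSE\ge\rho^2/6$ and squared norm at least $\rho^2/5$.
\end{itemize}
So your index $s$ is precisely this agent (in particular $s=1$ never occurs), and $f_{s-1}$ is zero or a multiple of a single feature. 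The bound $\MSE(f_s)\ge\rho^2/6$ then follows directly from the two-feature lemma.

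With this observation inserted, your no-$s$ case and your propagation via \Cref{lem:b-1:lower-bound-step} and Bernoulli's inequality go through. Together they match the paper's proof and give $\rho^2/8=1/(320D)$.
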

\begin{proof}
Since every agent has at most one parent, tracing parents backward from the output gives a single path. All other agents can be removed because their predictions do not affect the output. Write the remaining agents as $A_1,\ldots,A_n$, with $A_n=A_G$. The depth bound gives $n\le D$. Since $f^*=Y$, it suffices to show that $\MSE(f_n)\ge1/(320D)$.

We first handle paths that never combine a nonzero prediction with a different raw feature. For the other paths, \Cref{lem:b-1:lower-bound-two-features} will give an initial error bound, and \Cref{lem:b-1:lower-bound-step} will bound the improvement at each later agent.

An agent observing $x_2$ with no parent or a zero parent prediction has input span $\spn\{x_2\}$. Its fit is zero because $\ip{Y}{x_2}=0$. By induction along the path, all predictions are therefore zero until an agent observes $x_1$ or $x_3$. If neither feature appears, the output error is $\MSE(f_n)=\nrm{Y}^2=\rho^2=1/(40D)\ge1/(320D)$.

Otherwise, let $A_s$ be the first agent observing a feature $x_i\in\{x_1,x_3\}$. Its parent, if present, predicts zero, so
\begin{equation*}
 f_s=\frac{\ip{Y}{x_i}}{\nrm{x_i}^2}x_i\ne0.
\end{equation*}
The prediction is nonzero because $\ip{Y}{x_1}=\rho^2$ and $\ip{Y}{x_3}=-\rho^2$. If the next agent also observes $x_i$, its inputs still span $\spn\{x_i\}$, so its fit is again $f_s$. This remains true for as long as the path repeats $x_i$.

If every agent after $A_s$ observes $x_i$, then $f_n=f_s$. The formula for $f_s$ gives
\begin{equation*}
 \nrm{f_s}^2
 =\frac{|\ip{Y}{x_i}|^2}{\nrm{x_i}^2}
 =\frac{\rho^4}{\nrm{x_i}^2}
 \le\rho^4,
\end{equation*}
since $\nrm{x_1}^2=1+\rho^2$ and $\nrm{x_3}^2=1+2\rho^2$ are at least one. The fit and its residual are orthogonal, so
\begin{equation*}
 \MSE(f_n)=\nrm{Y}^2-\nrm{f_s}^2
 \ge\rho^2-\rho^4
 \ge\frac{\rho^2}{2}
 =\frac1{80D}
 \ge\frac1{320D}.
\end{equation*}
Here we used $\rho^2=1/(40D)\le1/40<1/2$.

It remains to consider paths that use a different feature after $A_s$. Let $A_\tau$ be the first such agent, and call its feature $x_j$. Its parent still predicts $f_s$, a nonzero multiple of $x_i$, so
\begin{equation*}
 \spn\{f_{\tau-1},x_j\}=\spn\{x_i,x_j\}.
\end{equation*}
Thus $f_\tau$ is the fit from two distinct raw features. By \Cref{lem:b-1:lower-bound-two-features},
\begin{equation*}
 \MSE(f_\tau)\ge\frac{\rho^2}{6},\qquad
 \nrm{f_\tau}^2\ge\frac{\rho^2}{5}.
\end{equation*}

To apply \Cref{lem:b-1:lower-bound-step} at every later step, we must check that the squared norm stays at least $\rho^2/5$. Each agent can use its parent's prediction, so its fit has no larger error. Also, \Cref{lem:least-squares-error} gives $\nrm{f_t}^2=\nrm{Y}^2-\MSE(f_t)$ for every agent. Consequently, for every $\tau\le t\le n$,
\begin{equation*}
 \nrm{f_t}^2
 =\rho^2-\MSE(f_t)
 \ge\rho^2-\MSE(f_\tau)
 =\nrm{f_\tau}^2
 \ge\frac{\rho^2}{5}.
\end{equation*}
The lemma therefore applies at each of the $n-\tau$ remaining steps. Since its factor $1-10\rho^2=1-1/(4D)$ is positive, iterating gives
\begin{equation*}
 \MSE(f_n)
 \ge\MSE(f_\tau)(1-10\rho^2)^{n-\tau}
 \ge\frac{\rho^2}{6}\left(1-\frac1{4D}\right)^{n-\tau}.
\end{equation*}
Finally, Bernoulli's inequality bounds the power below by $1-(n-\tau)/(4D)$. Since $n-\tau\le D$, at least three quarters of the error at $A_\tau$ remains. Hence
\begin{align*}
 \MSE(f_n)
 &\ge\frac{\rho^2}{6}\left(1-\frac{n-\tau}{4D}\right)\\
 &\ge\frac{\rho^2}{6}\cdot\frac34
 =\frac{\rho^2}{8}
 =\frac1{320D}.
\end{align*}
This proves the bound for every path and therefore for every graph in the theorem.
\end{proof}

\subsection{The upper bound}
\label{app:b-1:upper-bound}

We prove the upper bound for general normalization constants using the greedy allocation in \eqref{eq:b-1:greedy-allocation}.

\begin{theorem}
\label{thm:b-1:upper-bound}
Let $\gD$ satisfy \Cref{def:normalized-distribution} at bounds $M_X,A^*>0$. For every integer $D\ge1$, the designer can choose a feature allocation on the path $A_1,\ldots,A_D$ such that
\begin{equation*}
 \MSE(f_D)-\MSE(f^*)\le\frac{(A^*M_X)^2}{D+1}.
\end{equation*}
\end{theorem}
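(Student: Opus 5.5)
The plan is to analyze the greedy path of \eqref{eq:b-1:greedy-allocation} and prove the one-step recurrence for the normalized error $e_t=\nrm{f^*-f_t}^2/(A^*M_X)^2$. By \Cref{lem:replace-y-fstar} we may take $Y=f^*$ when computing fits. Because each agent's input span contains its parent's prediction, \Cref{lem:least-squares-error} shows that $e_t$ is non-increasing. It remains to show that $e_{t+1}\le e_t-e_t^2$, after which the rate follows from an elementary recurrence.

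\emph{Initial bound.} With $f_0=0$ we have $\nrm{f^*}\le\sum_i|w_i^*|\nrm{x_i}\le A^*M_X$, so $e_0\le1$.

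\emph{One step.} Let $r_t=f^*-f_t$. For every $t\ge1$, $r_t$ is orthogonal to $f_t$, because $f_t$ is a projection of $f^*$ onto a space containing $f_t$. For $t=0$ this holds trivially. Since $f_t\in H$, we can expand
\begin{equation*}
 \nrm{r_t}^2=\ip{r_t}{f^*}=\sum_i w_i^*\ip{r_t}{x_i}\le A^*\max_i|\ip{r_t}{x_i}|.
\end{equation*}
Features with $\nrm{x_i}=0$ contribute nothing, and $|\ip{r_t}{x_i}|\le M_X\,|\ip{r_t}{x_i}|/\nrm{x_i}$. Hence the greedy feature $x_j$ satisfies
\begin{equation*}
 \frac{|\ip{r_t}{x_j}|}{\nrm{x_j}}\ge\frac{\nrm{r_t}^2}{A^*M_X}.
\end{equation*}
Here $\ip{Y-f_t}{x_j}=\ip{r_t}{x_j}$ because $Y-f^*\perp H$. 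The agent $A_{t+1}$ may use the candidate $g=f_t+cx_j$ with the optimal scalar $c$, which gives $\nrm{f^*-g}^2=\nrm{r_t}^2-\ip{r_t}{x_j}^2/\nrm{x_j}^2$. Its fit is at least as good as $g$, so
\begin{equation*}
 \nrm{r_{t+1}}^2\le\nrm{r_t}^2-\frac{\nrm{r_t}^4}{(A^*M_X)^2},
\end{equation*}
that is, $e_{t+1}\le e_t-e_t^2$. If the maximum in \eqref{eq:b-1:greedy-allocation} is over an empty set, then $H=\{0\}$ and all errors are zero.

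\emph{Recurrence.} We show $e_t\le1/(t+1)$ by induction. The base case $t=0$ holds. The map $\phi(e)=e-e^2$ is increasing on $[0,1/2]$. For $t\ge1$ we have $e_t\le1/(t+1)\le1/2$, so
\begin{equation*}
 e_{t+1}\le\frac1{t+1}-\frac1{(t+1)^2}=\frac{t}{(t+1)^2}\le\frac1{t+2}.
\end{equation*}
The step from $t=0$ to $t=1$ needs a separate check: $e_1\le e_0-e_0^2\le1/4\le1/2$ for any $e_0\in[0,1]$. Combining this with \Cref{lem:least-squares-error}, which identifies excess error with $\nrm{f^*-f_D}^2$, gives the claim.

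The step most likely to go wrong is the one-step bound. It needs the orthogonality $r_t\perp f_t$ to pass from $\ip{r_t}{r_t}$ to $\ip{r_t}{f^*}$. It also needs the improvement to be measured against the optimal scalar multiple of $x_j$ added to $f_t$, rather than requiring the agent's fit to take that form.
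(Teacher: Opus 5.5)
Your proposal is correct and follows the paper's proof. It uses the same greedy allocation and the same one-step bound, obtained from $\nrm{r_t}^2=\ip{r_t}{f^*}$ together with the optimal multiple of the greedy feature added to $f_t$. The only difference is how the recurrence $e_{t+1}\le e_t-e_t^2$ is solved: you use induction and monotonicity of $e\mapsto e-e^2$ on $[0,1/2]$, which avoids dividing by possibly zero errors, whereas the paper (in your normalization) telescopes $1/e_{t+1}-1/e_t\ge1$, which treats $t=0$ uniformly and gives the slightly sharper $1/e_D\ge1/e_0+D$.
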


We first bound the decrease in error from one agent to the next, then prove \Cref{thm:b-1:upper-bound}.

\begin{lemma}
\label{lem:b-1:upper-bound-step}
Under the assumptions of \Cref{thm:b-1:upper-bound}, suppose at least one feature has positive norm. The allocation in \eqref{eq:b-1:greedy-allocation}, starting from $f_0=0$, satisfies, for every $0\le t<D$,
\begin{equation}
\label{eq:b-1:upper-bound-recurrence}
 \nrm{f^*-f_{t+1}}^2
 \le\nrm{f^*-f_t}^2
 -\frac{\nrm{f^*-f_t}^4}{(A^*M_X)^2}.
\end{equation}
\end{lemma}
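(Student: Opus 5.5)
By \Cref{lem:replace-y-fstar}, replacing the label $Y$ by $f^*$ leaves every prediction unchanged, and $\ip{Y-f_t}{x_i}=\ip{f^*-f_t}{x_i}$ because $Y-f^*\perp H$. So the greedy rule \eqref{eq:b-1:greedy-allocation} is also unchanged, and we may assume $Y=f^*$. Write $e=f^*-f_t$. Let $i$ be the selected feature and set $u=x_i/\nrm{x_i}$. The plan has three steps: find a competitor in the next agent's span that improves on $f_t$ by the right amount, convert that competitor into a bound on $f_{t+1}$, and lower-bound the greedy correlation using the normalization.

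\textbf{Competitor.} Agent $A_{t+1}$ has input span containing both $f_t$ and $x_i$, so $g=f_t+\ip{e}{u}u$ is in its span. Expanding the square gives
\begin{equation*}
\nrm{f^*-g}^2=\nrm{e}^2-\ip{e}{u}^2.
\end{equation*}
The excess error of $f_{t+1}$ is $\MSE(f_{t+1})-\MSE(f^*)$, and since $f_{t+1}$ minimizes MSE over its span, $\MSE(f_{t+1})\le\MSE(g)$. Excess errors equal squared distances to $f^*$, as noted after \Cref{lem:least-squares-error}. Therefore
\begin{equation*}
\nrm{f^*-f_{t+1}}^2\le\nrm{f^*-g}^2=\nrm{e}^2-\ip{e}{u}^2.
\end{equation*}
For $t=0$ there is no parent, and $g=\ip{f^*}{u}u$ lies in $\spn\{x_i\}$, so the same bound holds.

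\textbf{Greedy correlation.} This is the key step: we need $\ip{e}{u}^2\ge\nrm{e}^4/(A^*M_X)^2$.

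First, $\ip{e}{f_t}=0$. For $t\ge1$ this holds because $f_t$ is a fit (\Cref{lem:least-squares-error} with label $f^*$), and for $t=0$ it holds because $f_0=0$. Hence $\nrm{e}^2=\ip{e}{f^*}$.

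Next, expand $f^*=\sum_k w_k^*x_k$ and drop the terms with $\nrm{x_k}=0$, since those $x_k$ vanish. For each remaining $k$, the greedy choice gives $|\ip{e}{x_k}|\le\nrm{x_k}\,|\ip{e}{u}|\le M_X\,|\ip{e}{u}|$. Summing with the coefficient bound,
\begin{equation*}
\nrm{e}^2=\ip{e}{f^*}\le\sum_k|w_k^*|\,|\ip{e}{x_k}|\le A^*M_X\,|\ip{e}{u}|.
\end{equation*}
Squaring gives $\ip{e}{u}^2\ge\nrm{e}^4/(A^*M_X)^2$. Substituting into the competitor bound yields \eqref{eq:b-1:upper-bound-recurrence}.

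The only delicate points are using orthogonality of $e$ to $f_t$ to turn $\nrm{e}^2$ into a sum of feature correlations, and handling zero-norm features and the $t=0$ case. Both are covered above.
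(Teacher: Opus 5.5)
Your proof is correct and takes essentially the same approach as the paper. Both use $\ip{f^*-f_t}{f_t}=0$ with the coefficient and feature bounds to get $\nrm{f^*-f_t}^2\le A^*M_X|\ip{Y-f_t}{x_i}|/\nrm{x_i}$, and both use the competitor $f_t+\alpha x_i$ with the optimal $\alpha$. The only cosmetic difference is that you compare squared distances to $f^*$ directly, whereas the paper computes the MSE decrease and converts it using \Cref{lem:least-squares-error}.
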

\begin{proof}
Fix $0\le t<D$, and let $x_i$ be the feature chosen by \eqref{eq:b-1:greedy-allocation}. By \Cref{lem:replace-y-fstar}, the prediction $f_t$ is the projection of $f^*$ onto $V_t$, so $f^*-f_t$ is orthogonal to $f_t$. This also holds for $f_0=0$. Since $Y-f^*$ is orthogonal to every raw feature,
\begin{align*}
 \nrm{f^*-f_t}^2
 &=\ip{f^*-f_t}{f^*}
 =\sum_{\ell=1}^d w_\ell^*\ip{Y-f_t}{x_\ell}\\
 &\le\left(\sum_{\ell=1}^d|w_\ell^*|\nrm{x_\ell}\right)
 \frac{|\ip{Y-f_t}{x_i}|}{\nrm{x_i}}\\
 &\le A^*M_X\frac{|\ip{Y-f_t}{x_i}|}{\nrm{x_i}}.
\end{align*}
The first inequality uses the maximizing choice of $i$ in \eqref{eq:b-1:greedy-allocation}. The last inequality uses the coefficient and feature bounds.

The next agent can use $f_t+\alpha x_i$ for any $\alpha\in\R$. Its MSE is
\begin{equation*}
 \MSE(f_t+\alpha x_i)
 =\MSE(f_t)-2\alpha\ip{Y-f_t}{x_i}+\alpha^2\nrm{x_i}^2.
\end{equation*}
Choosing $\alpha=\ip{Y-f_t}{x_i}/\nrm{x_i}^2$ decreases MSE by $|\ip{Y-f_t}{x_i}|^2/\nrm{x_i}^2$. The fitted prediction has no larger error, so
\begin{equation*}
 \MSE(f_t)-\MSE(f_{t+1})
 \ge\frac{|\ip{Y-f_t}{x_i}|^2}{\nrm{x_i}^2}
 \ge\frac{\nrm{f^*-f_t}^4}{(A^*M_X)^2}.
\end{equation*}
By \Cref{lem:least-squares-error}, the left-hand side equals $\nrm{f^*-f_t}^2-\nrm{f^*-f_{t+1}}^2$, including when $t=0$. Rearranging proves \eqref{eq:b-1:upper-bound-recurrence}.
\end{proof}

\begin{proof}[Proof of \Cref{thm:b-1:upper-bound}]
If all features have norm zero, every agent is exact. Otherwise, use the allocation in \eqref{eq:b-1:greedy-allocation} and let $e_t=\nrm{f^*-f_t}^2$ for $0\le t\le D$. The initial error satisfies $e_0=\nrm{f^*}^2\le(A^*M_X)^2$ by the coefficient and feature bounds. If the error reaches zero, it stays zero because each later agent can use its parent's prediction.

For positive errors, \Cref{lem:b-1:upper-bound-step} gives $e_t-e_{t+1}\ge e_t^2/(A^*M_X)^2$. Dividing by $e_te_{t+1}$ and using $e_{t+1}\le e_t$ shows that $1/e_{t+1}-1/e_t\ge1/(A^*M_X)^2$. Summing over the $D$ agents gives
\begin{equation*}
 \frac1{e_D}\ge\frac1{e_0}+\frac{D}{(A^*M_X)^2}
 \ge\frac{D+1}{(A^*M_X)^2}.
\end{equation*}
Taking reciprocals gives the desired bound, since $e_D$ is the output's excess error by \Cref{lem:least-squares-error}.
\end{proof}

\subsection{The bounds for the unit case}
\label{app:b-1:rate}

\pathrate*
\begin{proof}
The upper bound follows from \Cref{thm:b-1:upper-bound} with $M_X=A^*=1$.

For the lower bound, multiply the features in \eqref{eq:b-1:lower-bound-distribution} by $2\sqrt2/3$ and the label by $1/\sqrt2$. The new global coefficients are $1/2,-1/4,-1/4$, whose absolute values sum to one. The feature second moments are at most $(8/9)(1+2\rho^2)\le14/15<1$, so the rescaled distribution belongs to $\gC_3$.

Rescaling the features by a nonzero constant preserves their spans. Rescaling the label by $1/\sqrt2$ divides every fitted prediction by $\sqrt2$, by linearity of projection and induction along the path. Every excess error is therefore halved. Thus \Cref{thm:b-1:lower-bound} gives $\siR{1}{3}{D}\ge1/(640D)$. Monotonicity in $d$ from \Cref{prop:minimax} extends this bound to all $d\ge3$. Finally, $\siBR{1}{d}{D}\ge\siR{1}{d}{D}$ by the same proposition.
\end{proof}

\subsection{A fixed allocation at depth below the number of features}
\label{app:b-1:fixed-path}

A path fixed in advance may omit a feature on which the label depends entirely. This prevents a uniform upper bound below one when the path is shorter than the number of features.

\begin{proposition}
\label{prop:b-1:fixed-path}
For all positive integers $d,D$ with $D<d$, we have $\siBR{1}{d}{D}=1$.
\end{proposition}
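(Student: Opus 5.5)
We prove the two inequalities $\siBR{1}{d}{D}\le1$ and $\siBR{1}{d}{D}\ge1$ separately. The upper bound is already contained in \Cref{prop:minimax}. For the lower bound, fix any DAG $G$ with $\Delta^-(G)\le1$ and $\depth(A_G)\le D$, any single-feature allocation $a$, and any output agent $A_G$. After deleting agents with no path to the output, the remaining agents form a path $A_1,\ldots,A_n$ with $n\le D<d$. These agents observe at most $n<d$ distinct features, so some index $k\in[d]$ is observed by no agent on the path. The adversary knows $G$, $a$ and $A_G$, so it may choose the distribution depending on this $k$.

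Let $Z$ be a standard Gaussian and set $x_k=Z$, $x_\ell=0$ for $\ell\ne k$, and $Y=Z$. Then $\nrm{x_\ell}^2\le1$ for every $\ell$, and $f^*=Y=x_k$ with coefficient vector $w^*=e_k$, whose $\ell_1$ norm is one. Hence $\gD\in\gC_d$. The zero features are harmless, because \Cref{def:normalized-distribution} only bounds second moments from above. If one prefers nondegenerate features, independent Gaussians with tiny variance would also work, but the degenerate choice is simplest.

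We now show by induction along the path that every agent predicts zero. Every agent observes some $x_\ell$ with $\ell\ne k$, so $x_\ell=0$. By induction its parent's prediction is also zero, so its input span $V_i$ is $\{0\}$ and its fit is $f_i=0$. In particular $f_G=0$, and the excess error is $\nrm{f^*-f_G}^2=\nrm{Z}^2=1$. Since this holds for every admissible choice of $G$, $a$ and $A_G$, the infimum in \eqref{eq:barR} is at least one, which proves $\siBR{1}{d}{D}\ge1$.

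No step is a genuine obstacle. The only points to check are that \Cref{def:normalized-distribution} allows zero features (it does), and that the path reduction with $n\le D$ holds for one-parent graphs, as in \Cref{thm:b-1:lower-bound}.
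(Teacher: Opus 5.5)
Your proof is correct and follows the paper's argument: you reduce to a path of at most $D<d$ agents, put the whole label on a feature that no agent on the path observes, and show by induction that every prediction is zero, with the upper bound taken from \Cref{prop:minimax}. The only difference is the choice of distribution: the paper uses independent standard Gaussian features with $Y=x_j$, so the observed features are orthogonal to $Y$ rather than identically zero, which shows the bound already holds for full-rank distributions, whereas your degenerate choice is equally admissible under \Cref{def:normalized-distribution}.
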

\begin{proof}
Fix a graph, allocation, and output agent with at most one parent per agent and output depth at most $D$. The path ending at the output contains at most $D<d$ agents, so some feature $x_j$ is not observed on that path. Choose independent standard Gaussian features and let $Y=x_j$. This distribution belongs to $\gC_d$, and the global predictor is $f^*=Y$.

Each feature observed on the path is orthogonal to $Y$. Starting at the source, induction shows that every prediction on the path is zero. The output therefore has excess error $\nrm{Y}^2=1$. This proves the lower bound for every fixed graph, allocation, and output agent. The upper bound of one follows from \Cref{prop:minimax}.
\end{proof}

\section{Proofs for three parents per agent}
\label{app:three-parents}

\subsection{A fixed graph for exact aggregation}
\label{app:b-3:fixed}

We first construct the gadget in \Cref{lem:b-3:history-extension}, which computes the fit from the earlier predictions together with one incoming prediction. We then use this gadget to prove \Cref{thm:b-3:fixed}. We use $f^*$ as the label throughout, as permitted by \Cref{lem:replace-y-fstar}.

Fix $p_0,\ldots,p_t,q$ satisfying the hypotheses of \Cref{lem:b-3:history-extension}. The gadget must compute the fit from $x_1,p_0,\ldots,p_t,q$. We will do this by computing fits from shorter intervals of the list $p_0,\ldots,p_t$ and then joining them. For $0\le\ell\le r\le t$, define
\begin{equation*}
 W_{\ell,r}=\spn\{x_1,p_t,p_\ell,\ldots,p_r\},\qquad
 g_{\ell,r}=P_{W_{\ell,r}+\spn\{q\}}f^*.
\end{equation*}
Thus $g_{\ell,r}$ is the fit from the predictions in the interval $[\ell,r]$, together with $x_1,p_t,q$. Since $W_{0,t}=V_t$, the gadget's output must be $g_{0,t}$. Every interval includes $p_t$, which is the fit from all of $V_t$. In particular, $P_{W_{\ell,r}}f^*=p_t$: the vector $p_t$ belongs to $W_{\ell,r}$, and its residual $f^*-p_t$ is orthogonal to $W_{\ell,r}\subseteq V_t$.

For $i\le j$, we have $p_i\in V_i\subseteq V_j$, while $f^*-p_j\perp V_j$. This gives the first equality below, and \Cref{lem:least-squares-error} applied to $p_i=P_{V_i}f^*$ gives the second:
\begin{equation}
\label{eq:b-3:nested-predictions}
 \ip{p_i}{p_j}=\ip{p_i}{f^*}=\nrm{p_i}^2,
 \qquad
 \nrm{p_j-p_i}^2=\nrm{p_j}^2-\nrm{p_i}^2.
\end{equation}
The squared-distance identity follows by expanding $\nrm{p_j-p_i}^2$ and substituting the inner-product identity.

We begin with intervals containing two consecutive predictions. The next lemma computes their fit using at most two agents, each with at most three parents.

\begin{lemma}
\label{lem:b-3:history-leaf}
Let $0\le\ell<r\le t$ with $r=\ell+1$. The prediction $g_{\ell,r}$ can be computed by the following agents, all observing $x_1$. If $\ell=0$, one agent with parents $p_t,p_r,q$ suffices. If $r=t$, one agent with parents $p_t,p_\ell,q$ suffices. For $0<\ell<r<t$, first create an agent with parents $p_\ell,p_r,q$ and call its prediction $u$. An agent with parents $p_r,p_t,u$ then predicts $g_{\ell,r}$.
\end{lemma}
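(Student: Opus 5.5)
We check each case separately, using the fact that an agent's fit is the projection of $f^*$ onto its input span, which by \Cref{lem:replace-y-fstar} may be computed with $f^*$ as the label.

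\emph{The boundary cases.} If $\ell=0$, the target space is $W_{0,1}+\spn\{q\}=\spn\{x_1,p_t,p_0,p_1,q\}$. Since $p_0=P_{\spn\{x_1\}}f^*$ is a multiple of $x_1$, this equals $\spn\{x_1,p_t,p_1,q\}$. That is exactly the input span of an agent observing $x_1$ with parents $p_t,p_r,q$. If $r=t$, the target space is $\spn\{x_1,p_t,p_\ell,q\}$ directly, since $p_r=p_t$. In both cases the agent's fit is $g_{\ell,r}$ by definition.

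\emph{The interior case $0<\ell<r<t$.} The first agent has input span $S=\spn\{x_1,p_\ell,p_r,q\}$, so $u=P_Sf^*$. The second agent has input span $T=\spn\{x_1,p_r,p_t,u\}$. The target is $g=P_Wf^*$ with $W=\spn\{x_1,p_t,p_\ell,p_r,q\}$. We have $T\subseteq W$ because $u\in S\subseteq W$. It therefore suffices to show $g\in T$: then $P_Tf^*=P_T g=g$, since $f^*-g\perp W\supseteq T$.

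To show $g\in T$, decompose $W=S+\spn\{p_t\}$. Let $\tilde p=p_t-P_Sp_t$. Because $S\perp\tilde p$, the projection splits as $g=u+P_{\spn\{\tilde p\}}f^*$, so $g\in\spn\{u,p_t,P_Sp_t\}$. We claim $P_Sp_t\in\spn\{u,p_r\}$; the case $p_t\in S$ is included. The key identities come from \eqref{eq:b-3:nested-predictions} and nestedness:
\begin{itemize}
\item $p_t$ and $f^*$ have the same inner products with $x_1$, $p_\ell$ and $p_r$, because $f^*-p_t\perp V_t$, which contains these vectors.
\item The only direction in $S$ where they may differ is $q$, where $\ip{f^*-p_t}{q}$ is possibly nonzero.
\end{itemize}
Hence $P_S(f^*-p_t)$ is the vector in $S$ orthogonal to $x_1,p_\ell,p_r$ whose inner product with $q$ equals $\ip{f^*-p_t}{q}$. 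This is a multiple of the component of $q$ orthogonal to $\spn\{x_1,p_\ell,p_r\}$.

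Meanwhile, $u=P_{\spn\{x_1,p_\ell,p_r\}}f^*+c\,q^\perp=p_r+c\,q^\perp$, where $q^\perp$ is that orthogonal component. Here we use $P_{\spn\{x_1,p_\ell,p_r\}}f^*=p_r$, which holds because $p_r=P_{V_r}f^*$ and $x_1,p_\ell\in V_r$. Therefore $P_Sp_t=u-P_S(f^*-p_t)$ lies in $\spn\{p_r,q^\perp\}$. Whenever $c\ne0$, this is $\spn\{p_r,u\}$.

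\emph{Main obstacle: the degenerate case $c=0$.} This case means $\ip{f^*}{q^\perp}=0$; the case $q^\perp=0$ is trivial. We must then show that $P_S(f^*-p_t)=0$, i.e.\ $\ip{f^*-p_t}{q^\perp}=0$, equivalently $\ip{p_t}{q^\perp}=0$. This need not hold for arbitrary $q$. I would try to derive it from the hypotheses of \Cref{lem:b-3:history-extension} on $q$, expecting $q$ to be a fit whose space contains $p_t$. Then $\ip{q}{p_t}=\nrm{p_t}^2$, while $q^\perp=q-P_{\spn\{x_1,p_\ell,p_r\}}q$, and the inner products of this projection with $p_t$ and $f^*$ agree. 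That gives $\ip{p_t}{q^\perp}=\ip{f^*}{q^\perp}$ up to the term $\ip{q}{p_t}-\ip{q}{f^*}$, which vanishes for a fit of $f^*$ from a space containing $p_t$. So $c=0$ forces the needed orthogonality. Checking that the hypotheses supply exactly this property of $q$ is the step I expect to require care.
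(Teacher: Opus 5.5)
Your boundary cases and your interior case with $c\neq0$ are correct. Splitting $W=S+\spn\{\tilde p\}$ and computing $P_Sp_t=p_r+(c-c')q^\perp$ is a valid variant of the paper's orthogonal decomposition $K+\spn\{p_t,u\}=K+\spn\{p_t-p_r,u-p_r\}$, where $K=\spn\{x_1,p_\ell,p_r\}$. Your condition $c\neq0$ is exactly the paper's case $u\neq p_r$.

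The gap is in the degenerate case. You claim that $\ip{q}{p_t}-\ip{q}{f^*}$ vanishes for any fit $q$ of $f^*$ from a space containing $p_t$, and this is false. Such a fit has $\ip{q}{p_t}=\ip{f^*}{p_t}=\nrm{p_t}^2$ and $\ip{q}{f^*}=\nrm{q}^2$, so
\[
 \ip{q}{f^*}-\ip{q}{p_t}=\nrm{q}^2-\nrm{p_t}^2=\nrm{q-p_t}^2 .
\]
This is positive whenever $q$ improves on $p_t$. That is the situation the construction is built to produce; it is the main case $\ip{f^*-p_t}{q}\neq0$ of \Cref{lem:b-3:history-merge}. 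So the hypotheses on $q$ alone cannot give $\ip{p_t}{q^\perp}=\ip{f^*}{q^\perp}$, and your degenerate case is unproved as written.

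The missing ingredient is to use $c=0$ itself, which means $u=p_r$. The errors are then sandwiched, using three facts: $q\in S$; $p_t$ lies in the input span of the agent producing $q$; and $p_r\in V_r\subseteq V_t$. This gives
\[
 \nrm{f^*-p_r}^2=\nrm{f^*-u}^2\le\nrm{f^*-q}^2\le\nrm{f^*-p_t}^2\le\nrm{f^*-p_r}^2 .
\]
All of these errors are equal, so uniqueness of projections (\Cref{lem:least-squares-error}) gives $q=p_t=p_r$. Only then does your correction term vanish.

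In fact $q=p_r\in K$, so $q^\perp=0$. The subcase $c=0$, $q^\perp\neq0$ that you were trying to handle cannot occur, and $g=P_Kf^*=p_r\in T$. This sandwich is exactly the paper's argument for its case $u=p_r$. With it added, your proof is complete.
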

\begin{proof}
We first handle the two end intervals. If $\ell=0$, then $p_\ell=p_0\in\spn\{x_1\}$, so
\begin{equation*}
 W_{0,r}+\spn\{q\}=\spn\{x_1,p_t,p_r,q\}.
\end{equation*}
An agent observing $x_1$ with parents $p_t,p_r,q$ therefore predicts $g_{0,r}$. If $r=t$, the corresponding space is $\spn\{x_1,p_t,p_\ell,q\}$, which the other one-agent construction uses directly.

Now suppose $0<\ell<r<t$. The first agent observes $x_1$ and receives $p_\ell,p_r,q$. Write $K=\spn\{x_1,p_\ell,p_r\}$ for its input space before adding $q$. The best prediction from $K$, namely $P_K f^*$, is $p_r$: it is already the best prediction in the larger space $V_r$, and it belongs to $K$. After adding $q$, the first agent predicts $u=P_{K+\spn\{q\}}f^*$. We will show why the second agent can use $p_r,p_t,u$ in place of the predictions spanning its required space $p_\ell,p_r,p_t,q$ for $g_{\ell,r}$.

First consider the case $u=p_r$. Since the first agent receives $q$, its prediction $p_r$ has no greater error than $q$. The agent producing $q$ receives $p_t$ by hypothesis, so $q$ has no greater error than $p_t$. Finally, $p_t$ has no greater error than $p_r$ because it is the best prediction in $V_t$ and $p_r\in V_r\subseteq V_t$. These comparisons give
\begin{equation*}
 \nrm{f^*-p_r}^2
 \le\nrm{f^*-q}^2
 \le\nrm{f^*-p_t}^2
 \le\nrm{f^*-p_r}^2.
\end{equation*}
Hence all three errors are equal, and by \Cref{lem:least-squares-error}, $u=p_r=q=p_t$. Thus all three parents of the second agent predict $p_r$, and $g_{\ell,r}=P_Kf^*=p_r$. The second agent retains this fit because its inputs lie in $K$ and include $p_r$.

Now suppose $u\ne p_r$. Then we must have $u \notin K$ because $p_r$ is the best prediction in $K$. The first agent's input space extends $K$ by at most one dimension by adding $q$, and its prediction $u$ must use that dimension because otherwise $u \in K$. Consequently,
\begin{equation*}
 K+\spn\{u\}=K+\spn\{q\}.
\end{equation*}
Replacing $q$ with $u$ therefore leaves the full input space for $g_{\ell,r}$ unchanged:
\begin{equation*}
 g_{\ell,r}=P_{K+\spn\{p_t,q\}}f^*
 =P_{K+\spn\{p_t,u\}}f^*.
\end{equation*}

It remains to explain why
\begin{equation*}
  P_{\spn\{x_1,p_\ell,p_r,p_t,u\}}f^* = P_{\spn\{x_1,p_r,p_t,u\}}f^*,
\end{equation*}
where the left-hand side is the projection onto $K+\spn\{p_t,u\}$ and the right-hand side is the prediction of the second agent. So it remains to show that the extra $p_\ell$ in the left-hand side does not change the projection.

We will show that the component of $g_{\ell,r}$ in $K$ is still $p_r$, which the second agent receives. Since $u=P_{K+\spn\{q\}}f^*$ and $p_r=P_Kf^*$, both residuals $f^*-u$ and $f^*-p_r$ are orthogonal to $K$. Their difference is $u-p_r$, so $u-p_r\perp K$. Also, $p_t-p_r\perp K$, since it is the difference of the residuals $f^*-p_r$ and $f^*-p_t$, both orthogonal to $K\subseteq V_r\subseteq V_t$. Because $p_r\in K$, we can therefore write the full input space as the orthogonal sum
\begin{equation*}
 K+\spn\{p_t,u\}=K+\spn\{p_t-p_r,u-p_r\}.
\end{equation*}

The projection onto an orthogonal sum is the sum of the projections onto its two subspaces. Since the projection onto $K$ is $p_r$, this gives
\begin{equation*}
 g_{\ell,r}=p_r+P_{\spn\{p_t-p_r,u-p_r\}}f^*
 \in\spn\{p_r,p_t,u\}.
\end{equation*}
Thus the second agent can form $g_{\ell,r}$ from its three parents. All of its inputs, including $x_1$, lie in $K+\spn\{p_t,u\}$, where $g_{\ell,r}$ already minimizes the error. It therefore also minimizes the error among the second agent's available predictions, so the agent predicts $g_{\ell,r}$.
\end{proof}

For a longer interval $[\ell,r]$, we split at an index $m$ and use the fits from $[\ell,m]$ and $[m,r]$. The shared prediction $p_m$ lets us separate the two interval spans into a common part and orthogonal remaining parts, as shown in the next lemma.

\begin{lemma}
\label{lem:b-3:interval-overlap}
For $0\le\ell<m<r\le t$, set
\begin{equation*}
 L=W_{\ell,m},\qquad R=W_{m,r},\qquad
 C=\spn\{x_1,p_t,p_m\}.
\end{equation*}
Then
\begin{equation}
\label{eq:b-3:interval-overlap}
 P_{L+R}=P_L+P_R-P_C.
\end{equation}
\end{lemma}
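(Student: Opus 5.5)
The plan is to show that $L\ominus C$ and $R\ominus C$ (the orthogonal complements of $C$ inside $L$ and $R$) are orthogonal to each other. Once that holds, $L+R$ splits as the orthogonal sum $C\oplus(L\ominus C)\oplus(R\ominus C)$, and the projections add:
\begin{equation*}
 P_{L+R}=P_C+P_{L\ominus C}+P_{R\ominus C}=P_C+(P_L-P_C)+(P_R-P_C),
\end{equation*}
which is \eqref{eq:b-3:interval-overlap}. Note that $C\subseteq L\cap R$, since $\ell\le m\le r$ and both $W_{\ell,m}$ and $W_{m,r}$ contain $x_1,p_t,p_m$.

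To get this orthogonality, I replace the spanning vectors by differences. Write $L=C+\spn\{p_i-p_m:\ell\le i<m\}$ and $R=C+\spn\{p_j-p_m:m<j\le r\}$. For $i\le m$, the identities \eqref{eq:b-3:nested-predictions} give
\begin{equation*}
 \ip{p_i-p_m}{p_j-p_m}=\nrm{p_i}^2-\nrm{p_i}^2-\nrm{p_m}^2+\nrm{p_m}^2=0
\end{equation*}
whenever $j\ge m$. So the cross terms between the two difference families vanish.

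The remaining issue, and the main obstacle, is that the difference vectors need not be orthogonal to $C$. I handle this by first projecting them off $C$ and checking that the projections stay orthogonal across the two sides. Take $R$ first. For $j\ge m$, the vector $p_j-p_m$ is the difference of the residuals $f^*-p_m$ and $f^*-p_j$. Both are orthogonal to $V_m$, and $V_m\ni x_1,p_m$, so $p_j-p_m\perp x_1$ and $p_j-p_m\perp p_m$. Since $j\le t$, the same identity gives $\ip{p_j-p_m}{p_t}=\nrm{p_j}^2-\nrm{p_m}^2$, so $p_j-p_m$ is not in general orthogonal to $p_t$. Its projection onto $C$ is therefore a multiple of the component of $p_t$ orthogonal to $\spn\{x_1,p_m\}$, which is $p_t-p_m$ (because $p_m=P_{V_m}p_t$ and $p_m=P_{\spn\{x_1,p_m\}}p_t$). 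So $R\ominus C$ is spanned by the vectors $p_j-p_m-c_j(p_t-p_m)$.

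On the $L$ side, $p_i-p_m$ for $i\le m$ lies in $V_m$, hence is orthogonal to $p_t-p_m\perp V_m$. Its projection onto $C$ therefore lies in $\spn\{x_1,p_m\}$, and $L\ominus C$ is spanned by vectors $v_i\in V_m$ with $v_i\perp x_1,p_m$.

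It remains to check the pairings. Each $v_i\in V_m$ is orthogonal to $p_t-p_m$. Each $v_i$ is also orthogonal to $p_j-p_m$ for $j\ge m$, because $p_j-p_m\perp V_m$. Hence every generator of $L\ominus C$ is orthogonal to every generator of $R\ominus C$, which completes the decomposition.

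Finally, for the projection-splitting step I use the standard fact stated in the preliminaries for orthogonal subspaces, applied twice. It requires finite dimensionality, which holds because all the spaces involved are finitely spanned.
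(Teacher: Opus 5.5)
Your proof is correct and follows essentially the paper's route. Both arguments show that the orthogonal complement of $C$ inside $L$ lies in $V_m$, because $L$ splits as a part in $V_m$ plus $\spn\{p_t-p_m\}\subseteq C$, so it is orthogonal to the differences $p_j-p_m\perp V_m$ that generate $R$ over $C$. The difference is cosmetic: the paper proves $L\cap C^\perp\perp R$ directly and uses $L+R=R+(L\cap C^\perp)$, while you symmetrize to $(L\ominus C)\perp(R\ominus C)$; your side computation $\ip{p_i-p_m}{p_j-p_m}=0$ is not needed.
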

\begin{proof}
Let $T=L\cap C^\perp$ be the subspace of vectors in $L$ orthogonal to $C$. Since $C\subseteq L$, we have the orthogonal decomposition $L=C+T$. We will show that $T$ is also orthogonal to all of $R$. Because $C\subseteq R$, this will give the orthogonal decomposition $L+R=R+T$, and hence
\begin{equation*}
 P_L=P_C+P_T,\qquad P_{L+R}=P_R+P_T.
\end{equation*}
Rewriting $P_T$ in the second equality using $P_T=P_L-P_C$ from the first equality gives the claimed identity. It remains to prove $T\perp R$.

The nested projections imply that $p_j-p_m\perp V_m$ for every $m\le j\le t$. Indeed, both residuals $f^*-p_m$ and $f^*-p_j$ are orthogonal to $V_m\subseteq V_j$, and their difference is $p_j-p_m$. In particular, we can write $L$ as the orthogonal sum
\begin{equation*}
 L=\spn\{x_1,p_\ell,\ldots,p_m\}
   +\spn\{p_t-p_m\}.
\end{equation*}
The first summand lies in $V_m$. The second is orthogonal to $V_m$ and lies in $C$, since $p_t,p_m\in C$. Every vector in $T$ is orthogonal to $C$, so its component in the second summand must be zero. Consequently, $T\subseteq V_m$.

Now take any $w\in T$. Since $w\in V_m$, it is orthogonal to every difference $p_j-p_m$ for $m\le j\le t$. It is also orthogonal to $p_m\in C$. Therefore, for every $m\le j\le t$,
\begin{equation*}
 \ip{w}{p_j}=\ip{w}{p_m}+\ip{w}{p_j-p_m}=0.
\end{equation*}
This includes the predictions $p_m,\ldots,p_r,p_t$ that generate $R$ together with $x_1$. Since $x_1\in C$, we also have $w\perp x_1$. Thus $w\perp R$, proving $T\perp R$ and completing the proof.
\end{proof}

We now turn this identity between spaces into a way to combine their fits. The next agent will receive the fits from both intervals and one more fit from their shared inputs $x_1,p_t,p_m,q$.

\begin{lemma}
\label{lem:b-3:history-merge}
For $0\le\ell<m<r\le t$, an agent observing $x_1$ and receiving $g_{\ell,m}$, $g_{m,r}$, and $g_{m,m}$ predicts $g_{\ell,r}$.
\end{lemma}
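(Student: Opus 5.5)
Our strategy is to apply \Cref{lem:b-3:interval-overlap} to the spaces enlarged by $q$, and then show that the resulting combination of projections lies in the span of the agent's three parents. Set $L'=L+\spn\{q\}$, $R'=R+\spn\{q\}$ and $C'=C+\spn\{q\}$, with $L,R,C$ as in \Cref{lem:b-3:interval-overlap}. By definition, $g_{\ell,m}=P_{L'}f^*$, $g_{m,r}=P_{R'}f^*$ and $g_{m,m}=P_{C'}f^*$, since $W_{m,m}=C$. Moreover $L'+R'=W_{\ell,r}+\spn\{q\}$, so the target is $g_{\ell,r}=P_{L'+R'}f^*$. The key step is to establish the analogue of \eqref{eq:b-3:interval-overlap} for these primed spaces:
\begin{equation*}
 P_{L'+R'}f^*=g_{\ell,m}+g_{m,r}-g_{m,m}.
\end{equation*}

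To prove this, we reuse the structure of the earlier proof with $T=L\cap C^\perp$. That proof showed $T\perp R$ and $T\perp C$, and that $L=C+T$ orthogonally. The one new ingredient needed is $T\perp q$. Here we would use the hypothesis of \Cref{lem:b-3:history-extension}, as in \Cref{lem:b-3:history-leaf}, that the agent producing $q$ receives $p_t$. This fails in general, so we take a different route.

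The alternative route is to decompose $q$. Write $q=q_V+q_\perp$ with $q_V=P_{V_t}q$. Since $T\subseteq V_m\subseteq V_t$, we get $\ip{w}{q}=\ip{w}{q_V}$ for $w\in T$, but this need not vanish. So instead of projecting onto spaces, we work with the vector $f^*$ directly. Let $s$ be the normalized component of $q$ orthogonal to $V_t$, which we may assume nonzero; if it is zero, then $q\in V_t$ and all the fits equal $p_t$, by the error-comparison argument of \Cref{lem:b-3:history-leaf}. For any subspace $S$ with $p_t\in S\subseteq V_t$, the space $S+\spn\{q\}$ equals $S\oplus\spn\{\tilde q_S\}$ orthogonally, where $\tilde q_S=q-P_Sq$. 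Since $P_Sf^*=p_t$, this gives
\begin{equation*}
 P_{S+\spn\{q\}}f^*=p_t+\frac{\ip{f^*-p_t}{\tilde q_S}}{\nrm{\tilde q_S}^2}\tilde q_S .
\end{equation*}
Here $\ip{f^*-p_t}{\tilde q_S}=\ip{f^*-p_t}{q}$ is independent of $S$. By \eqref{eq:b-3:interval-overlap}, $P_{L+R}q=P_Lq+P_Rq-P_Cq$. We therefore want to show that the claimed combination $g_{\ell,m}+g_{m,r}-g_{m,m}$, or more generally some combination of the three parents, reproduces the formula for $S=L+R$.

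The main obstacle is that the normalizations $\nrm{\tilde q_S}^2$ differ across $L$, $R$, $C$ and $L+R$, so a plain alternating sum need not be exact. We handle this by working with spans rather than fixed coefficients. Each $g_S$ determines the direction $\tilde q_S=q-P_Sq$ up to a scalar whenever $g_S\ne p_t$. The three parents span $p_t$ (if it is needed) together with $\tilde q_L,\tilde q_R,\tilde q_C$. Using $P_{L+R}q=P_Lq+P_Rq-P_Cq$ gives
\begin{equation*}
 \tilde q_{L+R}=\tilde q_L+\tilde q_R-\tilde q_C ,
\end{equation*}
which lies in the span of the three parent directions. The target equals $p_t+\alpha\tilde q_{L+R}$. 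Since $g_{m,m}$ provides $p_t$ up to its $\tilde q_C$ term, and the three vectors $g_{\ell,m}-p_t$, $g_{m,r}-p_t$, $g_{m,m}-p_t$ are multiples of the $\tilde q$'s, we need only check that $p_t$ itself lies in the span $\spn\{x_1,g_{\ell,m},g_{m,r},g_{m,m}\}$. That is the delicate point. We would handle it using the fact that the three nonzero multiples are generically distinct, treating the degenerate case where $\ip{f^*-p_t}{q}=0$ separately: in that case every fit is $p_t$. Finally, the agent's full input span lies in $L'+R'$, where $g_{\ell,r}$ is optimal. Since $g_{\ell,r}$ is also available to the agent, the agent outputs exactly $g_{\ell,r}$.
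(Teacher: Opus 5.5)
You have the right ingredients: the formula $g_S=p_t+\frac{c}{s_S}\tilde q_S$, where $c=\ip{f^*-p_t}{q}$ does not depend on $S$ and $s_S=\nrm{\tilde q_S}^2$, and the identity $\tilde q_{L+R}=\tilde q_L+\tilde q_R-\tilde q_C$ from \Cref{lem:b-3:interval-overlap}. But the argument never closes. The unweighted identity you first call the key step is false, as you notice yourself. The reduction you fall back on, that it suffices to show $p_t\in\spn\{x_1,g_{\ell,m},g_{m,r},g_{m,m}\}$, is not just unproven but false in general.

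To see this, assume $c\ne0$, $P_Lq\ne P_Cq$ and $P_Rq\ne P_Cq$, and write a candidate representation $p_t=\gamma x_1+\sum_S\beta_Sg_S$. The component orthogonal to $V_t$ forces $\sum_S\beta_Sc/s_S=0$. Comparing components in $C$ and in the mutually orthogonal spaces $L\cap C^\perp$ and $R\cap C^\perp$ then forces all $\beta_S=0$. Hence $p_t\in\spn\{x_1\}$, i.e.\ $p_t=p_0$, which fails in the situations where the gadget is used. Likewise, the three parents cannot span $p_t$ together with $\tilde q_L,\tilde q_R,\tilde q_C$, since these four vectors are generically independent. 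An appeal to the multiples being ``generically distinct'' cannot repair this.

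The missing idea is that $p_t$ never has to be available on its own: its coefficient cancels exactly once each parent is weighted by its normalization. Write $s_Sg_S=s_Sp_t+c\,\tilde q_S$. Since $P_Sq\perp\tilde q_S$, we have $\ip{q}{\tilde q_S}=\nrm{\tilde q_S}^2=s_S$. Taking the inner product of $\tilde q_{L+R}=\tilde q_L+\tilde q_R-\tilde q_C$ with $q$ therefore gives $s_{L+R}=s_L+s_R-s_C$. Hence
$s_Lg_{\ell,m}+s_Rg_{m,r}-s_Cg_{m,m}=s_{L+R}p_t+c\,\tilde q_{L+R}=s_{L+R}g_{\ell,r}$.
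Moreover $s_{L+R}>0$, because $c\ne0$ forces $q\notin V_t$. This puts $g_{\ell,r}$ in the span of the three parents, which is exactly the paper's argument. Combined with your degenerate case $c=0$ and your closing optimality argument in $L'+R'$, it completes the proof.
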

\begin{proof}
We will show that $g_{\ell,r}\in\spn\{g_{\ell,m},g_{m,r},g_{m,m}\}$. This suffices because all of the agent's inputs lie in $W_{\ell,r}+\spn\{q\}$, where $g_{\ell,r}$ minimizes the error.

First suppose $\ip{f^*-p_t}{q}=0$. The residual $f^*-p_t$ is orthogonal to $V_t$ and, in this case, to $q$. It is therefore orthogonal to all four spaces $W_{\ell,r}, W_{m,r}, W_{\ell,m}, W_{m,m}$, even after adding $q$. Thus $p_t$ remains the best prediction in each case, giving
\begin{equation*}
 g_{\ell,m}=g_{m,r}=g_{m,m}=g_{\ell,r}=p_t.
\end{equation*}
All three parents already supply $g_{\ell,r}$, which proves the claim in this case.

Suppose $\ip{f^*-p_t}{q}\ne0$. Since $f^*-p_t\perp V_t$, the nonzero inner product with $q$ implies $q\notin V_t$. For each of the four intervals $[i,j]$, set $s_{i,j}=\nrm{q-P_{W_{i,j}}q}^2$, the squared norm of the part of $q$ outside its interval space. Since $q \notin V_t$ and $W_{i,j}\subseteq V_t$, each $s_{i,j}$ is positive. We will show that:
\begin{equation}\label{eq:b-3:interval-g-combination}
 g_{\ell,r}
 =\frac{s_{\ell,m}g_{\ell,m}+s_{m,r}g_{m,r}-s_{m,m}g_{m,m}}{s_{\ell,r}}.
\end{equation}

By showing the above, we conclude that the agent can compute $g_{\ell,r}$ from its three parents and concludes the proof.

We first compute $g_{i,j}$, by separating its input space into two orthogonal subspaces. Since $P_{W_{i,j}}q$ already belongs to $W_{i,j}$, subtracting it from $q$ leaves the input span unchanged:
\begin{equation*}
 W_{i,j}+\spn\{q\}
 =W_{i,j}+\spn\{q-P_{W_{i,j}}q\}.
\end{equation*}
The two subspaces on the right are orthogonal, because the projection residual $q-P_{W_{i,j}}q$ is orthogonal to $W_{i,j}$. We can therefore compute $g_{i,j}$ by adding the projections of $f^*$ onto these two subspaces.

The projection onto $W_{i,j}$ is $p_t$: we have $p_t\in W_{i,j}\subseteq V_t$, and $f^*-p_t\perp V_t$. For the other subspace, we use the projection onto the line spanned by $q-P_{W_{i,j}}q$, whose squared norm is $s_{i,j}>0$. This gives
\begin{equation*}
 g_{i,j}
 =p_t+\frac{\ip{f^*}{q-P_{W_{i,j}}q}}{s_{i,j}}
       (q-P_{W_{i,j}}q).
\end{equation*}

We now simplify the numerator. Since $p_t\in W_{i,j}$, it is orthogonal to $q-P_{W_{i,j}}q$. Also, $f^*-p_t$ is orthogonal to $P_{W_{i,j}}q\in W_{i,j}\subseteq V_t$. These two facts give, respectively,
\begin{equation*}
 \ip{f^*}{q-P_{W_{i,j}}q}
 =\ip{f^*-p_t}{q-P_{W_{i,j}}q}
 =\ip{f^*-p_t}{q}.
\end{equation*}
Substituting this numerator and multiplying by $s_{i,j}$ yields
\begin{equation}\label{eq:b-3:interval-g-formula}
 s_{i,j}g_{i,j}
 =s_{i,j}p_t+\ip{f^*-p_t}{q}(q-P_{W_{i,j}}q).
\end{equation}

We now combine the three weighted parent predictions using this formula. Applying \Cref{lem:b-3:interval-overlap} to $q$ gives $P_{W_{\ell,r}}q=P_{W_{\ell,m}}q+P_{W_{m,r}}q-P_{W_{m,m}}q$. Subtracting both sides from $q$ and grouping the terms gives
\begin{equation}\label{eq:b-3:interval-q-equality}
 q-P_{W_{\ell,r}}q
 =(q-P_{W_{\ell,m}}q)+(q-P_{W_{m,r}}q)-(q-P_{W_{m,m}}q).
\end{equation}

This equality also gives a relation between the weights. For each $[i,j]$, we can split $q$ into its projection $P_{W_{i,j}}q$ and its residual $q-P_{W_{i,j}}q$. These two vectors are orthogonal, so
\begin{align*}
 \ip{q}{q-P_{W_{i,j}}q}
 &=\ip{P_{W_{i,j}}q}{q-P_{W_{i,j}}q}
   +\nrm{q-P_{W_{i,j}}q}^2\\
 &=s_{i,j}.
\end{align*}
Taking inner products with $q$ in \eqref{eq:b-3:interval-q-equality} for $q-P_{W_{\ell,r}}q$ therefore yields
\begin{equation}\label{eq:b-3:s-equality}
 s_{\ell,r}=s_{\ell,m}+s_{m,r}-s_{m,m}.
\end{equation}

Now consider $s_{\ell,m}g_{\ell,m}+s_{m,r}g_{m,r}-s_{m,m}g_{m,m}$ and rewrite it using \eqref{eq:b-3:interval-g-formula} and then \eqref{eq:b-3:s-equality}:
\begin{align*}
 &s_{\ell,m}g_{\ell,m}+s_{m,r}g_{m,r}-s_{m,m}g_{m,m}\\
 &\quad=(s_{\ell,m}+s_{m,r}-s_{m,m})p_t\\
 &\qquad\quad+\ip{f^*-p_t}{q}
   \left((q-P_{W_{\ell,m}}q)+(q-P_{W_{m,r}}q)-(q-P_{W_{m,m}}q)\right)\\
 &\quad=s_{\ell,r}p_t+\ip{f^*-p_t}{q}(q-P_{W_{\ell,r}}q)\\
 &\quad=s_{\ell,r}g_{\ell,r},
\end{align*}
where the last equality uses \eqref{eq:b-3:interval-g-formula} backwards. Since $s_{\ell,r}>0$, dividing by it gives \eqref{eq:b-3:interval-g-combination}.
\end{proof}

We can now assemble the gadget. Each interval will compute its fit from the two shorter intervals and the shared inputs, stopping at the pairs covered by \Cref{lem:b-3:history-leaf}.

\begin{lemma}
\label{lem:b-3:history-extension}
Let $V_0\subseteq\cdots\subseteq V_t$ be nested spaces with $V_0=\spn\{x_1\}$. Suppose agents predict $p_i=P_{V_i}f^*$, with $V_i=\spn\{x_1,p_0,\ldots,p_i\}$ for every $0\le i\le t$. Let $q$ be the prediction of an agent observing $x_1$ and receiving $p_t$. A gadget whose graph and allocation depend only on $t$ computes
\begin{equation*}
 P_{V_t+\spn\{q\}}f^*.
\end{equation*}
Every added agent observes $x_1$ and has at most three parents. For $t\ge2$, the gadget adds at most $4(t-1)$ agents and has $O(\log t)$ additional depth. For $t=0$ or $t=1$, $q$ already equals this prediction, so no agents are added.
\end{lemma}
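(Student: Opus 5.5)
We build the gadget recursively on intervals of the index list $0,\ldots,t$, using \Cref{lem:b-3:history-leaf} at the leaves and \Cref{lem:b-3:history-merge} at internal nodes. The target is $g_{0,t}$, since $W_{0,t}+\spn\{q\}=V_t+\spn\{q\}$ by the hypothesis $V_t=\spn\{x_1,p_0,\ldots,p_t\}$.

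First we dispose of small $t$. For $t=0$, $V_0=\spn\{x_1\}$ and the agent producing $q$ observes $x_1$ and receives $p_0$, so its input space already contains $V_0+\spn\{q\}$. Its inputs also lie in that space, so $q$ is the desired projection. For $t=1$, $p_0\in\spn\{x_1\}$, so $V_1=\spn\{x_1,p_1\}$. Again $q$'s agent observes $x_1$ and receives $p_1$, so $q=P_{V_1+\spn\{q\}}f^*$ by the same argument.

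For $t\ge2$, we define a procedure $\mathrm{Build}(\ell,r)$ producing an agent predicting $g_{\ell,r}$.
\begin{itemize}
\item If $r=\ell+1$, invoke \Cref{lem:b-3:history-leaf}, which uses one or two agents.
\item Otherwise, let $m=\lfloor(\ell+r)/2\rfloor$, so $\ell<m<r$. Recursively build $g_{\ell,m}$ and $g_{m,r}$. Build $g_{m,m}$ with one agent observing $x_1$ and receiving $p_t,p_m,q$, whose input space is exactly $W_{m,m}+\spn\{q\}$. Then add one merging agent observing $x_1$ with these three parents. \Cref{lem:b-3:history-merge} shows that it predicts $g_{\ell,r}$.
\end{itemize}
Every agent observes $x_1$ and has at most three parents. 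The graph depends only on $t$, since the recursion tree and parent choices do not depend on the distribution.

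\textbf{Depth.} Interval lengths halve at each level, so the recursion depth is $O(\log t)$. Each level adds at most two depth on top of its children, and each leaf adds at most two depth on top of $p_\ell,p_r,p_t,q$. Hence the additional depth is $O(\log t)$.

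\textbf{Agent count.} This is the step needing care, since the target bound is $4(t-1)$. Let $N(k)$ be the number of agents for an interval of length $k=r-\ell$. Then $N(1)\le2$ and $N(k)=N(m-\ell)+N(r-m)+2$. Induction gives $N(k)\le4k-2$, because $4a-2+4b-2+2=4(a+b)-2$.

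That already suffices for intermediate intervals, but the root interval $[0,t]$ contains two end leaves that each need only one agent. A leaf interior to $(0,t)$ needs two agents; one touching $0$ or $t$ needs one. Tightening with this gives a total of at most $2\cdot(\#\text{leaves})+2\cdot(\#\text{internal nodes})-2=2t+2(t-1)-2=4t-4$. Here the recursion has $t$ leaves and $t-1$ internal nodes, each internal node contributing two agents (the $g_{m,m}$ agent and the merge). This matches $4(t-1)$. I expect this bookkeeping, not the correctness, to be the only delicate point; correctness follows directly from the two lemmas once the base cases are checked.
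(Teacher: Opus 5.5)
Your proposal is correct and follows the paper's proof essentially step for step. It uses the same midpoint recursion, with \Cref{lem:b-3:history-leaf} at the leaves and, at each internal node, a $g_{m,m}$ agent followed by a merge agent justified by \Cref{lem:b-3:history-merge}, and it reaches the same count $2+2(t-2)+2(t-1)=4(t-1)$.

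One small fix is needed in the base case. Your claim that the inputs of $q$'s agent lie in $V_t+\spn\{q\}$ is unnecessary, and it fails when that agent has extra parents, as the binary-tree root does in \Cref{thm:b-3:fixed}. It suffices that the residual $f^*-q$ is orthogonal to that agent's whole input space, which contains $V_t+\spn\{q\}$, while $q$ itself lies in $V_t+\spn\{q\}$.
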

\begin{proof}
For $t=0$ or $t=1$, the hypothesis on $V_t$ and the fact that $p_0\in\spn\{x_1\}$ give $V_t=\spn\{x_1,p_t\}$. The agent predicting $q$ observes $x_1$ and receives $p_t$, so its residual $f^*-q$ is orthogonal to $x_1$ and $p_t$. The residual is also orthogonal to $q$, because $q$ belongs to that agent's input span. Therefore $q$ belongs to $V_t+\spn\{q\}$ and its residual is orthogonal to this entire space. This proves $q=P_{V_t+\spn\{q\}}f^*$ for these two values of $t$.

Suppose $t\ge2$. We recursively construct an agent predicting $g_{\ell,r}$ for each interval used in the construction. If $r=\ell+1$, use the agents in \Cref{lem:b-3:history-leaf}. For $r-\ell\ge2$, set $m=\lfloor(\ell+r)/2\rfloor$ and construct $g_{\ell,m}$ and $g_{m,r}$ in parallel. Add an agent observing $x_1$ with parents $p_t,p_m,q$. It predicts $P_{\spn\{x_1,p_t,p_m,q\}}f^*=g_{m,m}$, the third parent prediction required by \Cref{lem:b-3:history-merge}. An agent observing $x_1$ and receiving this prediction and the two interval fits then predicts $g_{\ell,r}$. Induction on $r-\ell$ proves that the root computes $g_{0,t}=P_{V_t+\spn\{q\}}f^*$.

The recursion has $t$ leaves, one for each pair $[i,i+1]$, and $t-1$ internal nodes, since every internal node has two children. By \Cref{lem:b-3:history-leaf}, the two end pairs use one agent each and the other $t-2$ pairs use two each. Every internal node uses two agents: one to predict $g_{m,m}$ and one to combine the three predictions using \Cref{lem:b-3:history-merge}. Thus the number of added agents is
\begin{equation*}
 2+2(t-2)+2(t-1)=4(t-1).
\end{equation*}
The leaves need at most two layers. At every internal node, the agent with parents $p_t,p_m,q$ can be computed in parallel with the child intervals, so merging the three predictions adds one more layer. Splitting each interval at its midpoint gives at most $\lceil\log_2t\rceil$ levels of merges. The added depth is therefore at most $2+\lceil\log_2t\rceil=O(\log t)$.

All added agents observe $x_1$ and have at most three parents. The interval splits, the parent choices, and the choice between the one-agent and two-agent constructions do not depend on the distribution. Hence the same gadget works for every distribution with finite second moments.
\end{proof}

We finish by showing that the fixed graph reaches exact aggregation after the prescribed $d-1$ rounds. The key point is that every round before exact aggregation increases the dimension of the space retained by the predictions.

\fixedthreeparents*
\begin{proof}
For $d=1$, the global feature space is $\spn\{x_1\}$, so a source observing $x_1$ already predicts $f^*$. Suppose $d\ge2$, and use the source and the $d-1$ rounds described in \Cref{subsec:b-3:fixed}. Initially,
\begin{equation*}
 V_0=\spn\{x_1\}=\spn\{x_1,p_0\},\qquad p_0=P_{V_0}f^*.
\end{equation*}
Assume that after $t$ rounds, for every $0\le i\le t$, the available predictions satisfy $p_i=P_{V_i}f^*$ and $V_i=\spn\{x_1,p_0,\ldots,p_i\}$, with the spaces nested. We show that the next round preserves these properties.

First suppose $p_t\ne f^*$. The residual $f^*-p_t$ is a nonzero vector in $H$ and is orthogonal to $V_t$. Since the raw features span $H$, some feature has a nonzero inner product with $f^*-p_t$. Otherwise this residual would be orthogonal to all of $H$, including itself. This feature cannot be $x_1\in V_t$, so call it $x_i$ with $i\ge2$. The agent observing $x_i$ and receiving $p_t$ can use any predictor $p_t+\alpha x_i$. Choosing $\alpha=\ip{f^*-p_t}{x_i}/\nrm{x_i}^2$ gives error
\begin{equation*}
 \nrm{f^*-p_t-\alpha x_i}^2
 =\nrm{f^*-p_t}^2-\frac{|\ip{f^*-p_t}{x_i}|^2}{\nrm{x_i}^2}
 <\nrm{f^*-p_t}^2.
\end{equation*}
Its prediction has a smaller or equal error, so at least one of the agents testing a raw feature improves strictly on $p_t$. Each agent in the binary tree can use either child's prediction, so the error cannot increase on the path from that improving agent to the root. Hence the root prediction $q_t$ also improves strictly on $p_t$.

The root observes $x_1$ and receives $p_t$. Thus \Cref{lem:b-3:history-extension} applies and computes
\begin{equation*}
 V_{t+1}=V_t+\spn\{q_t\},\qquad p_{t+1}=P_{V_{t+1}}f^*.
\end{equation*}
The induction hypothesis gives $V_{t+1}=\spn\{x_1,p_0,\ldots,p_t,q_t\}$, the first equality in \eqref{eq:b-3:invariant}. Since $p_t$ is the best prediction in $V_t$ and $q_t$ has smaller error, $q_t\notin V_t$, so $V_{t+1}$ extends $V_t$ by one dimension. The fit $p_{t+1}$ has no larger error than $q_t$, which also forces $p_{t+1}\notin V_t$. Consequently, $V_t+\spn\{p_{t+1}\}$ is a subspace of $V_{t+1}$ with the same dimension. These spaces are equal, proving the second equality in \eqref{eq:b-3:invariant} and completing the induction in this case.

If $p_t=f^*$, every agent testing a raw feature has $f^*$ among its inputs and therefore predicts it with zero error. The same holds throughout the binary tree, so $q_t=f^*$. The gadget then returns $p_{t+1}=f^*$ and $V_{t+1}=V_t$, which preserves the induction hypothesis in this case as well.

All predictions lie in $H$, so the nested spaces also lie in $H$. If $x_1\ne0$, then $\dim V_0=1$ and $\dim H\le d$. Thus there can be at most $d-1$ dimension increases before the retained space equals $H$, at which point its fit is $f^*$. If $x_1=0$, then $\dim V_0=0$ and $\dim H\le d-1$, giving the same bound. Since every round before exact aggregation increases the dimension, and every later round retains $f^*$, the prescribed $d-1$ rounds end with $p_{d-1}=f^*$.

Each improvement tree uses $d-1$ agents to test the features $x_2,\ldots,x_d$ and $d-1$ internal agents to combine their predictions with $p_t$. The gadget adds no agents for $t\le1$ and at most $4(t-1)$ otherwise, by \Cref{lem:b-3:history-extension}. Including the source and all $d-1$ rounds gives at most
\begin{equation*}
 1+2(d-1)^2+4\sum_{t=2}^{d-2}(t-1)\le4d^2
\end{equation*}
agents, taking the sum to be zero when $d\le3$. The agents testing the features add one layer, and the balanced binary tree adds $O(\log d)$ layers. The gadget adds $O(\log t)$ layers for $t\ge2$ and none otherwise. Since $t<d$, every round adds $O(\log d)$ depth, giving total depth $O(d\log d)$.
\end{proof}

\subsection{Selecting features and updating predictions}
\label{app:exact:feature-update}

We now prove \Cref{thm:b-3:exact}. For $f^*\ne0$, choose linearly independent features $x_1,\ldots,x_r$ spanning $H$. For some set $J_t\subseteq[r]$ of $t$ features which we choose later, define
\begin{equation*}
 H_t=\spn\{x_j:j\in J_t\},\qquad
 p_t=P_{H_t}f^*,\qquad
 q_{t,i}=P_{H_t+\spn\{x_i\}}f^*\quad(i\notin J_t).
\end{equation*}
The prediction $p_t$ uses the selected features, and $q_{t,i}$ is the prediction obtained when $x_i$ is also available. We start with $J_0=\varnothing$, $H_0=\{0\}$, and $p_0=0$. If $p_t=f^*$, we stop. Otherwise, select an index $j\notin J_t$ with the smallest nonzero value of $\nrm{q_{t,j}-p_t}^2$, and set $J_{t+1}=J_t\cup\{j\}$ and $p_{t+1}=q_{t,j}$. For $t\ge1$, the agent predicting $q_{t,j}$ will already exist, so this selection requires no new agent. The remaining task is to produce $q_{t+1,i}$ for every $i\notin J_{t+1}$.

For $t\ge1$, each new agent will observe $x_i$ and receive $p_t,p_{t+1},q_{t,i}$. We handle the first update separately in the theorem proof below. If $q_{t,i}=p_t$, that parent supplies no additional vector, so we need to express $q_{t+1,i}$ using $x_i,p_t,p_{t+1}$. To make this update exact, we maintain the condition
\begin{equation}
\label{eq:exact:zero-improvement}
 q_{t,i}=p_t\quad\Longrightarrow\quad x_i\perp H_t
 \qquad(i\notin J_t).
\end{equation}
The condition holds initially because $H_0=\{0\}$. The next lemma shows that our selection rule preserves it and puts each required prediction in the new agent's input span. Its proof also explains why we choose the smallest nonzero improvement.

\begin{lemma}
\label{lem:exact:feature-update}
Suppose $p_t\ne f^*$ and \eqref{eq:exact:zero-improvement} holds. Then $q_{t,i}\ne p_t$ for some $i\notin J_t$. Choose $j\notin J_t$ minimizing $\nrm{q_{t,j}-p_t}^2$ among its nonzero values, and set $J_{t+1}=J_t\cup\{j\}$ and $p_{t+1}=q_{t,j}$. For every $i\notin J_{t+1}$,
\begin{equation*}
 q_{t+1,i}\in
 \begin{cases}
 \spn\{p_t,p_{t+1},q_{t,i}\},&q_{t,i}\ne p_t,\\
 \spn\{p_t,p_{t+1},x_i\},&q_{t,i}=p_t.
 \end{cases}
\end{equation*}
Moreover, \eqref{eq:exact:zero-improvement} holds with $t+1$ in place of $t$.
\end{lemma}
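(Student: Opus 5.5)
The plan is to write everything in terms of the residual $e=f^*-p_t$ and the parts of the features orthogonal to $H_t$. For $i\notin J_t$, set $z_i=x_i-P_{H_t}x_i$. Since $H_t+\spn\{x_i\}=H_t+\spn\{z_i\}$ is an orthogonal sum and $P_{H_t}f^*=p_t$, we get
\begin{equation*}
 q_{t,i}=p_t+c_iz_i,\qquad c_i=\frac{\ip{e}{x_i}}{\nrm{z_i}^2},\qquad \nrm{q_{t,i}-p_t}^2=\frac{\ip{e}{x_i}^2}{\nrm{z_i}^2}.
\end{equation*}
Here $\ip{e}{z_i}=\ip{e}{x_i}$ because $e\perp H_t$. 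The vector $z_i$ is nonzero by the linear independence of $x_1,\ldots,x_r$. Thus $q_{t,i}\ne p_t$ if and only if $\ip{e}{x_i}\ne0$.

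\textbf{Existence.} The residual $e$ is a nonzero vector in $H$ orthogonal to $H_t$. Some $x_i$ therefore has $\ip{e}{x_i}\ne0$, and this $i$ cannot lie in $J_t$. So $q_{t,i}\ne p_t$ for some $i\notin J_t$.

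\textbf{Membership in the span.} Write $u=z_j$ for the selected index $j$. Then $H_{t+1}=H_t\oplus\spn\{u\}$ and $p_{t+1}-p_t=c_ju$ with $c_j\ne0$, so $u\in\spn\{p_t,p_{t+1}\}$.
\begin{itemize}
\item For $i\notin J_{t+1}$, let $w_i=z_i-P_{\spn\{u\}}z_i$. This is the part of $x_i$ orthogonal to $H_{t+1}$. Applying the same orthogonal-sum argument one level up gives $q_{t+1,i}=p_{t+1}+\alpha w_i$ for some scalar $\alpha$.
\item Since $w_i\in\spn\{z_i,u\}$, it suffices to show that $z_i$ lies in the stated span.
\item If $q_{t,i}\ne p_t$, then $z_i=(q_{t,i}-p_t)/c_i\in\spn\{p_t,q_{t,i}\}$.
\item If $q_{t,i}=p_t$, then \eqref{eq:exact:zero-improvement} gives $x_i\perp H_t$, so $z_i=x_i$.
\end{itemize}
In both cases $q_{t+1,i}$ lies in the claimed span.

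\textbf{Preserving \eqref{eq:exact:zero-improvement}.} This is the main step, and the only place where the minimality of the selection is used. Let $e'=f^*-p_{t+1}=e-c_ju$. Since $e'\perp H_{t+1}$, the same formula shows that $q_{t+1,i}=p_{t+1}$ if and only if $\ip{e'}{x_i}=0$. Using $u\perp H_t$, this condition reads
\begin{equation*}
 \ip{e}{x_i}=c_j\ip{u}{z_i}.
\end{equation*}
We split into the same two cases.
\begin{itemize}
\item If $q_{t,i}=p_t$, then $\ip{e}{x_i}=0$ and $c_j\ne0$, so $\ip{u}{x_i}=0$. Combined with $x_i\perp H_t$, this gives $x_i\perp H_{t+1}$, as required.
\item If $q_{t,i}\ne p_t$, we show that $q_{t+1,i}=p_{t+1}$ is impossible, so the implication holds vacuously. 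If it held, Cauchy--Schwarz would give
\begin{equation*}
 \nrm{q_{t,i}-p_t}^2=\frac{c_j^2\ip{u}{z_i}^2}{\nrm{z_i}^2}\le c_j^2\nrm{u}^2=\nrm{q_{t,j}-p_t}^2.
\end{equation*}
The choice of $j$ as the smallest nonzero improvement forces equality here. Equality in Cauchy--Schwarz makes $z_i$ parallel to $u$, so $x_i\in H_t+\spn\{x_j\}$. This contradicts the linear independence of $x_1,\ldots,x_r$.
\end{itemize}

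The whole argument is orthogonal decompositions plus this one Cauchy--Schwarz equality case, which is what the smallest-nonzero-improvement rule is designed to supply.
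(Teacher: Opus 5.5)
Your proof is correct and follows essentially the same route as the paper. It uses the same orthogonal components $x_i-P_{H_t}x_i$, the same existence and span arguments, and the same case split for preserving the invariant. The only local difference is in the case $q_{t,i}\ne p_t$. There the paper notes that $q_{t,i}$ is a distinct element of $H_{t+1}+\spn\{x_i\}$ with no larger error than $p_{t+1}$, so uniqueness of the best prediction rules out $q_{t+1,i}=p_{t+1}$. You reach the same contradiction through the equality case of Cauchy--Schwarz, and both arguments use the minimality rule and linear independence at the same point.
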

\begin{proof}
We first show that a selection is possible. The residual $f^*-p_t$ is a nonzero vector in $H$, so it has a nonzero inner product with some feature among $x_1,\ldots,x_r$. Otherwise the residual would be orthogonal to all of $H$, including itself. That feature must be unselected, since the residual is orthogonal to $H_t$. Call its index $i$. Since $f^*-q_{t,i}\perp x_i$ but $\ip{f^*-p_t}{x_i}\ne0$, the predictions $q_{t,i}$ and $p_t$ must differ and hence $\nrm{q_{t,i}-p_t}^2>0$.

To compare the predictions before and after selecting $j$, separate each remaining feature into its part in $H_t$ and its part orthogonal to $H_t$. For every $i\notin J_t$, define
\begin{equation*}
 u_i=x_i-P_{H_t}x_i.
\end{equation*}
The vectors $u_i$ are linearly independent. Indeed, any nontrivial linear relation among them would express a nontrivial combination of the unselected features as a combination of the selected features, contradicting the independence of $x_1,\ldots,x_r$.

We next prove the two span claims. Since $x_i-u_i=P_{H_t}x_i\in H_t$, we have $H_t+\spn\{x_i\}=H_t+\spn\{u_i\}$. The latter sum is orthogonal, and the projection of $q_{t,i}$ onto $H_t$ is $p_t$, so
\begin{equation*}
 q_{t,i}-p_t=P_{\spn\{u_i\}}f^*.
\end{equation*}
Whenever $q_{t,i}\ne p_t$, the difference $q_{t,i}-p_t$ is therefore a nonzero multiple of $u_i$. In particular, the selected prediction satisfies $p_{t+1}-p_t=q_{t,j}-p_t\ne0$.

Now fix $i\notin J_{t+1}$. Since $H_{t+1}=H_t+\spn\{x_j\}$ and both $x_j-u_j$ and $x_i-u_i$ lie in $H_t$, the space defining $q_{t+1,i}$ can be written as
\begin{equation*}
 H_{t+1}+\spn\{x_i\}=H_t+\spn\{u_j,u_i\}.
\end{equation*}
Both $u_j$ and $u_i$ are orthogonal to $H_t$, so projection onto this space gives
\begin{equation}
\label{eq:exact:two-directions}
 q_{t+1,i}
 =p_t+P_{\spn\{u_j,u_i\}}f^*.
\end{equation}
If $q_{t,i}\ne p_t$, the two differences $p_{t+1}-p_t$ and $q_{t,i}-p_t$ span the same space as $u_j,u_i$. By \eqref{eq:exact:two-directions}, adding $p_t$ to a linear combination of these differences gives $q_{t+1,i}$, so
\begin{equation*}
 q_{t+1,i}\in\spn\{p_t,p_{t+1},q_{t,i}\}.
\end{equation*}
If $q_{t,i}=p_t$, the condition \eqref{eq:exact:zero-improvement} gives $x_i\perp H_t$, so $u_i=x_i$. We can then use $p_{t+1}-p_t$ and $x_i$ to span $u_j,u_i$. The same formula gives
\begin{equation*}
 q_{t+1,i}\in\spn\{p_t,p_{t+1},x_i\},
\end{equation*}
which proves the other span claim.

It remains to prove \eqref{eq:exact:zero-improvement} after the selection. First take $i\notin J_{t+1}$ with $q_{t,i}\ne p_t$. We will show that $q_{t+1,i}\ne p_{t+1}$, using the smallest nonzero improvement rule. Since $p_t$ belongs to the input spaces of both $q_{t,i}$ and $p_{t+1}$, \Cref{lem:least-squares-error} and the choice of $j$ give
\begin{align*}
 \nrm{f^*-q_{t,i}}^2
 &=\nrm{f^*-p_t}^2-\nrm{q_{t,i}-p_t}^2\\
 &\le\nrm{f^*-p_t}^2-\nrm{p_{t+1}-p_t}^2\\
 &=\nrm{f^*-p_{t+1}}^2.
\end{align*}
Thus $q_{t,i}$ has no greater error than $p_{t+1}$. These predictions are distinct: their differences from $p_t$ are nonzero multiples of the independent vectors $u_i,u_j$. Also, $q_{t,i}$ belongs to $H_{t+1}+\spn\{x_i\}$, since it belongs to $H_t+\spn\{x_i\}$ and $H_t\subseteq H_{t+1}$. The best prediction in a subspace is unique, so $p_{t+1}$ cannot be optimal in this space when a distinct available prediction has no greater error. Therefore its best prediction $q_{t+1,i}$ differs from $p_{t+1}$, as claimed.

Now suppose $q_{t+1,i}=p_{t+1}$. We just proved above that $q_{t,i}\ne p_t$ implies $q_{t+1,i}\ne p_{t+1}$, so $q_{t,i}=p_t$ here. Both $q_{t,i}$ and $q_{t+1,i}$ are projections onto spaces containing $x_i$, so
\begin{equation*}
 \ip{f^*-p_t}{x_i}=0,
 \qquad \ip{f^*-p_{t+1}}{x_i}=0.
\end{equation*}
Subtracting gives $\ip{p_{t+1}-p_t}{x_i}=0$. Since $p_{t+1}-p_t$ is a nonzero multiple of $u_j$, this implies $x_i\perp u_j$. The equality $q_{t,i}=p_t$ and \eqref{eq:exact:zero-improvement} also give $x_i\perp H_t$. Hence $x_i\perp H_{t+1}$, since $H_{t+1}=H_t+\spn\{u_j\}$. This proves the condition at the next step.
\end{proof}

We now use \Cref{lem:exact:feature-update} to construct the agents and prove the bounds. Only the first selected feature needs a source. All other predictions will be obtained from this source and the agents added after each selection.

\adaptivethreeparents*
\begin{proof}
If $f^*=0$, every agent predicts zero by \Cref{lem:replace-y-fstar}, so one source suffices. Suppose $f^*\ne0$. We follow the selection rule above and stop as soon as a selected prediction equals $f^*$, using its agent as the output.

Since $H_0=\{0\}$, \eqref{eq:exact:zero-improvement} holds initially. The designer computes the values $q_{0,i}$ to make the first selection, which exists by \Cref{lem:exact:feature-update}, and creates one source observing the selected feature $x_j$. Its prediction is $p_1=q_{0,j}$.

Unless we have stopped, create each $q_{1,i}$ using an agent observing $x_i$ and receiving only $p_1$. Since $p_1$ is a nonzero multiple of $x_j$, its input space is $\spn\{x_i,p_1\}=H_1+\spn\{x_i\}$, so it predicts $q_{1,i}$. By \Cref{lem:exact:feature-update} at $t=0$, \eqref{eq:exact:zero-improvement} holds after this first selection.

After each subsequent selection $t\ge2$, unless we have stopped, for every $i\notin J_t$, add an agent observing $x_i$ and receiving $p_{t-1},p_t,q_{t-1,i}$. By \Cref{lem:exact:feature-update}, applied at step $t-1$, the inputs span $q_{t,i}$ and \eqref{eq:exact:zero-improvement} is preserved. All inputs lie in $H_t+\spn\{x_i\}$, where $q_{t,i}$ is the best prediction, so the agent predicts $q_{t,i}$. Each selection after the first reuses the agent predicting $q_{t,j}$ as the agent predicting $p_{t+1}$.

Each update adds at most one layer, while selecting $p_{t+1}=q_{t,j}$ adds no agent. Starting with $p_1$ at depth one, induction gives $p_t$ by depth $t$ and $q_{t,i}$ by depth $t+1$.

Every selection adds one of the $r$ basis features. If the construction has not stopped earlier, after $r$ selections we have $H_r=H$ and $p_r=f^*$, so the output has depth at most $r$. The construction uses one source and at most $r-t$ new agents after selection $t$, for $1\le t<r$. Its size is therefore at most $1+\sum_{t=1}^{r-1}(r-t)=1+\binom r2$.
\end{proof}

\subsection{The feature order required along a path}
\label{app:b-3:ordered-path}

The following proof refines the propagation argument underlying \citet[Theorem~5.9]{kearns2026networked}. Their theorem gives a depth barrier for the same Gaussian construction without normalization. We track the feature order along paths and include the normalization in the error bound.

\begin{proposition}
\label{lem:b-3:ordered-path}
For the distribution in \eqref{eq:b-3:ordered-features}, fix any graph, single-feature allocation, and output agent $A_G$. Let $\ell$ be the largest prefix length such that some directed path ending at $A_G$ contains agents observing $x_1,\ldots,x_\ell$ in this order, possibly with other agents between them. If $\ell<d$, then
\begin{equation*}
 \MSE(f_G)-\MSE(f^*)\ge\frac1{2d^2(\ell+1)}.
\end{equation*}
\end{proposition}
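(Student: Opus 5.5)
The plan is to show that every agent's prediction is confined to the span of the ordered prefix of features it can reach, and then to compute the distance from $Y$ to that span directly.

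For each agent $A_i$, let $\ell_i$ be the largest $k$ such that some directed path ending at $A_i$ contains agents observing $x_1,\ldots,x_k$ in this order. The key claim is
\begin{equation*}
 f_i\in S_{\ell_i},\qquad S_k=\spn\{x_1,\ldots,x_k\},\quad S_0=\{0\}.
\end{equation*}
I would prove it by induction along a topological order.

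Fix $A_i$ observing $x_k$, and let $m$ be the maximum of $\ell_j$ over its parents, with $m=0$ if it has no parents. By induction, every parent prediction lies in $S_m\subseteq\spn\{Z_0,\ldots,Z_m\}$. Since paths to a parent extend to $A_i$, we have $\ell_i\ge m$. There are two cases.
\begin{itemize}
\item If $k\le m+1$, all inputs lie in $S_{\max(m,k)}$. When $k=m+1$, a parent path realizing prefix $m$ followed by $A_i$ gives $\ell_i\ge m+1$. So in either subcase the fit lies in $S_{\ell_i}$.
\item If $k\ge m+2$, then $x_k$ is built from $Z_{k-1}$ and possibly $Z_k$, with $k-1\ge m+1$. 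Hence $x_k$ is orthogonal both to $Y\propto Z_0$ and to $\spn\{Z_0,\ldots,Z_m\}\supseteq S_m$. Projecting $Y$ onto the orthogonal sum $S'+\spn\{x_k\}$, where $S'$ is the span of the parent predictions, gives a zero component along $x_k$. So $f_i\in S'\subseteq S_m\subseteq S_{\ell_i}$.
\end{itemize}
Agents with no parents fall under the same cases with $m=0$. This case analysis is the main step; the only care needed is that $\ell_i$ is monotone along edges and increases exactly when the next feature in order is appended.

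Applying the claim at the output gives $f_G\in S_\ell$. Since $f^*=Y$, Lemma~\ref{lem:least-squares-error} gives
\begin{equation*}
 \MSE(f_G)-\MSE(f^*)=\nrm{Y-f_G}^2\ge \nrm{Y-P_{S_\ell}Y}^2 .
\end{equation*}
Because $\ell<d$, we have $S_\ell=\spn\{Z_{i-1}-Z_i:1\le i\le\ell\}$, which has codimension one inside $\spn\{Z_0,\ldots,Z_\ell\}$. Its orthogonal complement there is spanned by $Z_0+\cdots+Z_\ell$. Since $Z_0$ lies in that space, its residual is its projection onto this line, which has squared norm $1/(\ell+1)$.

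Scaling by $Y=Z_0/(\sqrt2\,d)$ gives the bound $1/(2d^2(\ell+1))$. The case $\ell=0$ is included: then $S_0=\{0\}$, the residual is $Y$ itself, and the error is $\nrm{Y}^2=1/(2d^2)$.
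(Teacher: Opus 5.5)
Your proposal is correct and takes essentially the same approach as the paper. It uses the same topological induction showing that each prediction lies in the span of the longest ordered feature prefix reaching that agent, with the same split on whether the observed index is at most $m+1$ or beyond it. It then finishes with the same computation of the residual of $Y$ along the line spanned by $Z_0+\cdots+Z_\ell$.
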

\begin{proof}
For each agent $A_v$, let $k(A_v)$ be the length of the longest prefix of the feature order $x_1,\ldots,x_d$ appearing along a directed path ending at $A_v$. We first prove by induction in a topological order that its prediction lies in $\spn\{x_1,\ldots,x_{k(A_v)}\}$.

Let $L$ be the maximum prefix length among the parents of $A_v$, taking $L=0$ if it has no parents. By induction, all parent predictions lie in the span of the first $L$ features. If $L=d$, the induction claim follows because every prediction lies in $H$. Suppose $L<d$. A raw feature $x_j$ with $j>L+1$ involves only Gaussian variables $Z_{L+1},\ldots,Z_{d-1}$. It is therefore orthogonal to both $Y$ and the first $L$ feature span. Adding it to the parents' inputs does not change the projection of $Y$. A feature with $j\le L$ is already in that span. Finally, observing $x_{L+1}$ extends a path containing the first $L$ features in order to one containing the first $L+1$ in order, and the new prediction lies in the first $L+1$ feature span. These cases conclude the induction.

At the output, $k(A_G)=\ell$, so the induction gives $f_G\in\spn\{x_1,\ldots,x_\ell\}$. It remains to bound the error of predictions in this space when $\ell<d$. We do this by computing $P_{\spn\{x_1,\ldots,x_\ell\}}Y$ which is the best prediction in the space $\spn\{x_1,\ldots,x_\ell\}$ for $Y$, and thus it is not a worse prediction than $f_G$ because $f_G$ belongs to the same space.

The variables $Z_0,\ldots,Z_\ell$ are orthonormal in $L^2(\gD)$, since they are independent standard Gaussians. Within their span, a vector is orthogonal to $x_i=(Z_{i-1}-Z_i)/\sqrt2$ exactly when its coefficients on $Z_{i-1}$ and $Z_i$ are equal. Orthogonality to all of $x_1,\ldots,x_\ell$ therefore requires all $\ell+1$ coefficients to be equal. Thus the vectors in this Gaussian span orthogonal to the first $\ell$ features form the line spanned by
\begin{equation*}
 w=Z_0+\cdots+Z_\ell.
\end{equation*}

The label $Y=Z_0/(\sqrt2\,d)$ also lies in $\spn\{Z_0,\ldots,Z_\ell\}$. Its residual after projection onto the first $\ell$ features is therefore its projection onto the line spanned by $w$. Orthonormality gives $\nrm{w}^2=\ell+1$ and $\ip{Y}{w}=1/(\sqrt2\,d)$, so
\begin{equation*}
 Y-P_{\spn\{x_1,\ldots,x_\ell\}}Y
 =\frac{\ip{Y}{w}}{\nrm{w}^2}w
 =\frac{w}{\sqrt2\,d(\ell+1)}.
\end{equation*}
Since $P_{\spn\{x_1,\ldots,x_\ell\}}Y$ minimizes the error over $\spn\{x_1,\ldots,x_\ell\}$ and $f_G$ belongs to this space, we obtain
\begin{equation*}
 \MSE(f_G)
 \ge\nrm{Y-P_{\spn\{x_1,\ldots,x_\ell\}}Y}^2
 =\frac{\nrm{w}^2}{2d^2(\ell+1)^2}
 =\frac1{2d^2(\ell+1)}.
\end{equation*}
Finally, $f^*=Y$ has zero error, so this is also the claimed lower bound on the excess error.
\end{proof}

\depthlowerbounds*
\begin{proof}
For the adaptive designer setting, use the normalized distribution in \eqref{eq:b-3:ordered-features}. It has rank $d$ because the coefficient matrix of the features in $Z_0,\ldots,Z_{d-1}$ is triangular with nonzero diagonal. By \Cref{lem:b-3:ordered-path}, exact aggregation requires a path containing all $d$ features, hence at least $d$ agents. For $D<d$, every path in a depth-$D$ network has at most $D$ agents, so \Cref{lem:b-3:ordered-path} gives excess error at least $1/(2d^2(D+1))$. This gives $\siR{b}{d}{D}>0$ for every $b\ge1$.

For the oblivious designer setting, fix $b\ge2$ and a graph, single-feature allocation, and output agent that achieve exact aggregation for every distribution on $d$ features, with at most $b$ parents per agent. Let $D$ be the output depth.

An agent at depth $s$ has at most $b^{s-1}$ paths from sources to it. A source has one such path. At any other agent, their number is the sum of the path counts at its at most $b$ parents, each of depth at most $s-1$. This proves the bound by induction.

There are therefore at most $b^{D-1}$ source-to-output paths. Each has at most $D$ agents, so choosing $d$ positions on it gives at most $\binom Dd$ possible feature orders. Every path ending at the output can be extended backward to a source. Applying \Cref{lem:b-3:ordered-path} after each relabeling of the features requires these source-to-output paths to contain every permutation of $[d]$. Counting the $d!$ permutations gives
\begin{equation}
\label{eq:b-3:path-count}
 b^{D-1}\binom Dd\ge d!.
\end{equation}

Finally, $\binom Dd\le2^D$, so \eqref{eq:b-3:path-count} implies $(2b)^D\ge d!$. Taking logarithms and using $\log(d!)\ge d\log d-d$ gives the stated order bound.
\end{proof}

\section{Proofs for two parents per agent}
\label{app:two-parents}

\subsection{Combining three predictions exactly}
\label{app:b-2:three-predictions}

For points $a,b$ in a Euclidean plane, write $F(a,b)$ for the point closest to zero on the line through $a,b$, with $F(a,a)=a$. Minimizing the squared norm of $a+\lambda(b-a)$ over $\lambda\in\R$ gives
\begin{equation}
\label{eq:b-2:foot}
 F(a,b)=a-\frac{\ip{a}{b-a}}{\nrm{b-a}^2}(b-a)
 \qquad(a\ne b).
\end{equation}
Thus, when $a\ne b$, $F(a,b)$ is the unique point on the line through $a,b$ that is orthogonal to its direction $b-a$.

To turn pairwise fits into operations on points in a plane, we rescale each prediction so that its component along the desired prediction $g$ is exactly $g$. Subtracting $g$ then leaves a point orthogonal to it. The next lemma shows that a pairwise fit becomes an application of $F$, and the desired prediction becomes zero.

\begin{lemma}
\label{lem:b-2:prediction-map}
Let $V\subseteq L^2(\gD)$ have dimension three, and let $g\in V$ be nonzero. For every nonzero $f\in V$ satisfying $\ip{g}{f}=\nrm{f}^2$, define
\begin{equation}
\label{eq:b-2:prediction-point}
 n(f)=\frac{\nrm{g}^2}{\nrm{f}^2}f-g.
\end{equation}
Then $n(f)\in V\cap\spn\{g\}^\perp$ and
\begin{equation}
\label{eq:b-2:point-prediction}
 f=\frac{\nrm{g}^2}{\nrm{g}^2+\nrm{n(f)}^2}\bigl(g+n(f)\bigr).
\end{equation}
In particular, $n(f)=0$ exactly when $f=g$. For any two such vectors $f_1,f_2$, their fit $q=P_{\spn\{f_1,f_2\}}g$ is nonzero and satisfies
\begin{equation*}
 n(q)=F\bigl(n(f_1),n(f_2)\bigr).
\end{equation*}
\end{lemma}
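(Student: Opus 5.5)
The plan is to prove the four claims in order: the orthogonality of $n(f)$, the inversion formula \eqref{eq:b-2:point-prediction}, the characterization $n(f)=0\iff f=g$, and finally the correspondence between pairwise fits and $F$.

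\textbf{Orthogonality and inversion.} For membership, $n(f)$ is a combination of $f,g\in V$, and
\begin{equation*}
\ip{g}{n(f)}=\frac{\nrm{g}^2}{\nrm{f}^2}\ip{g}{f}-\nrm{g}^2=0
\end{equation*}
by the hypothesis $\ip{g}{f}=\nrm{f}^2$. For the inversion formula, rearranging the definition gives $g+n(f)=(\nrm{g}^2/\nrm{f}^2)f$. Since $n(f)\perp g$, Pythagoras gives
\begin{equation*}
\nrm{g+n(f)}^2=\nrm{g}^2+\nrm{n(f)}^2,
\end{equation*}
while the left side also equals $\nrm{g}^4/\nrm{f}^2$. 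Hence $\nrm{f}^2=\nrm{g}^4/(\nrm{g}^2+\nrm{n(f)}^2)$. Substituting this into $f=(\nrm{f}^2/\nrm{g}^2)(g+n(f))$ yields \eqref{eq:b-2:point-prediction}. If $n(f)=0$, the formula gives $f=g$; conversely $n(g)=0$ directly.

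\textbf{Pairwise fits.} Let $q=P_{\spn\{f_1,f_2\}}g$. It is nonzero because $\ip{g}{f_1}=\nrm{f_1}^2>0$, and it satisfies $\ip{g}{q}=\nrm{q}^2$ by \Cref{lem:least-squares-error}, so $n(q)$ is defined. The key observation is that the map $f\mapsto g+n(f)$ is a positive rescaling of $f$. Every nonzero $h\in\spn\{f_1,f_2\}$ with $\ip{g}{h}\ne0$ can be rescaled to $h'=(\nrm{g}^2/\ip{g}{h})h$, which satisfies $\ip{g}{h'}=\nrm{g}^2$ and hence $h'-g\perp g$. Applied to $f_1,f_2$, this shows that $g+n(f_1)$ and $g+n(f_2)$ lie on the affine plane $\{v:\ip{g}{v}=\nrm{g}^2\}$. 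Within $\spn\{f_1,f_2\}$, the intersection with this plane is the affine line
\begin{equation*}
\{g+n(f_1)+\lambda(n(f_2)-n(f_1)):\lambda\in\R\}.
\end{equation*}
If $n(f_1)=n(f_2)$, then $f_1,f_2$ are proportional by \eqref{eq:b-2:point-prediction}, so $q=f_1$ and the claim holds since $F(a,a)=a$. So we may assume the line is genuine. The rescaled $q$ is $g+n(q)$, with $n(q)$ on this line, and it remains to show that $n(q)$ is the point of the line closest to zero.

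\textbf{The main obstacle} is justifying this minimality rather than just verifying membership in the line. I would use the fact that projecting $g$ onto a span means maximizing $\ip{g}{h}^2/\nrm{h}^2$ over directions $h$ in the span. For $h$ on the affine plane this ratio equals $\nrm{g}^4/(\nrm{g}^2+\nrm{h-g}^2)$, so it is maximized exactly by minimizing $\nrm{h-g}$ over the line. Since $q$ realizes the maximal direction, $n(q)$ is the minimizer, namely $F(n(f_1),n(f_2))$. The degenerate direction with $\ip{g}{h}=0$ is not optimal because $q\ne0$ has $\ip{g}{q}=\nrm{q}^2>0$.
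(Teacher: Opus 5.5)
Your argument is correct. For the first three claims (orthogonality of $n(f)$, the inversion formula via Pythagoras, and $n(f)=0\iff f=g$) it coincides with the paper's proof. You take a different route only for the pairwise-fit claim.

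The paper verifies that claim directly. With $a=n(f_1)$, $b=n(f_2)$, $c=F(a,b)$ and $a\ne b$, it writes $\spn\{f_1,f_2\}=\spn\{g+a,g+b\}=\spn\{g+c\}+\spn\{b-a\}$. The two summands are orthogonal, since $a,b\perp g$ and $c\perp b-a$, and $g$ itself is orthogonal to $b-a$. Hence $q=P_{\spn\{g+c\}}g=\frac{\nrm{g}^2}{\nrm{g}^2+\nrm{c}^2}(g+c)$, and $q\ne0$ and $n(q)=c$ follow by substitution.

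You instead use an extremal characterization. By Cauchy--Schwarz, the direction of $q$ maximizes $\ip{g}{h}^2/\nrm{h}^2$ over the span. Each direction with $\ip{g}{h}\ne0$ meets the slice $\ip{g}{v}=\nrm{g}^2$ at a point $g+a+\lambda(b-a)$, where the ratio equals $\nrm{g}^4/(\nrm{g}^2+\nrm{a+\lambda(b-a)}^2)$. Uniqueness of the closest point to zero on the line through $a,b$ then forces $n(q)=F(a,b)$.

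The paper's version is shorter and gives an explicit formula for $q$. Yours explains why the foot of the perpendicular appears at all: lines through zero in the span correspond to points of the affine line, and fitting selects the point nearest $g$. The cost is invoking the variational property of projections and the uniqueness of the minimizer on the line.

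Two details you leave implicit are harmless. If $n(f_1)=n(f_2)$, the inversion formula gives $f_1=f_2$ outright, not just proportionality. If $n(f_1)\ne n(f_2)$, then $f_1,f_2$ are independent, because distinct points of the slice cannot be proportional; this is what makes the slice's intersection with the span exactly a line.
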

\begin{proof}
The vector $n(f)$ belongs to $V$, and the hypothesis on $f$ gives
\begin{equation*}
 \ip{g}{n(f)}
 =\frac{\nrm{g}^2}{\nrm{f}^2}\ip{g}{f}-\nrm{g}^2=0.
\end{equation*}
Taking squared norms in $g+n(f)=\nrm{g}^2f/\nrm{f}^2$ therefore gives $\nrm{g}^2+\nrm{n(f)}^2=\nrm{g}^4/\nrm{f}^2$. Substitution into the same equality proves \eqref{eq:b-2:point-prediction}. This formula gives $f=g$ when $n(f)=0$, and the definition gives $n(g)=0$.

For the pairwise fit, set $a=n(f_1)$ and $b=n(f_2)$. If $a=b$, \eqref{eq:b-2:point-prediction} gives $f_1=f_2$. The hypothesis $\ip{g}{f_1}=\nrm{f_1}^2$ then gives $q=f_1$, so $n(q)=a=F(a,a)$.

Suppose $a\ne b$, and let $c=F(a,b)$. The line through $a,b$ also passes through $c$, so
\begin{equation*}
 \spn\{f_1,f_2\}
 =\spn\{g+a,g+b\}
 =\spn\{g+c\}+\spn\{b-a\}.
\end{equation*}
For the last equality, $g+c$ is a linear combination of $g+a,g+b$, and each of $g+a,g+b$ differs from $g+c$ by a multiple of $b-a$. The two spaces on the right are orthogonal: $g\perp b-a$ because $a,b\in\spn\{g\}^\perp$, and $c\perp b-a$ by the definition of $F$. Since $g$ is orthogonal to the second space, its projection onto the input span is
\begin{equation*}
 q=P_{\spn\{g+c\}}g
 =\frac{\nrm{g}^2}{\nrm{g}^2+\nrm{c}^2}(g+c).
\end{equation*}
This vector is nonzero and has squared norm $\nrm{g}^4/(\nrm{g}^2+\nrm{c}^2)$. Substituting into the definition of $n(q)$ gives $n(q)=c=F(a,b)$.
\end{proof}

The space $V\cap\spn\{g\}^\perp$ in \Cref{lem:b-2:prediction-map} has dimension two because $\dim V=3$ and $g\ne0$. An orthonormal basis identifies its two coordinates with the real and imaginary parts of a complex number in $\sC$, with $\ip{z}{w}=\Real (z\overline w)$ and norm $|z|$. Multiplication by a nonzero complex number rotates and scales both lines and distances to zero, so
\begin{equation}
\label{eq:b-2:complex-foot}
 F(\eta z,\eta w)=\eta F(z,w).
\end{equation}
The identity also holds for $\eta=0$.

We continue in the complex plane, starting from the three points associated with the input predictions $f_1,f_2,f_3$. Each new point must be obtained by applying $F$ to two available points. Our goal is to produce zero, which corresponds to the desired prediction $g$.

We organize the construction by keeping triples obtained from the initial triple by a common rotation and scaling. For a triple $T=(a,b,c)$ and a complex number $\eta$, we write $\eta T=(\eta a,\eta b,\eta c)$. The following lemma shows how to combine two sequences that produce such triples.

\begin{lemma}
\label{lem:b-2:copy-operations}
Let $T=(a,b,c)$ be a triple of nonzero complex points, and write $zT=(za,zb,zc)$. Suppose fixed sequences of $F$ operations starting from $T$ produce $\alpha T$ and $\beta T$. Running the first sequence on the output of the second produces $\alpha\beta T$, using the sum of their numbers of operations. Alternatively, the three operations
\begin{equation*}
 F(\alpha a,\beta a),\qquad
 F(\alpha b,\beta b),\qquad
 F(\alpha c,\beta c)
\end{equation*}
produce $F(\alpha,\beta)T$, using that sum plus three operations.
\end{lemma}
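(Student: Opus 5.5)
The whole argument rests on the homogeneity identity \eqref{eq:b-2:complex-foot}, namely $F(\eta z,\eta w)=\eta F(z,w)$ for every complex $\eta$, including $\eta=0$. A fixed sequence of $F$ operations is a straight-line program: each step applies $F$ to two points that are already available. Multiplying all the inputs of such a program by $\eta$ therefore multiplies every intermediate point, and every output, by $\eta$. This follows by induction on the number of operations.

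\textbf{Product.} First run the sequence that takes $T$ to $\beta T$. Then apply the sequence that takes $T$ to $\alpha T$, but start it from the triple $\beta T$ in place of $T$. Its initial points are $\beta a,\beta b,\beta c$, so by the homogeneity observation every point it produces is $\beta$ times the corresponding point of the original run. Its outputs are therefore $\beta(\alpha a),\beta(\alpha b),\beta(\alpha c)$, which is $\alpha\beta T$. The number of operations is the sum of the two counts. The only thing to note is that the second sequence is applied to points that are available after the first run, so the composition is a legitimate sequence.

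\textbf{Foot.} Run both sequences from $T$; together they make $\alpha T$ and $\beta T$ available at once, using the sum of their counts. Then apply the three extra operations $F(\alpha a,\beta a)$, $F(\alpha b,\beta b)$, $F(\alpha c,\beta c)$. For each $z\in\{a,b,c\}$, homogeneity with $\eta=z$ gives $F(\alpha z,\beta z)=zF(\alpha,\beta)$. This holds even when $\alpha=\beta$, since $F(a,a)=a$ by convention and the two conventions match. The three results therefore form $F(\alpha,\beta)T$, at the cost of the sum of the counts plus three operations.

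\textbf{Expected obstacle.} The only delicate point is the degenerate case. The definition \eqref{eq:b-2:foot} requires $a\ne b$, so one must check that $F(\eta z,\eta w)=\eta F(z,w)$ also holds when $\eta z=\eta w$, that is, when $z=w$ or $\eta=0$. If $z=w$, both sides equal $\eta z$ by the convention $F(a,a)=a$. If $\eta=0$, both sides are $0$. These cases are immediate, so I expect no real obstacle beyond stating the induction cleanly.
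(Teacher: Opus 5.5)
Your proposal is correct and follows essentially the same route as the paper: you propagate the homogeneity identity $F(\eta z,\eta w)=\eta F(z,w)$ by induction through a fixed sequence to get $\alpha\beta T$ from $\beta T$, and you apply it with $\eta=a,b,c$ to the three extra operations to get $F(\alpha,\beta)T$. Your check of the degenerate cases $z=w$ and $\eta=0$ is a harmless addition. The paper covers the same cases through its convention $F(a,a)=a$ and its remark that the identity also holds for $\eta=0$.
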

\begin{proof}
By \eqref{eq:b-2:complex-foot}, multiplying all three inputs of a fixed sequence by $\beta$ multiplies every intermediate point and every output by $\beta$. This follows by induction over its $F$ operations. Thus the sequence that produces $\alpha T$ from $T$ produces $\alpha\beta T$ when run on $\beta T$.

For the second construction, run both sequences from $T$. Applying \eqref{eq:b-2:complex-foot} to the three stated operations gives the triple
\begin{equation*}
 \bigl(aF(\alpha,\beta),\ bF(\alpha,\beta),\ cF(\alpha,\beta)\bigr)
 =F(\alpha,\beta)T.
\end{equation*}
The first construction uses the operations of both sequences. The second uses those operations and three more.
\end{proof}

Using the above definition, we can work with complex numbers directly rather than triples. Namely, we can represent a complex number $z$ by a corresponding triple $zT$. The next lemma shows that we can also multiply and divide complex numbers, through the operations on triples. More specifically, to support division, we will represent each complex number $z$ by a numerator and denominator triple $NT$ and $DT$, with $z=N/D$. The next lemma shows that we can multiply and divide complex numbers, through the operations on triples.

\begin{lemma}
\label{lem:b-2:quotient-copies}
Let $T=(a,b,c)$ be a triple of nonzero complex points, and suppose fixed sequences of $F$ operations starting from $T$ produce $\alpha T$ and $\beta T$. Every expression $z$ formed from $1,\alpha,\beta$ using multiplication, division by nonzero values, and $F$ has two fixed sequences of $F$ operations producing
\begin{equation*}
 NT=(Na,Nb,Nc),\qquad DT=(Da,Db,Dc),
 \qquad D\ne0,\qquad z=N/D.
\end{equation*}
If $z=0$, the constructed triple $NT$ consists of three zero points.
\end{lemma}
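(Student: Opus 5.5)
We argue by structural induction on the expression $z$, keeping a representation $z=N/D$ with $D\ne0$ and fixed sequences of $F$ operations that produce $NT$ and $DT$ from $T$. For the base cases we take $1=1/1$, with $NT=DT=T$ produced by the empty sequence, and $\alpha=\alpha/1$, $\beta=\beta/1$, using the given sequences for the numerators and the empty sequence for the denominators. Note that $T$ itself is available as the identity triple.

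For multiplication, let $z_1=N_1/D_1$ and $z_2=N_2/D_2$. We set $N=N_1N_2$ and $D=D_1D_2$. Both products come from the first construction in \Cref{lem:b-2:copy-operations}: running the sequence for $N_1T$ on the output $N_2T$ gives $N_1N_2T$, and similarly for the denominators. Then $D\ne0$ since $D_1,D_2\ne0$. For division by a nonzero $z_2$, we have $N_2\ne0$, and we set $N=N_1D_2$ and $D=D_1N_2$, again computed with the composition construction. The lemma only requires the triple $T$ to consist of nonzero points when sequences are applied to it. Applying a sequence to $\beta T$ is justified for any complex $\beta$ by \eqref{eq:b-2:complex-foot}, which also holds for $\eta=0$, so zero factors cause no problem in composition.

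The key step is $F$. We need $F(z_1,z_2)=F(N_1/D_1,N_2/D_2)$. By \eqref{eq:b-2:complex-foot}, this equals $F(N_1D_2,N_2D_1)/(D_1D_2)$. So we form $N_1D_2T$ and $N_2D_1T$ by composition, and then apply the second construction of \Cref{lem:b-2:copy-operations} to these two triples, obtaining $F(N_1D_2,N_2D_1)T$. This is the new numerator. The new denominator is $D_1D_2T$, obtained by composition. The second construction only needs both triples to be produced from $T$ by fixed sequences, which holds here. We must also check that \eqref{eq:b-2:complex-foot} is used with $\eta=1/(D_1D_2)$, which is nonzero, including the degenerate case where $F$ of equal points applies $F(a,a)=a$. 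The scaling identity covers that case too.

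Finally, if $z=0$, then $N=0$ because $D\ne0$, so $NT=(0,0,0)$. All sequences are fixed: they depend only on the syntactic form of the expression, not on the values.

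I expect the main obstacle to be bookkeeping rather than mathematics. We must make sure that \eqref{eq:b-2:complex-foot} is valid for every scaling used, including zero and degenerate $F(a,a)$ inputs, and that sequences may be applied to triples containing zero points. The concern is that the hypothesis requires nonzero points only for $T$ itself. Since $F$ is defined for all pairs and the scaling identity holds for every $\eta$, this should go through.
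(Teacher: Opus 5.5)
Your proof is correct and follows the paper's argument. It uses the same base representations, composes sequences for products, and handles the $F$ step with the cleared-denominator identity $F(N_1/D_1,N_2/D_2)=F(N_1D_2,N_2D_1)/(D_1D_2)$. The only cosmetic difference is that you treat division directly as $(N_1D_2)/(D_1N_2)$, whereas the paper first swaps the numerator and denominator triples to form a reciprocal and then multiplies.
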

\begin{proof}
For the value $1$, use $T$ as both the numerator and denominator triple. For $\alpha$, use $\alpha T$ as the numerator and $T$ as the denominator, and do the same for $\beta$. We extend these choices through the expression using the following three rules.

To multiply two values $N_1/D_1$ and $N_2/D_2$, construct the triples $(N_1N_2)T$ and $(D_1D_2)T$ by composing sequences as in \Cref{lem:b-2:copy-operations}. They are the numerator and denominator triples for the product because
\begin{equation*}
 \frac{N_1}{D_1}\frac{N_2}{D_2}
 =\frac{N_1N_2}{D_1D_2}.
\end{equation*}
The new denominator is nonzero since $D_1,D_2\ne0$.

To take the reciprocal of a nonzero value $N/D$, exchange the two triples and their sequences. The numerator becomes $DT$ and the denominator becomes $NT$, giving the ratio $D/N$. The new denominator is nonzero because $N/D\ne0$. Division by a nonzero value is multiplication by its reciprocal, so it uses only these two rules.

To apply $F$ to two values, \eqref{eq:b-2:complex-foot} gives
\begin{equation}
\label{eq:b-2:quotient-fit}
 F\left(\frac{N_1}{D_1},\frac{N_2}{D_2}\right)
 =\frac{F(N_1D_2,N_2D_1)}{D_1D_2}.
\end{equation}
First construct $(N_1D_2)T$ and $(N_2D_1)T$ by the multiplication rule. The three $F$ operations in \Cref{lem:b-2:copy-operations} then produce $F(N_1D_2,N_2D_1)T$, the numerator triple. Another use of the multiplication rule produces the denominator triple $(D_1D_2)T$, whose coefficient is nonzero.

These rules give the required sequences for every expression. Finally, $N/D=0$ with $D\ne0$ implies $N=0$, so each entry of the numerator triple is zero. Thus a zero ratio gives an actual zero point without performing division.
\end{proof}

Next we construct the first two triples, $T$ and $\mu T$, from the three given points. These two triples are our only building blocks for the rest of the construction.

\begin{lemma}
\label{lem:b-2:scaled-copy}
Let $T_0=(a_0,b_0,c_0)$ be three noncollinear points in $\sC$, ordered so that $|a_0|\ge\max\{|b_0|,|c_0|\}$. Starting from $T_0$, either zero is available after at most five $F$ operations, or two operations give a noncollinear triple $T=(a,b,c)$ of nonzero points from which a fixed sequence of three $F$ operations produces $\mu T=(\mu a,\mu b,\mu c)$ for some $\mu$ satisfying
\begin{equation*}
 F(1,\mu)=\mu,\qquad \mu\notin\R,\qquad 0<|\mu|<1.
\end{equation*}
\end{lemma}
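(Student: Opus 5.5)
The plan rests on a geometric reading of the condition $F(1,\mu)=\mu$. By \eqref{eq:b-2:foot}, for $\mu\ne1$ this says that $\mu$ is orthogonal to $\mu-1$, i.e.\ $|\mu|^2=\Real\mu$. So $\mu$ lies on the circle with diameter $[0,1]$ (Thales), and $\mu\notin\R$, $0<|\mu|<1$ just excludes the two endpoints. By \eqref{eq:b-2:complex-foot}, the same condition says: for every nonzero $z$, the point $\mu z$ is the foot of the perpendicular from $0$ to the line through $z$ and $\mu z$. I will therefore produce $\mu$ as a ratio of two feet. Every point $F(u,v)$ with $u\ne v$ lies on the circle with diameter $[0,u]$, and on the circle with diameter $[0,v]$ whenever $F(u,v)\ne u,v$. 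Hence $F(a,x)/a$ always satisfies the Thales condition.

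\textbf{Step 1: the two preparatory operations.} I would take $a=a_0$ and form $b=F(a_0,b_0)$ and $c=F(a_0,c_0)$; these are the two operations in the statement. Both feet lie on the circle with diameter $[0,a_0]$. They are nonzero unless the line through $a_0$ and $b_0$ (or through $a_0$ and $c_0$) passes through $0$. The ordering $|a_0|\ge\max\{|b_0|,|c_0|\}$ serves to rule out a foot that coincides with $a_0$ or degenerates in the wrong way. If the foot equals $a_0$, then $a_0\perp b_0-a_0$, which together with $|b_0|\le|a_0|$ forces $b_0=a_0$, impossible for a noncollinear triple.

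\textbf{Step 2: degenerate cases.} If some such line passes through $0$, zero is already available after one or two operations. Another degenerate case is $b$, $c$ and $a$ becoming collinear, or a ratio $b/a$ being real. For these I would spend up to three further feet among the available points, for example $F(b,c)$, $F(b_0,c_0)$ and $F(b,b_0)$. The aim is to show that one of them is $0$ or yields a noncollinear nondegenerate configuration; this accounts for the bound of five operations. Noncollinearity of $T_0$ guarantees that not all of these lines can pass through the origin simultaneously.

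\textbf{Step 3: the three operations producing $\mu T$.} I set $\mu=b/a$. Then $\mu$ is on the Thales circle of $[0,1]$, it is nonreal because $b_0$ is off the line $\R a_0$, and $0<|\mu|<1$. The first copy is cheap: $\mu a=b=F(a,b)$, since $b$ is already the foot on line $ab$. For $\mu b$ and $\mu c$ I would use the spiral similarity $z\mapsto\mu z$. It maps the circle with diameter $[0,a]$ onto the circle with diameter $[0,b]$, and by the inscribed-angle theorem it sends a point $z$ of the first circle to the second intersection of the line through $z$ and $b$ with the second circle. That intersection is the foot $F(z,b)$ when the angle at it is right. Concretely, I expect $\mu b=F(b,\cdot)$ and $\mu c=F(c,\cdot)$ with partner points drawn from $\{a,b,c\}$. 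I would verify each identity by writing the points as $a$, $\mu a$, $\nu a$ with $\mu,\nu$ on the Thales circle and checking the orthogonality $(\mu z)\perp(\mu z-w)$ for the chosen partner $w$. The sequence is fixed because the partners are chosen combinatorially, not from the distribution.

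\textbf{Main obstacle.} The hardest step is Step 3: identifying exactly which pairs of points of $T$ have feet equal to $\mu b$ and $\mu c$. The naive choice $F(b,c)$ need not equal $\mu b$, since that needs an extra angle condition. If the direct identification fails, I would redefine $T$ so that its entries form a geometric progression along a spiral. Take $T=(a,b,c)$ with $b=F(a,\cdot)$ and $c=F(b,\cdot)$ chosen so that $c/b=b/a=\mu$. Then $\mu a=F(a,b)=b$ and $\mu b=F(b,c)=c$ hold automatically, and only $\mu c$ requires a genuinely new foot. I would then tune the choice of the first two operations so that this last foot exists among lines through $c$ and an available point.
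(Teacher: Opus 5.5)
\textbf{Gap in Step 3.} Step 3 does not just have unidentified partners. With your choice of $T$, no three-operation copy exists in general. Write $b=F(a_0,b_0)=\mu_1a$ and $c=F(a_0,c_0)=\nu a$, where $\mu_1\neq\nu$ both satisfy $F(1,z)=z$.

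The copying sequence may use only $a,b,c$ and its own outputs, because \Cref{lem:b-2:copy-operations} reruns it on scaled triples $\beta T$. Every foot from $a$ returns $b$ or $c$. So the first useful operation is $F(b,c)=aF(\mu_1,\nu)=\mu_1\nu a$, which is $\mu_1 c$, as your spiral-similarity picture predicts. After that, $F(b,\mu_1\nu a)=F(c,\mu_1\nu a)=\mu_1\nu a$, so the only new point is $\rho a$ with $\rho=F(1,\mu_1\nu)$.

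With $\mu=\mu_1$ you still need $\mu_1b=\mu_1^2a$. Any pair whose foot is $\mu_1^2a$ must lie on the line through $b$ and $\mu_1^2a$. In general none of $a$, $c$, $\mu_1\nu a$, $\rho a$ lies on it. For $a$, this would force $\Real(\mu_1^2)=|\mu_1|^4$, which is false for nonreal $\mu_1$. The same obstruction arises for $\mu=\nu$. Taking $\mu=\rho$ needs $\rho b$ and $\rho c$ separately, so four operations in total.

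So your $T$ yields at best a four-operation copy. That is weaker than the statement, and the counts in \Cref{lem:b-2:planar} rely on three. Your geometric-progression fallback also fails. The ratio $c/b$ is dictated by $c_0$, so you cannot tune it to equal $b/a$. Even then, $\mu c=\mu^3a$ would face the same missing-partner problem.

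\textbf{What is missing.} You need to chain the preparatory feet and take $\mu$ from a fresh foot rather than from $b/a$.

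Set $b=F(a,b_0)$ and $c=F(b,c_0)$. Then $F(a,b)=b$, $F(b,c)=c$, and $0<|c|<|b|<|a|$. Here $c\neq b$ because $b\perp a-b$ and $b\perp c_0-b$, with $a,b,c_0$ noncollinear, would force $b=0$.

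Put $a'=F(a,c)$ and $\mu=a'/a$. Then $F(1,\mu)=\mu$ and $|\mu|\le|c|/|a|<1$. Also $\mu\notin\R$, since the only real solutions of $\Real\mu=|\mu|^2$ are $0$ and $1$.

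The key identity is this: if $z\neq w$ both satisfy $F(1,z)=z$ and $F(1,w)=w$, then $F(z,w)=zw$. Indeed $\ip{zw}{z}=|z|^2\Real w=|zw|^2$, and symmetrically for $w$. So $z$ and $w$ lie on the line through $zw$ perpendicular to $zw$.

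Apply it with $z=b/a$ and $w=\mu$ to get $b'=F(b,a')=\mu b$. Apply it with $z=c/b$ and $w=b'/b=\mu$ to get $c'=F(c,b')=\mu c$. The required distinctness holds because $b$ is off the line through $a$ and $c$, and because $|b'|\le|b||c|/|a|<|c|$.

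The chain supplies a partner on the right line at each step. $\mu b$ is the foot of $b$ and $\mu a$, and $\mu c$ is the foot of $c$ and $\mu b$. Your $T$ offers no partner for $\mu_1^2a$. With this construction your Step 2 is also unnecessary. The only degeneracy is a produced point being zero, which is the first alternative in the statement.
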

\begin{proof}
Start from the ordered triple $T_0=(a_0,b_0,c_0)$, and stop if any initial or newly produced point is zero. Set $a=a_0$ and form $T=(a,b,c)$ using
\begin{equation*}
 b=F(a,b_0),\qquad c=F(b,c_0).
\end{equation*}
Then perform the three operations
\begin{equation}
\label{eq:b-2:chain-copy}
 a'=F(a,c),\qquad b'=F(b,a'),\qquad c'=F(c,b').
\end{equation}
We will show that the first two operations give a noncollinear $T$ and the last three produce $\mu T$, with $\mu=a'/a$ satisfying the stated properties.

The ordering of $T_0$ ensures $b\ne a$. Otherwise, $F(a,b_0)=a$ would give $a\perp b_0-a$, so $|b_0|^2=|a|^2+|b_0-a|^2>|a|^2$, contrary to the choice of $a_0$. Since $b$ lies on the line through $a,b_0$ and differs from $a$, the points $a,b,c_0$ remain noncollinear.

We also have $c\ne b$. If $c=b$, the definition of $F$ would give $b\perp c_0-b$. We already have $b\perp a-b$ from the first operation. These two directions are independent because $a,b,c_0$ are noncollinear, so $b$ would be zero, in which case we would have stopped. Since $c$ lies on the line through $b,c_0$ and differs from $b$, the triple $T=(a,b,c)$ is noncollinear. The perpendicular relations $b\perp a-b$ and $c\perp b-c$ also give
\begin{equation}
\label{eq:b-2:chain}
 F(a,b)=b,\qquad F(b,c)=c,\qquad
 0<|c|<|b|<|a|,
\end{equation}
where the strict norm inequalities follow from Pythagoras and $b\ne a$, $c\ne b$.

Now consider $a'=F(a,c)$ and $\mu=a'/a$. The relation $a'\perp a-a'$ when divided by $a$ gives $F(1,\mu)=\mu$, hence $\ip{1-\mu}{\mu}=\Real \mu-|\mu|^2=0$, and thus $\Real \mu=|\mu|^2$. Since $a'$ is the closest point to zero on a line containing $c$, we have $|a'|\le|c|$. Thus
\begin{equation*}
 0<|\mu|=\frac{|a'|}{|a|}\le\frac{|c|}{|a|}<1.
\end{equation*}
A real number satisfying $\Real \mu=|\mu|^2$ must be zero or one, so $\mu$ is nonreal.

It remains to show that $b'=\mu b$ and $c'=\mu c$. We use the following identity: if distinct $z,w\in\sC$ satisfy $F(1,z)=z$ and $F(1,w)=w$, then
\begin{equation}
\label{eq:b-2:circle-product}
 F(z,w)=zw.
\end{equation}
Indeed, $F(1,z)=z$ gives $\Real z=|z|^2$, including when $z=1$, and likewise for $w$. If $zw=0$, one input is zero and the identity follows. Otherwise,
\begin{equation*}
 \ip{zw}{z}=|z|^2\Real w=|zw|^2,
 \qquad
 \ip{zw}{w}=|w|^2\Real z=|zw|^2.
\end{equation*}
Thus $z,w$ lie on the line through $zw$ perpendicular to $zw$. Since they are distinct, this is their line, and its closest point to zero is $zw$.

To apply this identity to $b'$, we have $F(1,b/a)=b/a$ by \eqref{eq:b-2:chain}, and $F(1,\mu)=\mu$ as proved above. These two numbers are distinct: $a'$ lies on the line through $a,c$, whereas $b$ does not because $T$ is noncollinear. Consequently,
\begin{equation*}
 b'=aF(b/a,a'/a)=a\,(b/a)(a'/a)=\mu b.
\end{equation*}
For $c'$, the two numbers $c/b$ and $b'/b=\mu$ also satisfy the hypotheses of \eqref{eq:b-2:circle-product}. The first satisfies $F(1,c/b)=c/b$ by \eqref{eq:b-2:chain}, and they are distinct because
\begin{equation*}
 |b'|=\frac{|b||a'|}{|a|}
 \le\frac{|b||c|}{|a|}<|c|.
\end{equation*}
Applying the identity gives $c'=bF(c/b,b'/b)=b\,(c/b)(b'/b)=\mu c$. Hence $(a',b',c')=\mu T$. We used two operations to obtain $T$ and the fixed three operations in \eqref{eq:b-2:chain-copy} to obtain $\mu T$.
\end{proof}

If zero has not already appeared, the pairs $(T,T)$ and $(\mu T,T)$ now represent the starting complex numbers $1$ and $\mu$. By \Cref{lem:b-2:quotient-copies}, it remains to find an expression formed from these two values that equals zero.

To obtain zero, we will use the following lemma. We will later show how to construct the inputs $u$, $v$, and $m$ needed to apply this lemma.

\begin{lemma}
\label{lem:b-2:finite-cancellation}
Let $u,v\in\sC$ be distinct points with $\Real u=\Real v=1$, and let $m=(u+v)/2$ be their midpoint. Then $q=F(uv,m^2)$ is purely imaginary, and $F(1,q^2)=0$.
\end{lemma}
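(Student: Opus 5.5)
The plan is a direct coordinate computation: write $u$ and $v$ through their imaginary parts, observe that $uv$ and $m^2$ lie on a common horizontal line, and then read off both feet of perpendiculars from \eqref{eq:b-2:foot}. The description of $F(a,b)$ for $a\ne b$ as the point of the line through $a,b$ closest to zero is all that is needed.

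First, since $\Real u=\Real v=1$, I will write $u=1+is$ and $v=1+it$ with real $s\ne t$, because $u\ne v$. Then $m=1+i(s+t)/2$, and expanding gives
\begin{equation*}
 uv=(1-st)+i(s+t),\qquad m^2=\Bigl(1-\tfrac{(s+t)^2}{4}\Bigr)+i(s+t).
\end{equation*}
The two points have the same imaginary part $s+t$. Their difference is
\begin{equation*}
 uv-m^2=\frac{(s+t)^2}{4}-st=\frac{(s-t)^2}{4},
\end{equation*}
which is a nonzero real number. So $uv\ne m^2$, and the line through them is the horizontal line $\{z:\operatorname{Im}z=s+t\}$. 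The point of this line closest to zero is $i(s+t)$, since it is the point on the line orthogonal to the real direction $uv-m^2$. Hence $q=F(uv,m^2)=i(s+t)$, which is purely imaginary.

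For the second claim, $q^2=-(s+t)^2$ is a real number with $q^2\le0$, so in particular $q^2\ne1$. The line through $1$ and $q^2$ is therefore the real axis. This line contains $0$, so its closest point to zero is $0$, giving $F(1,q^2)=0$. This also covers the degenerate case $s+t=0$: there $q^2=0$, and the line through $1$ and $0$ again contains zero.

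I do not expect a real obstacle. The only points needing care are the nondegeneracy checks: $uv\ne m^2$ follows from $s\ne t$, and $1\ne q^2$ follows from $q^2\le0$. These guarantee that both applications of $F$ use the generic formula \eqref{eq:b-2:foot} rather than the convention $F(a,a)=a$.
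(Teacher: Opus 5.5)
Your proof is correct and takes essentially the same route as the paper: both show that $uv-m^2$ is a nonzero real number, so the line through $uv$ and $m^2$ is horizontal and its closest point $q$ to zero lies on the imaginary axis, and then $q^2\le0$ puts zero on the line through $1$ and $q^2$. The only difference is that the paper obtains $uv-m^2=-(u-v)^2/4$ from the midpoint identity, while you use explicit coordinates, which also give $q=i(s+t)$ directly and make the nondegeneracy checks $uv\ne m^2$ and $q^2\ne1$ explicit.
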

\begin{proof}
The products $uv$ and $m^2$ lie on a horizontal line. Indeed, the midpoint identity gives
\begin{equation*}
 uv-m^2=-\frac{(u-v)^2}{4}>0,
\end{equation*}
because $u-v$ is nonzero and purely imaginary. Thus $uv,m^2$ are distinct and have the same imaginary part. Their line has its closest point to zero on the imaginary axis, so $q$ is purely imaginary. Hence $q^2$ is real and nonpositive, and the line through $1,q^2$ contains zero.
\end{proof}

We use the below identity to construct the inputs $u,v$ and $m$ needed to apply \Cref{lem:b-2:finite-cancellation} and produce a zero.

\begin{lemma}
\label{lem:b-2:reflected-half}
For every nonreal $z\in\sC$ with $\Real z=1$, the value $F(1,z^2)$ is nonzero and
\begin{equation}
\label{eq:b-2:reflected-half}
 \frac1{F(1,z^2)}=\frac{1+\overline z}{2}.
\end{equation}
\end{lemma}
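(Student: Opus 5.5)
The plan is to compute $F(1,z^2)$ directly from the characterization after \eqref{eq:b-2:foot}. When $1\ne z^2$, the point $F(1,w)$ is the unique point on the line through $1$ and $w$ that is orthogonal to the direction $w-1$. So it suffices to exhibit a candidate point $p$ with three properties: $p$ lies on the line through $1$ and $z^2$, $p\perp z^2-1$, and $p$ equals the reciprocal of $(1+\overline z)/2$. I will take
\begin{equation*}
 p=\frac{2}{1+\overline z}=\frac{2(1+z)}{|1+z|^2}.
\end{equation*}
This is well defined, since $\Real z=1$ gives $1+\overline z\ne0$. It is also nonzero, which gives the nonvanishing claim.

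First, I check that $z^2\ne1$. Since $z$ is nonreal, $z\ne\pm1$, so $F(1,z^2)$ is given by the nondegenerate formula.

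Second, I check orthogonality. Write $z=1+iy$ with $y\ne0$. Then $z^2-1=(z-1)(z+1)=iy(1+z)$, and $p$ is a positive real multiple of $1+z$. The inner product $\Real\bigl(p\,\overline{(z^2-1)}\bigr)$ is therefore a real multiple of $\Real\bigl((1+z)\overline{iy(1+z)}\bigr)=\Real\bigl(-iy|1+z|^2\bigr)=0$. So $p\perp z^2-1$.

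Third, I check collinearity, namely that $(p-1)/(z^2-1)$ is real. We have
\begin{equation*}
 p-1=\frac{2-(1+\overline z)}{1+\overline z}=\frac{1-\overline z}{1+\overline z}=\frac{iy}{1+\overline z},
\end{equation*}
since $\overline z=1-iy$. Dividing by $z^2-1=iy(1+z)$ gives $1/\bigl((1+\overline z)(1+z)\bigr)=1/|1+z|^2$, which is real. So $p$ lies on the line through $1$ and $z^2$.

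By uniqueness of the foot of the perpendicular, $F(1,z^2)=p$. Taking reciprocals gives \eqref{eq:b-2:reflected-half}.

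The main step is the collinearity computation. It is where the hypothesis $\Real z=1$ is used in the form $1-\overline z=iy=z-1$. Everything else is routine algebra.
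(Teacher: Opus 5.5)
Your proof is correct, and it takes a different route from the paper's. The paper derives the value: it substitutes $a=1$, $b=z^2$ into \eqref{eq:b-2:foot}. It then simplifies the numerator to $\Real(z^2-1)=-|z-1|^2$ and the denominator to $|z-1|^2|z+1|^2$. After cancelling, it gets $F(1,z^2)=1+(z^2-1)/|z+1|^2$, and a common denominator turns this into $2/(1+\overline z)$.

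You instead guess $p=2/(1+\overline z)$ and certify it through the uniqueness characterization stated right after \eqref{eq:b-2:foot}. You check that $p$ is orthogonal to $z^2-1$ and that $(p-1)/(z^2-1)$ is real, using $z=1+iy$. All three of your checks, including $z^2\ne1$, are right.

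Your verification is shorter. It also isolates exactly where $\Real z=1$ enters, namely $1-\overline z=z-1=iy$. And it checks the nondegeneracy $1\ne z^2$ explicitly before invoking the formula, which the paper leaves implicit in the nonvanishing of $|z-1|^2|z+1|^2$. The paper's computation has the complementary advantage that it produces the closed form without needing to know it in advance.
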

\begin{proof}
We compute $F(1,z^2)$ directly and show that it equals $2/(1+\overline z)$. By \eqref{eq:b-2:foot},
\begin{align*}
 F(1,z^2)
 &=1-\frac{\ip{1}{z^2-1}}{|z^2-1|^2}(z^2-1)\\
 &=1-\frac{\Real(z^2-1)}{|z^2-1|^2}(z^2-1).
\end{align*}
Here $\ip{1}{z^2-1}=\Real\left(\overline{z^2-1}\right)=\Real(z^2-1)$, since conjugation leaves the real part unchanged.

We simplify the numerator first. Because $\Real z=1$ and $z$ is nonreal, $z-1$ is nonzero and purely imaginary and its square is $-|z-1|^2$. Hence
\begin{equation*}
 \Real(z^2-1)
 =\Real\bigl((z-1)^2+2(z-1)\bigr)
 =-|z-1|^2.
\end{equation*}
For the denominator, factor $z^2-1=(z-1)(z+1)$:
\begin{equation*}
 |z^2-1|^2
 =|(z-1)(z+1)|^2
 =|z-1|^2|z+1|^2.
\end{equation*}
Substituting into the formula for $F(1,z^2)$ now gives
\begin{align*}
 F(1,z^2)
 &=1-\frac{-|z-1|^2}{|z-1|^2|z+1|^2}(z^2-1)\\
 &=1+\frac{z^2-1}{|z+1|^2}.
\end{align*}
Here we cancel $|z-1|^2$, which is nonzero because $z\ne1$.

To simplify the remaining fraction, use $|z+1|^2=(z+1)(1+\overline z)$ and $z^2-1=(z+1)(z-1)$. Bringing the two terms to a common denominator gives
\begin{align*}
 1+\frac{z^2-1}{|z+1|^2}
 &=\frac{(z+1)(1+\overline z)+(z+1)(z-1)}{(z+1)(1+\overline z)}\\
 &=\frac{(z+1)(z+\overline z)}{(z+1)(1+\overline z)}\\
 &=\frac{z+\overline z}{1+\overline z}
 =\frac2{1+\overline z}.
\end{align*}
We can cancel $z+1$ because its real part is two and hence it is nonzero, and the last equality uses $z+\overline z=2\Real z=2$. The final fraction is defined and nonzero because $\Real(1+\overline z)=2$ and hence it is nonzero. Taking its reciprocal proves the identity.
\end{proof}

We are now ready to combine the above lemmas to produce zero from any three noncollinear points in a plane.

\begin{lemma}
\label{lem:b-2:planar}
Starting from three noncollinear points in $\sC$, applying $F$ to available pairs produces zero using at most $300$ operations.
\end{lemma}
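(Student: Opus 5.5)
The plan is to combine \Cref{lem:b-2:scaled-copy}, \Cref{lem:b-2:quotient-copies}, \Cref{lem:b-2:reflected-half} and \Cref{lem:b-2:finite-cancellation}. First order the three given noncollinear points so that the first has the largest modulus and apply \Cref{lem:b-2:scaled-copy}. Either zero appears within five operations and we are done, or we obtain a noncollinear triple $T$ of nonzero points together with a fixed three-operation sequence producing $\mu T$. Here $F(1,\mu)=\mu$, $\mu\notin\R$ and $0<|\mu|<1$. By \Cref{lem:b-2:quotient-copies}, it then suffices to write down an explicit expression in $1$ and $\mu$, built with multiplication, division by nonzero values and $F$, whose value is $0$. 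Its numerator triple then consists of zero points, and the zero point is what we want.

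For the expression, the key observation is that $F(1,\mu)=\mu$ means $\Real\mu=|\mu|^2$. Hence $u=1/\mu$ satisfies
\begin{equation*}
 \Real u=\frac{\Real\overline\mu}{|\mu|^2}=1,
\end{equation*}
and $u$ is nonreal. Write $u=1+iy$ with $y\ne0$. \Cref{lem:b-2:reflected-half} then gives a second point with real part one:
\begin{equation*}
 v=\frac1{F(1,u^2)}=\frac{1+\overline u}{2}=1-\frac{iy}{2},
\end{equation*}
which is nonreal and distinct from $u$. Applying the same lemma to $v$ gives
\begin{equation*}
 \frac1{F(1,v^2)}=\frac{1+\overline v}{2}=1+\frac{iy}{4}=\frac{u+v}{2},
\end{equation*}
which is exactly the midpoint $m$ required by \Cref{lem:b-2:finite-cancellation}. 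That lemma then gives $F\bigl(1,F(uv,m^2)^2\bigr)=0$. The full expression is therefore
\begin{equation*}
 F\bigl(1,F(uv,m^2)^2\bigr),\qquad u=1/\mu,\quad v=1/F(1,u^2),\quad m=1/F(1,v^2).
\end{equation*}
Every division in it is by a nonzero value: $\mu\ne0$, and the values $F(1,u^2)$ and $F(1,v^2)$ are nonzero by \Cref{lem:b-2:reflected-half}. So \Cref{lem:b-2:quotient-copies} applies.

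The remaining step, which I expect to be the main obstacle, is checking the bound of $300$ operations. I would tally numerator and denominator costs through the rules of \Cref{lem:b-2:quotient-copies}. Reciprocals are free, since they only swap triples. Each product costs the sum of the operation counts of the sequences being composed. Each $F$ of two fractions costs the cross-multiplications, plus three operations, plus the denominator product.

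The subtlety is that the counts in \Cref{lem:b-2:copy-operations} add, so an expression that reuses a subexpression would be charged twice. I will therefore track, for each intermediate value, the costs of its numerator and denominator triples separately, starting from the cost of $\mu T$, which is three operations on top of the two needed to form $T$. I will also simplify where possible, for example by representing $u$ as the pair $(T,\mu T)$ and computing $F(1,u^2)$ directly by cross-multiplication. If the naive count exceeds $300$, the fallback is to note that already-built triples can be reused rather than recomputed, since they are points present in the network, and to count each distinct triple only once. That yields a bound comfortably below $300$.
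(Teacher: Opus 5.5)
Your construction and its verification coincide with the paper's. You use the same chain $u=1/\mu$, $v=1/F(1,u^2)$, $m=1/F(1,v^2)$, $q=F(uv,m^2)$ ending in $F(1,q^2)=0$, with the same nonvanishing checks, and that part is correct. What is missing is the bound of $300$ itself. Beyond the preceding lemmas it is the only quantitative content of the statement, and you assert it rather than prove it.

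It is not a formality. Write $D_v=F(\mu^2,1)$ and $D_m=F(D_v^2,\mu^4)$. Tallying literally with the rules of \Cref{lem:b-2:quotient-copies}, $D_v$ costs $9$ operations, $D_m$ costs $33$ and $D_m^2$ costs $66$. The numerator of $q$ then costs more than $110$ and its denominator more than $70$. The last step needs $q^2$, which doubles both, so the total comes out near $400$. Your primary plan therefore fails, and everything rests on the fallback.

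Keeping every produced triple and charging only new operations is the right fix, and it is what the paper does. But it saves less than you suggest. Reuse only avoids rebuilding a triple that is needed twice. Running a complete sequence on a new input triple is required for $D_v^2$, $D_m^2$ and above all for the final squaring of $q$'s numerator and denominator. Such a run produces triples that all carry the new factor, so it costs essentially the full length of that sequence.

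In the paper's accounting those two sequences for $q$ have lengths $93$ and $66$. The last step alone therefore costs $93+66+3=162$ of a total of $263$, leaving a margin of only $37$, which is not comfortable. To close the gap you must carry out an explicit tally, as in \Cref{table:b-2:operation-counts}: $2$ for $T$, $3$ for $\mu T$, $6$ for $v$, $18$ for $m$, nothing for $uv=\mu^2/(\mu D_v)$, $45$ for $m^2$, $27$ for $q$, and $162$ for the final step.
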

\begin{proof}
Use \Cref{lem:b-2:scaled-copy} to obtain a triple $T=(a,b,c)$ of nonzero points and a sequence of three $F$ operations producing $\mu T$, stopping if zero appears. We use \Cref{lem:b-2:quotient-copies} with $\alpha=\mu$ and $\beta=1$ to construct separate numerator and denominator triples for the expressions
\begin{equation}
\label{eq:b-2:midpoint-construction}
 u=\frac1\mu,\qquad
 v=\frac1{F(1,u^2)},\qquad
 m=\frac1{F(1,v^2)},\qquad
 q=F(uv,m^2).
\end{equation}
For $u=1/\mu$ we use the numerator triple $T$ and denominator triple $\mu T$. For $v$, \eqref{eq:b-2:quotient-fit} gives $v=\mu^2/F(\mu^2,1)$, so we construct the numerator triple $\mu^2T$ and denominator triple $F(\mu^2,1)T$ using \Cref{lem:b-2:copy-operations}. We generate other values in a similar manner as \Cref{lem:b-2:quotient-copies} allows us to do. We first check that all denominators are nonzero and $m$ is the midpoint of $u,v$.

To apply \Cref{lem:b-2:reflected-half} to $u$, we need $\Real u=1$ and $u$ nonreal. Since $\mu\ne0$, we have $u=1/\mu=\overline\mu/|\mu|^2$. Using $\Real\mu=|\mu|^2$ and the fact that $\mu$ is nonreal gives
\begin{equation*}
 \Real u=\frac{\Real\mu}{|\mu|^2}=1,
 \qquad
 \operatorname{Im}u=-\frac{\operatorname{Im}\mu}{|\mu|^2}\ne0.
\end{equation*}
Thus we have that $F(1,u^2)\ne0$ and by applying \Cref{lem:b-2:reflected-half}, we get $v=(1+\overline u)/2$. In turn, $\Real v=(1+\Real u)/2=1$ and $\operatorname{Im}v=-\operatorname{Im}u/2\ne0$. The imaginary parts of $u,v$ have opposite signs, so $u\ne v$.

We can therefore apply \Cref{lem:b-2:reflected-half} to $v$ as well. It gives $F(1,v^2)\ne0$ and
\begin{equation*}
 m=\frac{1+\overline v}{2}
 =\frac{3+u}{4}
 =\frac{2u+1+\overline u}{4}
 =\frac{u+v}{2}.
\end{equation*}
Here we substitute $v=(1+\overline u)/2$ and use $u+\overline u=2$, which follows from $\Real u=1$. This verifies that all reciprocals in \eqref{eq:b-2:midpoint-construction} are defined and that $m$ is the midpoint of the distinct numbers $u,v$, both with real part one. Applying \Cref{lem:b-2:finite-cancellation} now shows that $q$ is purely imaginary and $F(1,q^2)=0$.

Applying \Cref{lem:b-2:quotient-copies} to the expression $F(1,q^2)=0$ therefore produces a numerator triple of zero points.

For the size bound, we keep every intermediate triple and count only new operations. We give the counts in \Cref{table:b-2:operation-counts} and explain each step below.
\begin{table}[H]
\centering
\caption{Operation counts for the construction.}
\label{table:b-2:operation-counts}
\begin{tabular}{lr}
\toprule
Step & New $F$ operations (at most) \\
\midrule
Prepare $T$ & $2$ \\
Prepare $\mu T$ & $3$ \\
Represent $u$ & $0$ \\
Represent $v$ & $6$ \\
Represent $m$ & $18$ \\
Represent $uv$ & $0$ \\
Represent $m^2$ & $45$ \\
Represent $q$ & $27$ \\
Produce the numerator triple for $F(1,q^2)$ & $162$ \\
\midrule
Total & $263$ \\
\bottomrule
\end{tabular}
\end{table}

Preparing $T$ uses the two operations in \Cref{lem:b-2:scaled-copy}. The same lemma gives a sequence of three operations producing $\mu T$. Together with the available triple $T$, this represents $u=1/\mu$. By \Cref{lem:b-2:copy-operations}, running this sequence on any available triple $\eta T$ produces $\mu\eta T$ in three new operations. We will use this sequence throughout the count.

To represent $v$, first run the sequence on $\mu T$ to obtain $\mu^2T$, using three new operations. Define $D_v=F(\mu^2,1)$, so $v=\mu^2/D_v$ by \eqref{eq:b-2:quotient-fit}. The three $F$ operations in \Cref{lem:b-2:copy-operations}, applied to the triples $\mu^2T$ and $T$, produce $D_vT$. Thus representing $v$ costs $3+3=6$ new operations. Starting from $T$ alone, the complete sequence producing $D_vT$ uses $3+3+3=9$ operations.

For $m$, the same quotient rule gives
\begin{equation*}
 m=\frac1{F(1,\mu^4/D_v^2)}
 =\frac{D_v^2}{F(D_v^2,\mu^4)}.
\end{equation*}
Define $D_m=F(D_v^2,\mu^4)$. To obtain the numerator triple $D_v^2T$, run the nine-operation sequence for $D_vT$ on the stored triple $D_vT$. This also produces $\mu D_vT$ along the way, which we keep. Next, starting from the stored triple $\mu^2T$, run the three-operation sequence twice to obtain $\mu^4T$, using six new operations. Three more $F$ operations on $D_v^2T$ and $\mu^4T$ produce the denominator triple $D_mT$. The new cost for $m$ is therefore $9+6+3=18$. Including the nine operations used before this step, we have a complete sequence of $27$ operations producing $D_mT$ from $T$.

The product $uv=\mu^2/(\mu D_v)$ needs no new operations. Its numerator triple $\mu^2T$ was produced for $v$, and its denominator triple $\mu D_vT$ was kept while constructing $m$.

To represent $m^2=D_v^4/D_m^2$, start from $D_v^2T$ and run the nine-operation sequence twice, obtaining $D_v^3T$ and then $D_v^4T$. This costs $18$ new operations. Running the complete sequence for $D_mT$ on its stored output $D_mT$ gives $D_m^2T$ in $27$ more operations. Thus this step costs $18+27=45$ new operations.

For $q=F(uv,m^2)$, clearing denominators by \eqref{eq:b-2:quotient-fit} gives
\begin{equation*}
 q=\frac{F(\mu^2D_m^2,\mu D_v^5)}{\mu D_vD_m^2}.
\end{equation*}
We first construct its numerator triple. Starting from $D_m^2T$, two uses of the three-operation sequence give $\mu^2D_m^2T$, at a cost of six operations. Starting from $D_v^4T$, the nine-operation sequence gives $D_v^5T$, and three further operations give $\mu D_v^5T$. Three $F$ operations on these two triples then produce the numerator triple for $q$. Its new cost is $6+(9+3)+3=21$.

For the denominator of $q$, we reuse the triples $\mu^2D_m^2T$ and $D_m^2T$. Three $F$ operations on them produce $D_vD_m^2T$, since $F(\mu^2D_m^2,D_m^2)=D_m^2F(\mu^2,1)=D_m^2D_v$ by \eqref{eq:b-2:complex-foot}. The three-operation sequence then gives $\mu D_vD_m^2T$. Thus the denominator costs six new operations, and the total new cost for $q$ is $21+6=27$.

Finally, to represent $F(1,q^2)$, we need to run the sequences for the numerator and denominator of $q$ on their own output triples. We therefore count the length of each complete sequence starting from $T$. The numerator sequence consists of the steps for $u,v,m,m^2$, followed by the $21$ numerator operations for $q$, so its length is at most
\begin{equation*}
 3+6+18+45+21=93.
\end{equation*}
For the denominator sequence, first produce $D_mT$ in $27$ operations and then $D_m^2T$ in $27$ more. Apply the nine-operation sequence to obtain $D_vD_m^2T$, followed by three operations to obtain $\mu D_vD_m^2T$. This sequence has length $27+27+9+3=66$.

Running each of these sequences on its own output squares its coefficient by \Cref{lem:b-2:copy-operations}, so the two resulting triples represent $q^2$. This costs at most $93+66$ new operations. Three $F$ operations on those triples give the numerator triple for $F(1,q^2)$ by \eqref{eq:b-2:quotient-fit}. That triple is zero, as proved above. The final step therefore costs at most $93+66+3=162$ operations. Adding the entries of \Cref{table:b-2:operation-counts} gives $263<300$ operations in total.
\end{proof}

The construction now gives an exact fit from three predictions.
\begin{lemma}
\label{lem:b-2:pairwise-projections}
Let $g,f_1,f_2,f_3\in L^2(\gD)$ satisfy $\ip{g}{f_i}=\nrm{f_i}^2$ for $i=1,2,3$. Starting from $f_1,f_2,f_3$, at most $300$ projections of $g$ onto spans of at most two available vectors produce $P_{\spn\{f_1,f_2,f_3\}}g$.
\end{lemma}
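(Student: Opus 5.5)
Let $V=\spn\{f_1,f_2,f_3\}$ and $h=P_Vg$. I would reduce to the planar construction of \Cref{lem:b-2:planar}, using \Cref{lem:b-2:prediction-map} with $h$ in the role of the target. Fitting $g$ from vectors in $V$ gives the same result as fitting $h$, because $g-h\perp V$. So every pairwise projection $P_{\spn\{u,w\}}g$ with $u,w\in V$ equals $P_{\spn\{u,w\}}h$. The hypothesis $\ip{g}{f_i}=\nrm{f_i}^2$ also gives $\ip{h}{f_i}=\nrm{f_i}^2$. Every projection of $g$ onto a subspace of $V$ satisfies the same identity by \Cref{lem:least-squares-error}, and it stays in $V$. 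Hence all vectors produced meet the hypotheses of \Cref{lem:b-2:prediction-map}.

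\textbf{Degenerate cases.} I handle these first.
\begin{itemize}
\item If $h=0$, then each $f_i$ satisfies $\nrm{f_i}^2=\ip{h}{f_i}=0$. So all inputs are zero, and any single projection returns $0=h$.
\item If $\dim V\le2$, I would show that at most three pairwise fits suffice. If $f_1,f_2$ already span $V$, one fit gives $h$. Otherwise some pair, say $f_1,f_3$, spans $V$, and its single fit gives $h$. If $\dim V=1$, any nonzero $f_i$ spans $V$.
\end{itemize}
A zero input can be dropped from every span without changing it. This covers every case except $\dim V=3$ with $h\ne0$, where all $f_i$ are nonzero.

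\textbf{Main case.} Suppose $\dim V=3$ and $h\ne0$. Apply \Cref{lem:b-2:prediction-map} with this $V$ and $g:=h$. The points $n(f_1),n(f_2),n(f_3)$ lie in the plane $V\cap\spn\{h\}^\perp$, identified with $\sC$. Each pairwise fit $P_{\spn\{u,w\}}g$ corresponds to applying $F$ to $n(u),n(w)$. The resulting vector is nonzero, satisfies the hypothesis again, and so can be fed into later steps.

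\textbf{The key step (the main obstacle).} I must check that $n(f_1),n(f_2),n(f_3)$ are noncollinear, as \Cref{lem:b-2:planar} requires. By \eqref{eq:b-2:point-prediction}, each $f_i$ is a nonzero multiple of $h+n(f_i)$. So $f_1,f_2,f_3$ being linearly independent is equivalent to $h+n(f_i)$ being independent. Suppose the three points lay on a line $\{c+\lambda e\}$ in the plane. Then every $h+n(f_i)$ would lie in $\spn\{h+c,e\}$, which has dimension $2$, a contradiction. Noncollinearity follows.

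\textbf{Conclusion.} \Cref{lem:b-2:planar} then produces the point $0$ within $300$ applications of $F$. Each $F$ is realized as one pairwise projection of $g$ onto two available vectors. The vector whose point is $0$ is exactly $h$, because $n(f)=0$ iff $f=h$.

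One further caveat: \Cref{lem:b-2:planar} and \Cref{lem:b-2:scaled-copy} may stop early, when zero appears already among the initial or intermediate points. That is still fine, since the corresponding vector is then $h$.
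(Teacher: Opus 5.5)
Your proposal is correct and follows the paper's proof essentially step for step. Like the paper, you replace $g$ by $P_Vg$ and settle the cases $P_Vg=0$ and $\dim V\le 2$ with at most one fit (your ``at most three'' is only an overcount). In the three-dimensional case you likewise derive noncollinearity of $n(f_1),n(f_2),n(f_3)$ from \eqref{eq:b-2:point-prediction} and then run \Cref{lem:b-2:planar} through \Cref{lem:b-2:prediction-map}.
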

\begin{proof}
Set $V=\spn\{f_1,f_2,f_3\}$. Replacing $g$ by $P_Vg$ preserves its inner products and projections within $V$, so assume $g\in V$. If $g=0$, all inputs are zero by the hypothesis. If $\dim V\le2$, fit from a pair spanning $V$, or one input if its dimension is one.

For $\dim V=3$, it suffices to show that $n(f_1),n(f_2),n(f_3)$ are noncollinear. \Cref{lem:b-2:planar} then reaches zero, and \Cref{lem:b-2:prediction-map} implements every step by a pairwise fit, preserving the number of operations.

If the three points lay on a line through $a$ with direction $b$, each $g+n(f_i)$ would lie in $\spn\{g+a,b\}$. By \eqref{eq:b-2:point-prediction}, so would each $f_i$, contradicting $\dim V=3$. This proves the required noncollinearity.
\end{proof}

To also account for the raw feature each agent observes, we will need to keep its contribution in every prediction. The following lemma shows that this is possible.

\begin{lemma}
\label{lem:b-2:three-predictions}
Let $f_1,f_2,f_3$ be agent predictions and let $x_j$ be a raw feature with $Y-f_i\perp x_j$ for $i=1,2,3$. At most $300$ added agents, all observing $x_j$ and having at most two parents, suffice to produce
\begin{equation*}
 P_{\spn\{x_j,f_1,f_2,f_3\}}Y.
\end{equation*}
\end{lemma}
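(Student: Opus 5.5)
The approach is to split off the component along $x_j$ and reduce to \Cref{lem:b-2:pairwise-projections}. If $x_j=0$, every added agent effectively sees only its parents, and the claim is exactly \Cref{lem:b-2:pairwise-projections} with $g=P_{\spn\{f_1,f_2,f_3\}}Y$. In that case the hypothesis $\ip{g}{f_i}=\nrm{f_i}^2$ holds because $Y-f_i\perp f_i$; each $f_i$ is a fit from a space containing it, so $\ip{Y}{f_i}=\nrm{f_i}^2$ by \Cref{lem:least-squares-error}. Assume $x_j\ne0$ and write $\Pi$ for the orthogonal projection onto $\spn\{x_j\}^\perp$. Set $\tilde f_i=\Pi f_i$ and $\tilde Y=\Pi Y$. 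For any finite set of vectors $S$, the projection decomposes as
\begin{equation*}
 P_{\spn(\{x_j\}\cup S)}Y=P_{\spn\{x_j\}}Y+P_{\spn\Pi S}\tilde Y,
\end{equation*}
since $\spn(\{x_j\}\cup S)=\spn\{x_j\}\oplus\spn\Pi S$ orthogonally.

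Every added agent observes $x_j$. Its prediction is therefore $P_{\spn\{x_j\}}Y$ plus the two-input fit of $\tilde Y$ from the $\Pi$-images of its parents. By induction, each prediction $h$ produced in the gadget satisfies $Y-h\perp x_j$, and its image $\Pi h$ is exactly the pairwise fit of $\tilde Y$ from the images of its parents. The initial predictions also fit this picture. Since $Y-f_i\perp x_j$, each $f_i$ equals $P_{\spn\{x_j\}}Y+\tilde f_i$: the $x_j$-component of $f_i$ is $\ip{f_i}{x_j}x_j/\nrm{x_j}^2=\ip{Y}{x_j}x_j/\nrm{x_j}^2$. So the gadget's dynamics on the $\Pi$-images are exactly pairwise projections of $\tilde Y$, starting from $\tilde f_1,\tilde f_2,\tilde f_3$.

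To apply \Cref{lem:b-2:pairwise-projections} with $g=\tilde Y$ (or its projection onto $\spn\{\tilde f_i\}$), I need $\ip{\tilde Y}{\tilde f_i}=\nrm{\tilde f_i}^2$. This follows from the decomposition of $f_i$: since $f_i$ is a fit, $\ip{Y}{f_i}=\nrm{f_i}^2$, and subtracting the $x_j$-parts, which satisfy the same identity, gives the claim. The lemma then supplies at most $300$ pairwise projections ending at $P_{\spn\{\tilde f_1,\tilde f_2,\tilde f_3\}}\tilde Y$. Each projection is realized by one new agent observing $x_j$ with the two corresponding vectors as parents.

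The final agent outputs $P_{\spn\{x_j\}}Y+P_{\spn\{\tilde f_i\}}\tilde Y$, which by the displayed decomposition equals $P_{\spn\{x_j,f_1,f_2,f_3\}}Y$. The main obstacle is making sure that every intermediate prediction used as a parent has the correct form, meaning its $\Pi$-image is the abstract pairwise fit. This requires the induction above, which needs $\Pi$ to commute with the fitting step when $x_j$ is always observed. The degenerate cases where some image vanishes or a single-input fit is used are covered because the lemma allows spans of at most two vectors.
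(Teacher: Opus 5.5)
Your proposal is correct and follows the paper's proof. You split off $h_0=P_{\spn\{x_j\}}Y$, check that the parts $\tilde f_i=f_i-h_0$ orthogonal to $x_j$ satisfy the hypothesis of \Cref{lem:b-2:pairwise-projections}, and realize each pairwise projection by one agent observing $x_j$, maintaining inductively that every available vector $v$ is supplied by an agent predicting $h_0+v$. The differences are cosmetic: you project the label to $\tilde Y$, which gives the same projections onto spans orthogonal to $x_j$, and you treat $x_j=0$ separately, a case the paper's argument covers without special handling.
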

\begin{proof}
Let $h_0=P_{\spn\{x_j\}}Y$ and $h_i=f_i-h_0$. Since $Y-f_i\perp x_j$ and $Y-h_0\perp x_j$, their difference also satisfies $h_i\perp x_j$. The input space is the orthogonal sum of $\spn\{x_j\}$ and $\spn\{h_1,h_2,h_3\}$, so the desired prediction is
\begin{equation*}
P_{\spn\{x_j,f_1,f_2,f_3\}}Y = h_0+P_{\spn\{h_1,h_2,h_3\}}Y.
\end{equation*}

The residual $Y-f_i$ is orthogonal to $f_i$ by \Cref{lem:least-squares-error}, and to $h_0\in\spn\{x_j\}$ by hypothesis. It is thus orthogonal to $h_i$, giving $\ip{Y}{h_i}=\ip{f_i}{h_i}$. Furthermore, $h_i \perp h_0$ implies $\ip{f_i}{h_i}=\nrm{h_i}^2$.

Apply \Cref{lem:b-2:pairwise-projections} with $g=Y$ and the three vectors $h_1,h_2,h_3$. It produces $P_{\spn\{h_1,h_2,h_3\}}Y$ using at most $300$ projections of $Y$, each onto the span of at most two initial vectors or vectors produced by earlier projections.

We will inductively replace each projection by an agent. We maintain that, for every available vector $v$, an agent already predicts $h_0+v$. This holds initially because the given agent predicting $f_i$ supplies $h_0+h_i=f_i$ for each $i=1,2,3$.

Consider the next projection, which uses available vectors $u,v$. By induction, agents predicting $h_0+u$ and $h_0+v$ already exist. Create an agent observing $x_j$ and receiving these two predictions. Since $h_0\in\spn\{x_j\}$, its input space equals $\spn\{x_j\}+\spn\{u,v\}$. We have $u,v \in \spn\{h_1,h_2,h_3\}$, and hence $\spn\{x_j\}\perp \spn\{u,v\}$. The new agent therefore predicts
\begin{equation}
\label{eq:b-2:pair-implementation}
 P_{\spn\{x_j,h_0+u,h_0+v\}}Y
 =h_0+P_{\spn\{u,v\}}Y.
\end{equation}
Thus the new projection also has an agent supplying its sum with $h_0$, completing the induction. Since we replaced each projection with one agent, the total number of added agents is at most $300$.
\end{proof}

The given parents need not satisfy $Y-f_i\perp x_j$, as required by \Cref{lem:b-2:three-predictions}. We show in the next lemma that with at most three extra agents we can satisfy this condition and then apply \Cref{lem:b-2:three-predictions}.

\begin{lemma}
\label{lem:b-2:adaptive-gadget}
Consider an agent observing a raw feature $x_j$ and receiving three parent predictions $f_1,f_2,f_3$. For every distribution with finite second moments, there is a gadget that reproduces this agent's prediction
\begin{equation*}
 P_{\spn\{x_j,f_1,f_2,f_3\}}Y.
\end{equation*}
The gadget uses at most $303$ added agents, each observing $x_j$ and having at most two parents. The graph and output agent may depend on the distribution.
\end{lemma}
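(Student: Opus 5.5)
The plan is to reduce to \Cref{lem:b-2:three-predictions}. To do so, we first replace each parent prediction $f_i$ by a new prediction $f_i'$ whose residual is orthogonal to $x_j$, and we check that this does not change the target span. For each $i=1,2,3$, we add one agent that observes $x_j$ and receives only $f_i$. Its prediction is
\begin{equation*}
 f_i'=P_{\spn\{x_j,f_i\}}Y.
\end{equation*}
This uses three added agents, each with one parent. By construction the residual $Y-f_i'$ is orthogonal to $x_j$, which is the hypothesis of \Cref{lem:b-2:three-predictions}.

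Next we compare spans. Each $f_i'$ lies in $\spn\{x_j,f_i\}$, so
\begin{equation*}
 \spn\{x_j,f_1',f_2',f_3'\}\subseteq\spn\{x_j,f_1,f_2,f_3\}.
\end{equation*}
The reverse inclusion can fail. This happens when $f_i'$ has zero coefficient on $f_i$, that is, when $f_i'\in\spn\{x_j\}$ even though $f_i\notin\spn\{x_j\}$. So equality of the spans cannot be the argument.

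Instead, we argue through the projection of $Y$. Write $S=\spn\{x_j,f_1,f_2,f_3\}$, $h_0=P_{\spn\{x_j\}}Y$, and $g=P_SY$. Let $e_i=f_i-P_{\spn\{x_j\}}f_i$ be the part of $f_i$ orthogonal to $x_j$. Then
\begin{equation*}
 S=\spn\{x_j\}\oplus\spn\{e_1,e_2,e_3\},\qquad g=h_0+P_{\spn\{e_1,e_2,e_3\}}Y.
\end{equation*}
Similarly, $f_i'=h_0+P_{\spn\{e_i\}}Y$. If $\ip{Y}{e_i}=0$ for some $i$, then $e_i$ contributes nothing along $Y$, but it still enlarges the space $\spn\{e_1,e_2,e_3\}$, so dropping it may change the projection. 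Hence we cannot simply discard such a parent.

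The main obstacle is therefore a parent $f_i$ whose orthogonal part $e_i$ is nonzero but uncorrelated with $Y$. I would handle it with one more agent per such parent. This agent observes $x_j$ and receives the original $f_i$ together with an already corrected prediction $f_k'$ for some $k\ne i$. It may only help for one parent, so this step is the part I am least sure of.

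Here is why it can help. This agent's input space is $\spn\{x_j\}\oplus\spn\{e_i,P_{\spn\{e_k\}}Y\}$. Its output is corrected, and its orthogonal part mixes in $e_i$ whenever $\ip{e_i}{e_k}\ne0$. Since the graph may depend on the distribution, the designer knows which case occurs for each parent and can choose each one-parent or two-parent corrector accordingly. The aim is corrected predictions $f_1'',f_2'',f_3''$ whose orthogonal parts span a space $K$ with $P_K Y=P_{\spn\{e_1,e_2,e_3\}}Y$. This is enough, because \Cref{lem:b-2:three-predictions} depends only on the target projection and not on the span itself.

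Finally, apply \Cref{lem:b-2:three-predictions} to the three corrected predictions. It gives $h_0+P_KY=g$ using at most $300$ additional agents, each observing $x_j$ with at most two parents. Together with the at most three correctors, this totals at most $303$ agents. The remaining work is a short case check on which $e_i$ satisfy $\ip{Y}{e_i}=0$. If only one parent is correlated with $Y$ after removing $x_j$, one would also try fitting directly from pairs of parents.
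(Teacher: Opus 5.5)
The reduction to \Cref{lem:b-2:three-predictions}, the decomposition $g=h_0+P_{\spn\{e_1,e_2,e_3\}}Y$, and your diagnosis of the obstacle are all correct. However, the repair you propose for a parent with $\ip{Y}{e_i}=0$ fails, and it fails precisely in the case check you deferred. Identify $E=\spn\{e_1,e_2,e_3\}$ with $\R^3$ and take $e_1=(1,0,0)$, $e_2=(1,-1,1)$, $e_3=(0,0,1)$ and $P_EY=(1,1,0)$. Then $\ip{Y}{e_1}=1$ and $\ip{Y}{e_2}=\ip{Y}{e_3}=0$, so $f_2'=f_3'=h_0$. The only corrected prediction that can mix anything in is therefore $f_1'$. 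The corrector for parent $3$ sees $\spn\{e_3,e_1\}$ and returns orthogonal part $(1,0,0)$, because $e_3\perp e_1$ and $e_3\perp Y$. Hence $K\subseteq\spn\{e_1,e_2\}$, which does not contain $(1,1,0)$, so $P_KY\ne P_EY$. Your fallback of fitting from pairs of original parents also fails here: the three pairs give orthogonal parts $(1,\tfrac12,-\tfrac12)$, $(1,0,0)$ and $0$.

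The missing idea is that whether a parent helps must be judged against the current residual, not against $Y$. In the example, $e_3$ is orthogonal to $Y$ but not to the residual of the fit from $x_j,f_1,f_2$, whose orthogonal part is $(0,\tfrac12,\tfrac12)$. The paper therefore chains the correctors instead of running them in parallel off $f_k'$. Let $V=\spn\{x_j,f_1,f_2,f_3\}$. If $P_VY$ already lies in the span of $x_j$ and two parents, one agent suffices. Otherwise every intermediate fit differs from $P_VY$, so its residual is correlated with some unused parent. Ordering the parents accordingly, each of
$p_1=P_{\spn\{x_j,f_1\}}Y$, $p_2=P_{\spn\{x_j,p_1,f_2\}}Y$ and $p_3=P_{\spn\{x_j,p_2,f_3\}}Y$
strictly improves on the previous fit. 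A strict improvement forces a nonzero coefficient on the newly added parent, so $\spn\{x_j,p_1,p_2,p_3\}=V$, and \Cref{lem:b-2:three-predictions} finishes with $3+300$ agents. In your example this is exactly what repairs the construction: $p_3$ receives the already mixed $p_2$ instead of $f_1'$, and that brings $e_3$ in.
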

\begin{proof}
Let $V=\spn\{x_j,f_1,f_2,f_3\}$. If $P_VY$ lies in the span of $x_j$ and any two of $f_1,f_2,f_3$, one agent using those inputs suffices. Otherwise, we construct three agents, all observing $x_j$, whose predictions $p_1,p_2,p_3$ together with $x_j$ span $V$.

Start with the fit $p_0=P_{\spn\{x_j\}}Y$. Since $p_0\ne P_VY$, at least one of $f_1,f_2,f_3$ has a nonzero inner product with $Y-p_0$. Otherwise this residual would be orthogonal to $V$. Reorder them so $\ip{Y-p_0}{f_1}\ne0$. An agent observing $x_j$ and receiving $f_1$ predicts $p_1=P_{\spn\{x_j,f_1\}}Y$. This improves on $p_0$, so its coefficient on $f_1$ is nonzero, giving $\spn\{x_j,p_1\}=\spn\{x_j,f_1\}$.

Next, $p_1$ is the fit from $x_j,f_1$, but it is not $P_VY$. Its residual must therefore have a nonzero inner product with $f_2$ or $f_3$. Order these two predictions so $\ip{Y-p_1}{f_2}\ne0$. By the span identity for $p_1$, an agent receiving $p_1,f_2$ predicts
\begin{equation*}
 p_2=P_{\spn\{x_j,p_1,f_2\}}Y
 =P_{\spn\{x_j,f_1,f_2\}}Y.
\end{equation*}
The improvement on $p_1$ forces a nonzero coefficient on $f_2$, so $\spn\{x_j,p_1,p_2\}=\spn\{x_j,f_1,f_2\}$.

Finally, $p_2\ne P_VY$ because no two of $f_1,f_2,f_3$ suffice. Its residual is orthogonal to $x_j,f_1,f_2$, so $\ip{Y-p_2}{f_3}\ne0$. An agent receiving $p_2,f_3$ therefore predicts $p_3=P_{\spn\{x_j,p_2,f_3\}}Y$ with smaller error than $p_2$. Without $f_3$, the fit would remain $p_2$, since $Y-p_2$ is orthogonal to $x_j,p_2$. Thus the coefficient on $f_3$ is nonzero, and
\begin{equation*}
 \spn\{x_j,p_1,p_2,p_3\}
 =\spn\{x_j,f_1,f_2,f_3\}=V.
\end{equation*}

All three agents observe $x_j$, so $Y-p_i\perp x_j$. By \Cref{lem:b-2:three-predictions}, at most $300$ further agents combine their predictions to produce $P_VY$. Including the three agents constructed above gives the bound of $303$.
\end{proof}

\subsection{Replacing a three-parent agent by a fixed gadget}
\label{app:b-2:fixed-gadget}

To fix the graph, we run all replacements of at most $303$ agents in parallel and combine their outputs.

\fixedtwoparentgadget*
\begin{proof}
List all DAGs with between one and $303$ added agents, each observing $x_j$ and receiving at most two predictions from earlier agents or the three external inputs. Include every choice of output agent. There are finitely many choices of parent lists and output after indexing the agents in a topological order. Let $B$ be the number of candidates.

Run these candidates in parallel and combine their outputs through a fixed balanced binary tree whose agents also observe $x_j$. Set $V=\spn\{x_j,f_1,f_2,f_3\}$ and $g=P_VY$. By \Cref{lem:b-2:adaptive-gadget}, at least one candidate predicts $g$. Every candidate and tree prediction lies in $V$, since agents take linear combinations of their inputs.

Whenever a tree agent receives $g$ from a parent, its input space contains $g$ and lies in $V$. Since $Y-g\perp V$, its fit is $g$. Following the path from the exact candidate to the root therefore proves that the output is $g$.

The candidates use at most $303B$ agents and the tree uses $B-1$, for a total of at most $304B-1$. The additional depth is at most $303+\lceil\log_2B\rceil$. Since $B$ is fixed independently of the distribution, these are universal bounds, and the entire graph and its output are fixed.
\end{proof}

\subsection{Applying the gadget to the three-parent constructions}
\label{app:b-2:bounds}

Replacing each three-parent agent by the fixed gadget preserves the predictions throughout a network. We record the size and depth bounds in the next lemma, which applies to both settings.

\begin{lemma}
\label{lem:b-2:network-replacement}
Every single-feature graph with $N$ agents, at most three parents per agent, and output depth $D$ can be replaced by a single-feature graph with $O(N)$ agents, at most two parents per agent, and output depth $O(D)$, with the same output prediction for every distribution with finite second moments. The replacement graph and allocation depend only on the original graph and allocation.
\end{lemma}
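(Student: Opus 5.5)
We replace agents one at a time, in a topological order of the original graph $G$, and apply \Cref{lem:b-2:fixed-gadget} to each agent with exactly three parents. Agents with at most two parents are kept unchanged. For an agent $A_i$ observing $x_{a(A_i)}$ with parents $A_{j_1},A_{j_2},A_{j_3}$, we insert a copy of the fixed gadget for the feature $x_{a(A_i)}$. Its three external inputs are the (already constructed) agents that predict $f_{j_1},f_{j_2},f_{j_3}$. The gadget's output agent then plays the role of $A_i$ for all of $A_i$'s children. The gadget depends only on the feature index, and the wiring depends only on $G$. Hence the new graph and allocation depend only on the original graph and allocation. Every new agent observes a single feature and has at most two parents.

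\textbf{Correctness.} We argue by induction along the topological order, for an arbitrary distribution with finite second moments. Assume every parent of $A_i$ is represented in the new graph by an agent with the same prediction. If $A_i$ has at most two parents, its input span is unchanged, so its fit $P_{V_i}Y$ is unchanged. If $A_i$ has three parents, \Cref{lem:b-2:fixed-gadget} states that the gadget outputs $P_{\spn\{x_{a(A_i)},f_{j_1},f_{j_2},f_{j_3}\}}Y=f_i$. It does so for every distribution, because the gadget is fixed. Applying the induction at the output agent gives the same output prediction $f_G$.

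\textbf{Size.} Let $c$ be the constant number of agents in the gadget of \Cref{lem:b-2:fixed-gadget}. The new graph then has at most $cN$ agents, which is $O(N)$.

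\textbf{Depth.} This is the only point needing care, since gadgets are chained along paths and their depth increments add. Let $h$ be the constant bound on the depth the gadget adds above its parents. We prove by induction that each original agent of depth $s$ is represented by an agent of depth at most $hs$, with $h\ge1$. A source is kept as is, or is a gadget with no parents; in both cases its depth is at most $h$. For a non-source agent, each parent has depth at most $s-1$ and is therefore represented at depth at most $h(s-1)$. The gadget adds at most $h$ above its deepest input, giving depth at most $hs$. So the output depth is at most $hD=O(D)$.

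We must still check that ``adds $O(1)$ depth above the original parents'' in \Cref{lem:b-2:fixed-gadget} means depth measured from the deepest parent. The construction in \Cref{app:b-2:fixed-gadget} confirms this reading: the added depth is at most $303+\lceil\log_2B\rceil$, and this bound depends only on the gadget.

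Finally, we delete any agents with no path to the output. This changes neither the output prediction nor these bounds.
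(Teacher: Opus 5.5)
Your proposal is correct and follows the paper's proof. Both replace each three-parent agent by the fixed gadget of \Cref{lem:b-2:fixed-gadget}, prove correctness by induction in topological order, and bound the size by a constant factor. The only difference is the depth bound: you induct on original depth to get depth at most $hD$, while the paper contracts each gadget to one vertex and notes that each gadget contributes a constant number of agents to any path. The two arguments are equivalent.
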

\begin{proof}
Replace each agent with three parents by the fixed gadget in \Cref{lem:b-2:fixed-gadget}, using the same raw feature throughout. Use the gadget's output wherever that agent's prediction is required, including at the network output. Leave other agents unchanged. These choices depend only on $G$ and its allocation.

In a topological order, each gadget receives the same parent predictions as the agent it replaces and therefore computes the same fit. Induction thus preserves the output prediction for every distribution. Each replacement has constant size, giving $O(N)$ agents in total. Contracting each gadget to one vertex maps any path ending at the new output to a path in $G$, which has at most $D$ agents. Each gadget contributes only a constant number of agents to the path, so the new depth is $O(D)$.
\end{proof}

\twoparentbounds*
\begin{proof}
For the oblivious designer setting with $d\ge2$, apply \Cref{lem:b-2:network-replacement} to the construction in \Cref{thm:b-3:fixed}. For $d=1$, one agent observing $x_1$ is exact. For the adaptive designer setting, apply \Cref{lem:b-2:network-replacement} to the construction in \Cref{thm:b-3:exact}.
\end{proof}

\section{Proofs for the lower bound on the number of agents}
\label{app:agent-size}

\subsection{The distribution}
\label{app:size:algebraic-independence}
\label{app:size:normalization}

We need the upper-triangular covariance entries to satisfy no nonzero polynomial relation with rational coefficients. We define this condition before proving that such entries can be chosen in the required intervals.

For $k\ge1$, a \emph{polynomial with rational coefficients} in the variables $z_1,\ldots,z_k$, denoted $p(z_1,\ldots,z_k)$, is a finite sum of terms $q\,z_1^{m_1}\cdots z_k^{m_k}$, with a rational coefficient $q$ and nonnegative integer exponents $m_1,\ldots,m_k$. After combining terms with the same powers, the polynomial is zero if every coefficient is zero. It is constant if it does not depend on any variable. For a nonzero polynomial, its \emph{degree} is the largest sum $m_1+\cdots+m_k$ among terms with nonzero coefficients. Its degree in a single variable $z_j$ is the largest exponent $m_j$ among those terms.

For polynomials $p,q$ in the same variables, we say that $p$ \emph{divides} $q$ if $q=ph$ for some polynomial $h$. A nonconstant polynomial $p$ is \emph{irreducible over $\mathbb{Q}$} if it cannot be factored into a product of two nonconstant polynomials with rational coefficients. All polynomials in these definitions must have rational coefficients.

Real numbers $a_1,\ldots,a_k$ are \emph{algebraically independent over $\mathbb{Q}$} if $p(a_1,\ldots,a_k)\ne0$ for every nonzero polynomial $p$ with rational coefficients. For example, the polynomial $z_2-z_1^2$ rules out every pair with $a_2=a_1^2$.

\begin{lemma}
\label{lem:size:algebraic-independence}
For every $d\ge2$, there is a symmetric $d\times d$ matrix $E$ with $|E_{ij}|<1/(8d)$ for all $i,j\in[d]$ such that the upper-triangular entries of $\Sigma=\frac12 I+E$ are algebraically independent over $\mathbb{Q}$: no nonzero polynomial with rational coefficients vanishes at these entries.
\end{lemma}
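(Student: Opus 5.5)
We need $k=\binom{d+1}{2}$ real numbers, namely the upper-triangular entries of $\Sigma$ including the diagonal. Each must lie in a prescribed open interval: a diagonal entry lies in $(1/2-1/(8d),\,1/2+1/(8d))$, and an off-diagonal entry lies in $(-1/(8d),\,1/(8d))$. The numbers must also be algebraically independent over $\mathbb{Q}$. Given such numbers $a_{ij}$ for $i\le j$, we set $\Sigma_{ij}=\Sigma_{ji}=a_{ij}$ and $E=\Sigma-\frac12 I$. Then $E$ is symmetric and satisfies the entrywise bound, so the whole task is to choose algebraically independent reals in nonempty open intervals.

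We choose them one at a time by a countability argument. Suppose $a_1,\ldots,a_{s-1}$ are already chosen and algebraically independent. We claim that for all but countably many reals $a$, the tuple $a_1,\ldots,a_{s-1},a$ is algebraically independent. Take any nonzero polynomial $p(z_1,\ldots,z_s)$ with rational coefficients, and write it as $p=\sum_{m}c_m(z_1,\ldots,z_{s-1})z_s^m$. Some coefficient $c_m$ is a nonzero polynomial. By the independence of $a_1,\ldots,a_{s-1}$, that $c_m$ does not vanish at $(a_1,\ldots,a_{s-1})$. Hence $p(a_1,\ldots,a_{s-1},z_s)$ is a nonzero univariate polynomial with real coefficients, and it has finitely many roots.

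There are only countably many polynomials with rational coefficients in $s$ variables: each is a finite list of rational coefficients and exponent tuples. So the set of bad values of $a$ is a countable union of finite sets, which is countable. A nonempty open interval is uncountable, so it contains a good value, and we choose $a_s$ there. The base case $s=1$ is the special case of this argument with no earlier variables: the bad set consists of the roots of nonzero rational polynomials, which is countable.

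After $k$ steps we have the required entries. The only step needing care is the reduction to a univariate polynomial, and the independence of the earlier entries handles it directly. I expect no real obstacle. The notions of irreducibility and divisibility defined just before the lemma are not needed for this route; they are presumably used later in \Cref{lem:size:exact-inputs}.
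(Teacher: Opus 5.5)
Your proposal is correct and matches the paper's proof: you choose the entries one at a time in the prescribed intervals, use the independence of the earlier entries to get a nonzero univariate polynomial with finitely many roots, and exclude the countably many bad values from an uncountable interval. You are also right that irreducibility and divisibility play no role here; the paper uses them only later, in the proof of \Cref{lem:size:exact-inputs}.
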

\begin{proof}
We construct $\Sigma$ first and then define $E=\Sigma-\frac12I$. To ensure $|E_{ij}|<1/(8d)$, we choose each diagonal entry of $\Sigma$ in the interval $(\frac12-\frac1{8d},\frac12+\frac1{8d})$ and each off-diagonal entry in $(-\frac1{8d},\frac1{8d})$.

We choose the upper-triangular entries of $\Sigma$ one at a time, in any fixed order, and fill the lower triangle by symmetry. We show that at each step, only countably many values in the permitted interval would violate algebraic independence.

Suppose $k$ entries have been chosen, with algebraically independent values $a_1,\ldots,a_k$. At the first step, $k=0$ and there are no chosen entries. Fix any nonzero polynomial $p(z_1,\ldots,z_k,z)$ with rational coefficients, and let $m$ be its degree in $z$. For each $0\le j\le m$, collect the terms in which $z$ has exponent $j$ and factor out $z^j$ and call the remaining polynomial $p_j(z_1,\ldots,z_k)$, so
\begin{equation*}
 p(z_1,\ldots,z_k,z)
 =\sum_{j=0}^m p_j(z_1,\ldots,z_k)z^j,
\end{equation*}
with each $p_j$ having rational coefficients and $p_m$ nonzero by the choice of $m$. If $k\ge1$, algebraic independence gives $p_m(a_1,\ldots,a_k)\ne0$. If $k=0$, the leading coefficient $p_m$ is a nonzero rational constant. Thus substituting the chosen values leaves a polynomial
\begin{equation*}
 p(a_1,\ldots,a_k,z)
 =\sum_{j=0}^m p_j(a_1,\ldots,a_k)z^j
\end{equation*}
of degree $m$ in one real variable $z$. It has at most $m$ real roots. Excluding these roots ensures that the next entry does not make this particular polynomial vanish.

We must exclude the roots for every such $p$. There are countably many polynomials with rational coefficients: each is specified by a finite list of rational coefficients and nonnegative integer exponents. Each polynomial excludes finitely many values, so the union of all excluded values is countable. The permitted open interval for each entry of $\Sigma$ is uncountable, so we can choose $a_{k+1}$ outside this union. Then $p(a_1,\ldots,a_{k+1})\ne0$ for every nonzero polynomial $p$ with rational coefficients, so $a_1,\ldots,a_{k+1}$ are algebraically independent. After all upper-triangular entries have been chosen, $E=\Sigma-\frac12I$ has the required properties.
\end{proof}

We next verify that the bound on the entries of $E$ makes the distribution normalized.

\begin{lemma}
\label{lem:size:distribution}
For every symmetric $E$ with $|E_{ij}|<1/(8d)$, the matrix $\Sigma=\frac12 I+E$ is positive definite. The distribution in \eqref{eq:size:distribution} has linearly independent features and satisfies
\begin{equation*}
 \nrm{x_i}^2<1,\qquad
 \frac1{3d}<w_i^*<\frac2{3d}
 \quad\text{for every }i\in[d].
\end{equation*}
It also satisfies
\begin{equation}
\label{eq:size:fixed-correlations}
 \E[xx^\top]=\Sigma,\qquad \E[xY]=\Sigma w^*=c.
\end{equation}
For every $u\in\R^d$, the linear combination $u^\top x$ of the features satisfies $\ip{u^\top x}{Y}=u^\top c$.
\end{lemma}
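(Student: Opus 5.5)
The plan is to treat $E$ as a small perturbation of $\frac12 I$ and read every claim off elementary matrix bounds; no earlier lemma beyond the definitions is needed.

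\emph{Positive definiteness.} For any $u\in\R^d$, the entrywise bound $|E_{ij}|<1/(8d)$ gives
\[
|u^\top E u|\le \tfrac1{8d}\Bigl(\sum_i|u_i|\Bigr)^2\le \tfrac18\nrm{u}_2^2
\]
by Cauchy--Schwarz. Hence $u^\top\Sigma u\ge\frac38\nrm{u}_2^2>0$ for $u\ne0$, so $\Sigma$ is positive definite. Since $\E[xx^\top]=\Sigma$ for $x\sim N(0,\Sigma)$, we get $\nrm{u^\top x}^2=u^\top\Sigma u>0$ for every $u\ne0$. This is exactly linear independence of the features in $L^2(\gD)$.

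\emph{Feature moments.} We have $\nrm{x_i}^2=\Sigma_{ii}<\frac12+\frac1{8d}<1$.

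\emph{Correlations.} With $Y={w^*}^\top x$,
\[
\E[xY]=\E[xx^\top]w^*=\Sigma w^*=\Sigma\Sigma^{-1}c=c,
\]
and consequently $\ip{u^\top x}{Y}=u^\top\E[xY]=u^\top c$.

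\emph{Coefficient bounds.} This is the main step. Write $w^*=\Sigma^{-1}c$ as the solution of $\frac12 w+Ew=c$, so that
\[
w=2c-2Ew=\tfrac1{2d}\vone_d-2Ew.
\]
Let $M=\max_i|w_i|$. Then $|(Ew)_i|<d\cdot\frac1{8d}M=M/8$, which gives $M\le\frac1{2d}+\frac M4$, so $M\le\frac2{3d}$. Feeding this back in, $|2(Ew)_i|<\frac14M\le\frac1{6d}$. Therefore
\[
w_i\in\Bigl(\tfrac1{2d}-\tfrac1{6d},\ \tfrac1{2d}+\tfrac1{6d}\Bigr)=\Bigl(\tfrac1{3d},\tfrac2{3d}\Bigr),
\]
with strict inequalities because the bound on $E$ is strict and $M>0$. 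Here $M>0$ holds since $w\ne0$, because $c\ne0$.

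\emph{Normalization.} Summing the coefficient bounds gives $\sum_i|w_i^*|<\frac23\le1$. Since $f^*=Y$ has these coefficients, the distribution is normalized with $M_X=A^*=1$.

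The only delicate point is the fixed-point bound on $w^*$, which needs the a priori estimate on $M$ before the sharper two-sided interval can be extracted. Everything else is direct.
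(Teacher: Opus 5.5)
Your proposal is correct and follows the paper's proof step for step. It uses the same Cauchy--Schwarz bound $|v^\top E v|\le\frac18 v^\top v$ for positive definiteness and linear independence, and the same two-pass fixed-point estimate on $w^*=\frac1{2d}\vone_d-2Ew^*$: first $\max_i|w_i^*|\le\frac2{3d}$, then $|w_i^*-\frac1{2d}|<\frac1{6d}$. Your appeal to $M>0$ for strictness is harmless but unnecessary. Bounding $2\sum_j|E_{ij}|\,M\le 2\sum_j|E_{ij}|\cdot\frac2{3d}$ and then applying the strict row-sum bound $\sum_j|E_{ij}|<\frac18$ already makes the final inequality strict.
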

\begin{proof}
We first show that $\Sigma$ is positive definite, so that both $x\sim N(0,\Sigma)$ and $w^*=\Sigma^{-1}c$ are well defined. The bound on the entries of $E$ and Cauchy--Schwarz give, for every $v\in\R^d$,
\begin{equation*}
 |v^\top Ev|
 \le\sum_{i=1}^d \sum_{j=1}^d |E_{ij}|\,|v_i|\,|v_j|
 \le\frac1{8d}\left(\sum_{i=1}^d|v_i|\right)^2
 \le\frac18 v^\top v.
\end{equation*}
Since $\Sigma=I/2+E$, this implies $v^\top\Sigma v=v^\top v/2+v^\top Ev\ge3v^\top v/8>0$ whenever $v\ne0$. Hence $\Sigma$ is positive definite.

Also, $\E[xx^\top]=\Sigma$ because $x$ has mean zero and covariance $\Sigma$. Thus $\nrm{v^\top x}^2=v^\top\Sigma v>0$ for every nonzero $v$, proving that the features are linearly independent. Their second moments satisfy
\begin{equation*}
 \nrm{x_i}^2=\Sigma_{ii}
 <\frac12+\frac1{8d}<1.
\end{equation*}

To bound the coefficients of $w^*$, write $\Sigma w^*=c$ coordinatewise as
\begin{equation*}
 w_i^*=\frac1{2d}-2\sum_{j=1}^d E_{ij}w_j^*.
\end{equation*}
The entry bound gives $\sum_j|E_{ij}|<1/8$ in every row. Taking absolute values in the coordinate equation and then the maximum over $i$ yields
\begin{equation*}
 \max_i|w_i^*|
 \le\frac1{2d}+\frac14\max_i|w_i^*|,
 \qquad\text{hence}\qquad
 \max_i|w_i^*|\le\frac2{3d}.
\end{equation*}
Substituting this bound back into the same equation gives
\begin{equation*}
 \left|w_i^*-\frac1{2d}\right|
 \le2\sum_{j=1}^d|E_{ij}|\max_k|w_k^*|
 <\frac14\cdot\frac2{3d}
 =\frac1{6d}.
\end{equation*}
Thus $1/(3d)<w_i^*<2/(3d)$ for every $i$, and $\sum_i|w_i^*|<2/3<1$.

Finally, the definitions of $Y$ and $w^*$ give
\begin{equation*}
 \E[xY]=\E[xx^\top]w^*=\Sigma w^*=c.
\end{equation*}
Together with $\E[xx^\top]=\Sigma$, this proves \eqref{eq:size:fixed-correlations}. For every $u\in\R^d$, it also gives $\ip{u^\top x}{Y}=u^\top\E[xY]=u^\top c$.
\end{proof}

\subsection{Counting the input inner products}
\label{app:size:input-count}

\sizeinputcount*
\begin{proof}
Write $f_i=w_i^\top x$, and let $e_\ell$ be the $\ell$th coordinate vector. We count the equations for the raw feature paired with each parent and for pairs of different parents. We then show that these also imply the equations $w_j^\top\Delta w_j=0$.

Set the diagonal entries of $\Delta$ to zero, leaving $\binom d2$ free entries in a symmetric matrix. At every agent $A_i$ observing $x_\ell$, impose
\begin{align*}
 e_\ell^\top\Delta w_j&=0
 &&(A_j\in\Pa(A_i)),\\
 w_j^\top\Delta w_k&=0
 &&(A_j,A_k\in\Pa(A_i),\ j<k).
\end{align*}
These are homogeneous linear equations in the free entries of $\Delta$. An agent with $p$ parents contributes at most $p+\binom p2=\binom{p+1}{2}$ equations. By hypothesis, the total is less than $\binom d2$, so the solution space has positive dimension and contains a nonzero matrix $\Delta$. The equations $e_\ell^\top\Delta e_\ell=0$ already hold because the diagonal is zero. We next prove $w_j^\top\Delta w_j=0$ under the imposed equations.

Fix such a solution. We prove $w_i^\top\Delta w_i=0$ in topological order. At a source observing $x_\ell$, the vector $w_i$ is a multiple of $e_\ell$, so the claim follows from $\Delta_{\ell\ell}=0$.

Now consider an agent $A_i$ observing $x_\ell$, and assume $w_j^\top\Delta w_j=0$ for each parent $A_j$. Its prediction is a linear combination of $x_\ell$ and its parents' predictions. Since the features are linearly independent, the same linear combination relates their coefficient vectors: there are real coefficients $\alpha$ and $\beta_j$ such that
\begin{equation*}
 w_i=\alpha e_\ell+\sum_{A_j\in\Pa(A_i)}\beta_jw_j.
\end{equation*}
Substituting this expression and using the symmetry of $\Delta$ gives
\begin{align*}
 w_i^\top\Delta w_i
 &=\alpha^2\Delta_{\ell\ell}
   +2\alpha\sum_{A_j\in\Pa(A_i)}\beta_j e_\ell^\top\Delta w_j\\
 &\quad+\sum_{A_j,A_k\in\Pa(A_i)}\beta_j\beta_k w_j^\top\Delta w_k.
\end{align*}
The first term is zero because $\Delta$ has zero diagonal. Every term in the first sum is zero by the imposed raw-feature equations. In the double sum, the terms with $j\ne k$ vanish by the imposed parent equations and symmetry, and those with $j=k$ vanish by the induction hypothesis. Thus $w_i^\top\Delta w_i=0$, completing the induction.

The imposed equations already cover every pair of distinct inputs. The zero diagonal and the induction cover each input paired with itself, so \eqref{eq:size:input-direction} holds at every agent.
\end{proof}

\subsection{What exact aggregation requires}
\label{app:size:exact-inputs}
\label{app:size:polynomials}
\label{app:size:determinant}

To prove \Cref{lem:size:exact-inputs}, we will express the output coefficients as ratios of polynomials. We write $C$ for a symmetric $d\times d$ matrix whose upper-triangular entries are separate variables. We write $K(C)$ for a vector of $d$ polynomials in these variables, and $L(C)$ for a single polynomial. Evaluating them at the entries of $\Sigma$ gives the vector $K(\Sigma)$ and the number $L(\Sigma)$.

Let $\Sigma$ be chosen and fixed as in \Cref{lem:size:algebraic-independence}. We will define $K(C)$ and $L(C)$ for this $\Sigma$, requiring that $K(\Sigma)/L(\Sigma)$ gives the output coefficient vector for this distribution. Another matrix $\Sigma'$ may lead to a different choice of polynomials. We do not require $K(\Sigma')/L(\Sigma')$ to give the output coefficient vector for the distribution using $\Sigma'$.

\begin{lemma}
\label{lem:size:polynomials}
Fix $E$ satisfying the conditions in \Cref{lem:size:algebraic-independence} and define $\Sigma$ and the distribution as in \eqref{eq:size:distribution}. Fix a single-feature DAG on this distribution. There are a polynomial vector $K(C)$ and a scalar polynomial $L(C)$, with rational coefficients, such that the output coefficient vector is $K(\Sigma)/L(\Sigma)$ and $L(\Sigma)>0$. If a symmetric matrix $\Delta$ satisfies \eqref{eq:size:input-direction} at every agent, then for every $t\in\R$,
\begin{equation}
\label{eq:size:constant-denominator}
 K(\Sigma+t\Delta)=K(\Sigma),\qquad
 L(\Sigma+t\Delta)=L(\Sigma).
\end{equation}
\end{lemma}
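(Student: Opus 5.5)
We will build $K$ and $L$ agent by agent along a topological order. Each agent $A_i$ gets a polynomial vector $K_i(C)$ and a scalar polynomial $L_i(C)$, with rational coefficients, such that $w_i=K_i(\Sigma)/L_i(\Sigma)$ and $L_i(\Sigma)>0$. We then take the output agent's pair as $K$ and $L$.

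At agent $A_i$ observing $x_\ell$, the fitted vector $w_i$ minimizes $w^\top\Sigma w-2w^\top c$ over the span of the input vectors $\{e_\ell\}\cup\{w_j\}$. The span may be degenerate, so first fix, using the actual values at $\Sigma$, a linearly independent subfamily $u_1,\ldots,u_k$ of these inputs that spans the same space. This choice is discrete and depends on $\Sigma$, which is why the polynomials may differ for another matrix $\Sigma'$.

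Each $u_a$ is either $e_\ell$ or some $K_j(\Sigma)/L_j(\Sigma)$. We replace it by the polynomial vector $\tilde u_a(C)$, equal to $e_\ell$ or $K_j(C)$. Rescaling by the positive numbers $L_j(\Sigma)$ does not change the span. Let $U(C)$ be the matrix with columns $\tilde u_a(C)$ and let $G(C)=U^\top CU$ be its Gram matrix. Then $w_i=U G^{-1}U^\top c$, and by Cramer's rule
\[
K_i(C)=U\,\adj(G(C))\,U(C)^\top c,\qquad L_i(C)=\det G(C).
\]
Both have rational coefficients because $c=\vone_d/(4d)$ is rational. Finally, $L_i(\Sigma)>0$ since $G(\Sigma)$ is the Gram matrix of linearly independent vectors in the positive definite form $\Sigma$.

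For the invariance \eqref{eq:size:constant-denominator}, we prove by induction that $K_j(\Sigma+t\Delta)=K_j(\Sigma)$ and $L_j(\Sigma+t\Delta)=L_j(\Sigma)$ for every $t$ and every agent. Assume this holds for all parents. Then $U(\Sigma+t\Delta)=U(\Sigma)$, and
\[
G(\Sigma+t\Delta)=G(\Sigma)+t\,U^\top\Delta U.
\]
The entries of $U^\top\Delta U$ are $\tilde u_a^\top\Delta\tilde u_b$. These are nonzero multiples $L_jL_k$ (evaluated at $\Sigma$) of $u^\top\Delta v$ for input vectors $u,v$, and they vanish by \eqref{eq:size:input-direction}. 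Hence $G$ does not change, and neither do $\adj G$ or $\det G$. Since $U$ and $c$ are fixed, $K_i$ and $L_i$ are unchanged, which completes the induction. This holds for all real $t$, not only small $t$, because these are polynomial identities whose ingredients are literally constant in $t$.

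The main care point is the degenerate-span bookkeeping. We must make sure the chosen basis is expressed through the parents' polynomial representatives, so that the induction on invariance goes through. A potential subtlety is that the basis is selected at $\Sigma$ but used at $\Sigma+t\Delta$. This causes no difficulty, since we only claim the polynomial identities \eqref{eq:size:constant-denominator}, not that the formulas compute fits at $\Sigma+t\Delta$.
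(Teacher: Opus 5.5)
Your proposal is correct and is essentially the paper's proof. The paper also selects at $\Sigma$ an independent spanning family of inputs, replaces each parent column by $K_j(C)$, and sets $K_i(C)=B_i(C)\adj(M_i(C))B_i(C)^\top c$ and $L_i(C)=\det M_i(C)$ with $M_i=B_i^\top CB_i$ (your $U$ and $G$). It obtains invariance from $B_i(\Sigma+t\Delta)=B_i(\Sigma)$ and $B_i(\Sigma)^\top\Delta B_i(\Sigma)=0$, as you do. The only differences are cosmetic: the paper treats sources separately and always keeps $e_\ell$ in the selected basis, whereas your single formula covers sources and works with any spanning subfamily.
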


\begin{proof}
We will construct the polynomials $K_i(C)$ and $L_i(C)$ for each agent $A_i$ in topological order. The output agent's polynomials will give the desired $K(C)$ and $L(C)$. We show that
\begin{equation*}
  w_i=\frac{K_i(\Sigma)}{L_i(\Sigma)},\qquad L_i(\Sigma)>0.
\end{equation*}
Fix any symmetric matrix $\Delta$ satisfying \eqref{eq:size:input-direction}: at each agent, $u^\top\Delta v=0$ for every pair of input coefficient vectors $u,v$, including $u=v$. We will also show that evaluating $K_i$ and $L_i$ at $\Sigma+t\Delta$ gives the same values as at $\Sigma$, for every real $t$.

For a source observing $x_\ell$, the fit uses only a multiple of $x_\ell$. By \eqref{eq:size:fixed-correlations}, we have $\E[x_\ell Y]=c_\ell$ and $\E[x_\ell^2]=\Sigma_{\ell\ell}$, so its coefficient vector is
\begin{equation*}
 w_i=\frac{\ip{x_\ell}{Y}}{\nrm{x_\ell}^2}e_\ell
 =\frac{\E[x_\ell Y]}{\E[x_\ell^2]}e_\ell
 =\frac{c_\ell}{\Sigma_{\ell\ell}}e_\ell.
\end{equation*}
Set $K_i(C)=c_\ell e_\ell$ and $L_i(C)=C_{\ell\ell}$. Their coefficients are rational because $c_\ell=1/(4d)$, and $L_i(\Sigma)=\Sigma_{\ell\ell}>0$ because $\Sigma$ is positive definite. The numerator does not depend on $C$. For the denominator, applying the condition on $\Delta$ with $u=v=e_\ell$ gives $\Delta_{\ell\ell}=0$. Hence $L_i(\Sigma+t\Delta)=\Sigma_{\ell\ell}+t\Delta_{\ell\ell}=L_i(\Sigma)$.

Now suppose the polynomials have been constructed for the parents of $A_i$, which observes $x_\ell$. Choose parents $A_{j_1},\ldots,A_{j_s}$ so that the inputs $x_\ell,f_{j_1},\ldots,f_{j_s}$ are linearly independent and
\begin{equation*}
 \spn\{x_\ell,f_{j_1},\ldots,f_{j_s}\}
 =\spn\bigl(\{x_\ell\}\cup\{f_j:A_j\in\Pa(A_i)\}\bigr).
\end{equation*}
We can keep $x_\ell$ in this selection because $\nrm{x_\ell}^2=\Sigma_{\ell\ell}>0$. If $x_\ell$ alone spans all the inputs, take $s=0$. This selection depends on $\Sigma$, but not on $\Delta$.

Since the features are linearly independent, the coefficient vectors $e_\ell,w_{j_1},\ldots,w_{j_s}$ are also linearly independent and span all coefficient vectors the agent can use. By induction, $K_j(\Sigma)=L_j(\Sigma)w_j$ with $L_j(\Sigma)>0$ for every parent. Replacing each selected $w_j$ by $K_j(\Sigma)$ therefore only multiplies that vector by a nonzero scalar, preserving both independence and the span.

Using these selected parent indices, define the $d\times(s+1)$ matrix
\begin{align*}
 B_i(C)&=\begin{bmatrix}e_\ell&K_{j_1}(C)&\cdots&K_{j_s}(C)\end{bmatrix}.
\end{align*}
Every entry of $B_i(C)$ is either $0$, $1$, or an entry of a selected $K_{j_k}(C)$, so it is a polynomial with rational coefficients.

All allowed coefficient vectors for $A_i$ are of the form $w=B_i(\Sigma)\alpha$ for some $\alpha\in\R^{s+1}$. We derive the equations for the best fit one input at a time. Name the columns of $B_i(\Sigma)$ as
\begin{equation*}
 b_0=e_\ell,\qquad b_k=K_{j_k}(\Sigma)\quad(1\le k\le s).
\end{equation*}
The column $b_0$ represents the raw feature $b_0^\top x=x_\ell$. Each other column represents a multiple of a parent prediction:
\begin{equation*}
 b_k^\top x=K_{j_k}(\Sigma)^\top x
 =L_{j_k}(\Sigma)w_{j_k}^\top x
 =L_{j_k}(\Sigma)f_{j_k}.
\end{equation*}
Now take $w=w_i$, the agent's fitted coefficient vector. Its residual $Y-w^\top x$ is orthogonal to every input, and therefore to $b_k^\top x$ for every $0\le k\le s$.

Fix one such column $b_k$. Expanding this orthogonality condition gives a scalar equation:
\begin{align*}
 0
 &=\ip{b_k^\top x}{Y-w^\top x}\\
 &=\E[(b_k^\top x)Y]-\E[(b_k^\top x)(w^\top x)]\\
 &=b_k^\top\E[xY]-b_k^\top\E[xx^\top]w\\
 &=b_k^\top c-b_k^\top\Sigma w.
\end{align*}
To obtain the third line, use $(b_k^\top x)(w^\top x)=b_k^\top xx^\top w$ and take the fixed vectors $b_k,w$ outside the expectations. The last line uses $\E[xY]=c$ and $\E[xx^\top]=\Sigma$ from \eqref{eq:size:fixed-correlations}.

We have therefore obtained $b_k^\top\Sigma w=b_k^\top c$ for every $k=0,\ldots,s$. The rows of $B_i(\Sigma)^\top$ are exactly $b_0^\top,\ldots,b_s^\top$, so these equations together say $B_i(\Sigma)^\top\Sigma w=B_i(\Sigma)^\top c$. Finally, substituting $w=B_i(\Sigma)\alpha$ yields
\begin{equation*}
 B_i(\Sigma)^\top\Sigma B_i(\Sigma)\alpha=B_i(\Sigma)^\top c.
\end{equation*}
Write $M_i(C)=B_i(C)^\top C B_i(C)$ for the matrix on the left, with $\Sigma$ replaced by $C$. At $\Sigma$ it is positive definite: for every nonzero $\alpha$,
\begin{equation*}
 \alpha^\top M_i(\Sigma)\alpha
 =(B_i(\Sigma)\alpha)^\top\Sigma(B_i(\Sigma)\alpha)>0.
\end{equation*}
The inequality holds because the columns of $B_i(\Sigma)$ are independent, so $B_i(\Sigma)\alpha\ne0$, and $\Sigma$ is positive definite. Thus $M_i(\Sigma)$ is invertible and has positive determinant.

By the definition of $M_i$, the equation for the fitted weights is $M_i(\Sigma)\alpha=B_i(\Sigma)^\top c$. Multiplying both sides on the left by $M_i(\Sigma)^{-1}$ gives
\begin{equation*}
 \alpha=M_i(\Sigma)^{-1}B_i(\Sigma)^\top c.
\end{equation*}
The entries of $\alpha$ are the weights on the columns of $B_i(\Sigma)$. To recover the coefficients on the raw features, substitute this solution into $w_i=B_i(\Sigma)\alpha$:
\begin{equation*}
 w_i=B_i(\Sigma)\alpha
 =B_i(\Sigma)M_i(\Sigma)^{-1}B_i(\Sigma)^\top c.
\end{equation*}

The inverse $M_i(C)^{-1}$ introduces division by $\det M_i(C)$, so we separate the numerator and denominator to obtain polynomials with rational coefficients.

For a square matrix $M$, let $\adj(M)$ be its adjugate, the transpose of its cofactor matrix. The identity $M\adj(M)=(\det M)I$ gives $M^{-1}=\adj(M)/\det M$ when $M$ is invertible. We therefore define
\begin{equation*}
 K_i(C)=B_i(C)\adj(M_i(C))B_i(C)^\top c,\qquad
 L_i(C)=\det M_i(C).
\end{equation*}
The determinant and every entry of the adjugate are polynomials in the matrix entries. Since the entries of $B_i(C)$ are polynomials with rational coefficients and $c=\vone_d/(4d)$ is rational, the same holds for $L_i(C)$ and each entry of $K_i(C)$. The formula for the fit now gives $w_i=K_i(\Sigma)/L_i(\Sigma)$, with $L_i(\Sigma)>0$.

Finally, replace $\Sigma$ by $\Sigma+t\Delta$ in these polynomials. By induction, $K_j(\Sigma+t\Delta)=K_j(\Sigma)$ for every parent. In particular, the selected columns $K_{j_1},\ldots,K_{j_s}$ have the same values at both matrices. The column $e_\ell$ is fixed, so $B_i(\Sigma+t\Delta)=B_i(\Sigma)$.

Every column of $B_i(\Sigma)$ is either $e_\ell$ or $L_j(\Sigma)w_j$ for a parent $A_j$. Thus each entry of $B_i(\Sigma)^\top\Delta B_i(\Sigma)$ is a scalar multiple of $u^\top\Delta v$ for input coefficient vectors $u,v$. These products are zero by \eqref{eq:size:input-direction}, so
\begin{align*}
 M_i(\Sigma+t\Delta)
 &=B_i(\Sigma)^\top(\Sigma+t\Delta)B_i(\Sigma)\\
 &=M_i(\Sigma)+tB_i(\Sigma)^\top\Delta B_i(\Sigma)
 =M_i(\Sigma).
\end{align*}
The formulas defining $K_i$ and $L_i$ use only $B_i$, $M_i$, and the fixed vector $c$, so their values are unchanged as well. These equalities hold for every real $t$, even when $\Sigma+t\Delta$ is singular, because the formulas defining the polynomials do not require its inverse. This completes the induction. The polynomials for the output agent give $K$ and $L$.
\end{proof}

A \emph{factorization into irreducibles} of a nonzero polynomial $f$ is an expression $f=a p_1\cdots p_r$, with $a$ a nonzero rational constant and each $p_i$ irreducible over $\mathbb{Q}$. Factors may repeat. For a nonzero constant $f$, we take $r=0$ and interpret the empty product as $1$.

The following standard theorem makes precise how two such factorizations can differ \citep{dummit2003abstract}.

\begin{theorem}[Unique factorization of polynomials]
\label{thm:size:unique-factorization}
Every nonzero polynomial with rational coefficients in finitely many variables has a factorization into irreducibles. If $f=a p_1\cdots p_r=b q_1\cdots q_s$ are two such factorizations, then $r=s$. After reordering the factors, for each $i$ there is a nonzero rational constant $u_i$ such that $q_i=u_i p_i$.
\end{theorem}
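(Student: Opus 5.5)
This is the standard unique factorization theorem for the polynomial ring $\mathbb{Q}[z_1,\ldots,z_k]$, and I will reduce it to two facts: $\mathbb{Q}$ is a field, and a polynomial ring over a unique factorization domain is again a unique factorization domain (Gauss's lemma). The plan is to induct on the number of variables $k$.

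\textbf{Base case.} For $k=0$ the ring is $\mathbb{Q}$. Every nonzero element is a constant, so it has the factorization with $r=0$, and uniqueness is trivial.

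\textbf{Existence.} Existence holds in any number of variables by induction on total degree. A constant or irreducible polynomial is already factored. A reducible polynomial splits into two nonconstant factors, each of strictly smaller degree, and we factor those recursively.

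\textbf{Uniqueness.} The key claim is that every irreducible $p$ is prime: if $p\mid fg$, then $p\mid f$ or $p\mid g$. Granting this, uniqueness follows by a routine induction on $r$. The factor $p_1$ divides $b\,q_1\cdots q_s$, so it divides some $q_i$. Since $q_i$ is irreducible, $q_i=u\,p_1$ for a rational unit $u$. Cancelling (the ring is an integral domain) and applying the induction hypothesis to the shorter products finishes the argument.

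To prove primality, view $\mathbb{Q}[z_1,\ldots,z_k]$ as $R[z_k]$ with $R=\mathbb{Q}[z_1,\ldots,z_{k-1}]$, which is a UFD by the induction hypothesis on $k$. I would then use Gauss's lemma in two parts. First, the product of primitive polynomials over $R$ is primitive; this is proved by reducing modulo a prime element of $R$, using that $(R/\pi)[z_k]$ is an integral domain. Second, a primitive polynomial that is irreducible in $R[z_k]$ remains irreducible in $\operatorname{Frac}(R)[z_k]$.

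Now take an irreducible $p$ in $R[z_k]$. If $p\in R$, it is prime in $R$ by the UFD property. Since $R[z_k]/(p)\cong (R/p)[z_k]$ is a domain, $p$ is also prime in $R[z_k]$. Otherwise $p$ is primitive and irreducible over the field $\operatorname{Frac}(R)$, where the Euclidean algorithm makes it prime. So if $p\mid fg$, then $p$ divides, say, $f$ over $\operatorname{Frac}(R)$. Gauss's lemma, applied to contents, then pulls this divisibility back to $R[z_k]$.

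The main obstacle is this Gauss's-lemma step, namely controlling contents when clearing denominators. Since the result is textbook, in the paper I would simply cite \citep{dummit2003abstract} rather than reprove it.
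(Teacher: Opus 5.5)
Your proposal is correct and matches the paper, which gives no proof of its own: it states the theorem as a standard fact and cites Dummit and Foote, exactly as you propose to do. Your sketch is the textbook argument behind that citation: existence by induction on degree, irreducibles are prime via Gauss's lemma and induction on the number of variables, and uniqueness by cancelling factors one at a time.
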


The following lemma is a consequence of \Cref{thm:size:unique-factorization}.

\begin{lemma}[Irreducible divisors of a product]
\label{lem:size:irreducible-product}
Let $p,f,g$ be polynomials with rational coefficients in the same variables. If $p$ is irreducible over $\mathbb{Q}$ and divides $fg$, then $p$ divides $f$ or $p$ divides $g$.
\end{lemma}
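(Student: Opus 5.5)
The plan is to derive this directly from \Cref{thm:size:unique-factorization}. The approach is to compare the two factorizations of $fg$ obtained by multiplying together factorizations of $f$ and of $g$.

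First I dispose of the degenerate cases. If $f=0$, then $f=p\cdot0$, so $p$ divides $f$. The case $g=0$ is the same. So assume $f,g\ne0$, and hence $fg\ne0$.

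Next, by \Cref{thm:size:unique-factorization}, write factorizations into irreducibles
\begin{equation*}
 f=a\,p_1\cdots p_r,\qquad g=b\,q_1\cdots q_s.
\end{equation*}
Multiplying them gives
\begin{equation*}
 fg=ab\,p_1\cdots p_rq_1\cdots q_s,
\end{equation*}
which is a factorization of $fg$ into irreducibles.

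Now use the hypothesis $fg=ph$ for some polynomial $h$. Since $fg\ne0$, we have $h\ne0$. Factor $h=c\,h_1\cdots h_k$. Then
\begin{equation*}
 fg=c\,p\,h_1\cdots h_k
\end{equation*}
is a second factorization of $fg$ into irreducibles, and $p$ appears in it as a factor.

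The uniqueness part of \Cref{thm:size:unique-factorization} then matches the factors of these two factorizations. In particular, $p$ is matched with some factor of the first factorization, so $p=u\,p_i$ or $p=u\,q_i$ for a nonzero rational $u$.

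Suppose $p=u\,p_i$. Then
\begin{equation*}
 f=p\cdot\Bigl(a\,u^{-1}\prod_{m\ne i}p_m\Bigr),
\end{equation*}
so $p$ divides $f$. The case $p=u\,q_i$ gives $p$ divides $g$ in the same way.

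The only step needing any care is the matching step. Uniqueness is stated as "after reordering, each $q_i=u_ip_i$", so every factor of one factorization, including $p$, is paired with some factor of the other. That pairing is exactly what is needed, so I expect no real obstacle.
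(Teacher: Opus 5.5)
Your proof is correct and follows the paper's route. The paper states this lemma only as a consequence of \Cref{thm:size:unique-factorization}, without details, and your comparison of the factorization $ab\,p_1\cdots p_rq_1\cdots q_s$ with $c\,p\,h_1\cdots h_k$ (including the degenerate cases $f=0$, $g=0$) is the standard way to fill that in.
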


We now prove that the determinant of any symmetric matrix with separate upper-triangular variables meets the irreducibility hypothesis of \Cref{lem:size:irreducible-product}. This is also a standard result and we include a proof for completeness.

\begin{lemma}
\label{lem:size:determinant}
Let $C$ be a symmetric $d\times d$ matrix whose upper-triangular entries are separate variables. Then the polynomial $\det C$ in these variables is irreducible over the rational numbers.
\end{lemma}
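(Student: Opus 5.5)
The plan is to prove irreducibility of $\det C$ by exploiting that it is affine in each off-diagonal variable and in each diagonal variable, combined with unique factorization (\Cref{thm:size:unique-factorization}). Write $c_{ij}$ ($i\le j$) for the variables. The determinant has degree exactly one in each diagonal variable $c_{ii}$, with coefficient the principal minor $\det C_{\hat i\hat i}$ obtained by deleting row and column $i$. It has degree exactly two in each off-diagonal variable $c_{ij}$, since $c_{ij}$ appears in both positions $(i,j)$ and $(j,i)$. The leading term of $\det C$ is $-c_{ij}^2\det C_{\widehat{ij},\widehat{ij}}$. We argue by induction on $d$. The case $d=1$ is trivial since $c_{11}$ is linear. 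For $d=2$, $c_{11}c_{22}-c_{12}^2$ is irreducible: a factorization would have to be into two factors linear in the variables, and checking monomials rules this out.

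For the inductive step, suppose $\det C=fg$ with $f,g$ nonconstant. First we use the diagonal variables. Since $\det C$ has degree one in $c_{11}$, exactly one factor, say $f$, contains $c_{11}$, and $g$ is free of $c_{11}$. Then $\det C=c_{11}\det C_{\hat1\hat1}+R$, where $R$ is free of $c_{11}$. Writing $f=c_{11}f_1+f_0$ gives $\det C_{\hat1\hat1}=f_1g$. By induction, $\det C_{\hat1\hat1}$ is irreducible, so either $g$ is constant (and we are done) or $f_1$ is a rational constant and $g$ is a rational multiple of $\det C_{\hat1\hat1}$.

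In the latter case we reach a contradiction by rerunning the argument with a different diagonal variable, say $c_{22}$. The determinant has degree one in $c_{22}$, and $g=u\det C_{\hat1\hat1}$ contains $c_{22}$ when $d\ge3$. Hence $f$ is free of $c_{22}$. Repeating the previous argument with index $2$ shows that either $f$ is constant or $f$ is a multiple of $\det C_{\hat2\hat2}$. The latter is impossible because $f$ contains $c_{11}$ while also being a constant multiple of a polynomial, $\det C_{\hat2\hat2}$, that contains $c_{11}$ only if $d\ge3$. Instead we compare degrees: $\deg f+\deg g=d$, and $\deg g=d-1$, so $\deg f=1$. But $f=\text{const}\cdot\det C_{\hat2\hat2}$ would have degree $d-1\ge2$. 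This is a contradiction for $d\ge3$.

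A cleaner route is the degree argument itself: from $g\sim\det C_{\hat1\hat1}$ we get $\deg f=1$ with $f=a c_{11}+f_0$, $a\in\mathbb{Q}^\times$. Then $g$ contains $c_{22}$ while $f$ does not, and the coefficient of $c_{22}$ gives $\det C_{\hat2\hat2}=f\cdot(\partial g/\partial c_{22})$. This exhibits an irreducible polynomial of degree $d-1\ge2$ with a degree-one factor, which is impossible.

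The main obstacle is bookkeeping: checking that each principal minor is itself a generic symmetric determinant in separate variables, so the inductive hypothesis applies, and that the relevant variables genuinely appear with nonzero coefficients. The degree-one structure in the diagonal entries makes the rest routine.
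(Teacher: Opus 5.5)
Your proof is correct. It shares the paper's opening and differs in how the last case is eliminated. Both arguments start the same way. One factor $f$ carries the $c_{11}$-dependence, and comparing $c_{11}$-coefficients gives $\det C_{\hat1\hat1}=f_1g$. The inductive hypothesis then reduces to the case where $f_1$ is a nonzero rational constant and $g$ is a rational multiple of $\det C_{\hat1\hat1}$.

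The paper eliminates this case by evaluation. It would give $\det C=(\det B)(t+h_1/h_0)$, so $\det C$ would vanish wherever $\det B$ does. The explicit matrix $C_0$ with $\det B_0=0$ and $\det C_0=-1$ refutes this.

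You eliminate it by degree bookkeeping. You get $\deg f=1$, and $f$ is free of $c_{22}$. Comparing $c_{22}$-coefficients gives $\det C_{\hat2\hat2}=f\cdot\partial g/\partial c_{22}$. This factors an irreducible polynomial of degree $d-1\ge2$ into nonconstant factors of degrees $1$ and $d-2$, which is impossible.

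Your route is purely formal and needs no witness matrix. It does apply the inductive hypothesis to two different principal minors. The degree contradiction also only bites for $d\ge3$, which is why you need the separate $d=2$ base case. The paper's single-variable argument handles $d=2$ inside the inductive step.

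When writing it up, say explicitly why the inductive hypothesis applies to $\det C_{\hat1\hat1}=f_1g$ and $\det C_{\hat2\hat2}=f\cdot\partial g/\partial c_{22}$. The factors a priori live in a ring with more variables. Since each minor has degree zero in every variable of the deleted row, degree additivity forces both factors to be free of those variables too. The paper makes exactly this remark about $v$.

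Also delete two sentences:
\begin{itemize}
\item the confused sentence about $\det C_{\hat2\hat2}$ containing $c_{11}$ ``only if $d\ge3$'';
\item the opening claim that $\det C$ is affine in the off-diagonal variables, which contradicts your correct next sentence that it is quadratic in each $c_{ij}$, $i<j$.
\end{itemize}
Neither is used in the final argument.
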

\begin{proof}
We use induction on $d$. For $d=1$, the determinant is a single variable and is irreducible. For $d\ge2$, write
\begin{equation*}
 C=\begin{pmatrix}t&v^\top\\v&B\end{pmatrix}.
\end{equation*}
The coefficient of $t$ in $\det C$ is $\det B$, obtained by deleting the first row and column. Thus $\det C$ has degree one in $t$.

Suppose $\det C=pq$ for polynomials $p,q$ in the entries of $C$, with rational coefficients. Since degrees in $t$ add under multiplication, one of $p,q$ must be independent of $t$. Name the factors so that $p$ is independent of $t$. Then $q$ has degree one in $t$. Let $h_0$ and $h_1$ be its coefficient of $t$ and its constant term, respectively. Both are polynomials in the entries of $B$ and $v$, independent of $t$, and
\begin{equation*}
 \det C=p(h_0t+h_1).
\end{equation*}
The coefficient of $t$ on the right is $ph_0$. Comparing with the coefficient of $t$ in $\det C$ gives
\begin{equation*}
 \det B=ph_0.
\end{equation*}
We will apply the induction hypothesis to this factorization of $\det B$. Since $\det B=ph_0$ is independent of every entry of $v$, both $p$ and $h_0$ must also be independent of those entries: degrees in each variable add under multiplication. Thus $p$ and $h_0$ are polynomials only in the entries of $B$.

By induction, $\det B$ is irreducible, so $p$ or $h_0$ must be constant. If $p$ is constant, we are done. Otherwise, $h_0$ is a nonzero rational constant. Substituting $p=(\det B)/h_0$ into the factorization of $\det C$ gives
\begin{equation*}
 \det C=p(h_0t+h_1)=(\det B)\left(t+\frac{h_1}{h_0}\right).
\end{equation*}
This identity would force $\det C=0$ at every matrix with $\det B=0$, because division by the nonzero constant $h_0$ is always defined.

To rule this out, consider the numerical matrix
\begin{equation*}
 C_0=\begin{pmatrix}
 0&1&0\\
 1&0&0\\
 0&0&I_{d-2}
 \end{pmatrix},
\end{equation*}
using just the upper-left $2\times2$ block when $d=2$. Let $B_0$ be the lower-right $(d-1)\times(d-1)$ block of $C_0$. The first row of $B_0$ is zero, so $\det B_0=0$. But $\det C_0=-1$, from the determinant of the upper-left $2\times2$ block and the remaining identity block. This contradicts $\det C=(\det B)(t+h_1/h_0)$, whose right-hand side is zero at $C_0$. Therefore $p$ must be constant, proving that $\det C$ is irreducible.
\end{proof}

\sizeexactinputs*
\begin{proof}
Take $K,L$ from \Cref{lem:size:polynomials}. We first show that $\det C$ divides $L(C)$. We then use a singular matrix on the line $\Sigma+t\Delta$ to rule out $\Delta\ne0$.

Exact aggregation means that the output coefficient vector equals $w^*=\Sigma^{-1}c$, since the features are linearly independent. Hence
\begin{equation*}
 \frac{K(\Sigma)}{L(\Sigma)}=\Sigma^{-1}c,
\end{equation*}
and so $\Sigma K(\Sigma)=L(\Sigma)c$. Each coordinate of $CK(C)-L(C)c$ is a polynomial with rational coefficients in the upper-triangular entries of $C$, and the displayed equality says that it vanishes at $\Sigma$. Those entries of $\Sigma$ are algebraically independent by \Cref{lem:size:algebraic-independence}. Each coordinate must therefore be the zero polynomial, giving
\begin{equation*}
 CK(C)=L(C)c.
\end{equation*}
This identity holds for every symmetric $C$, including singular matrices.

Let $\widetilde C$ be obtained from $C$ by replacing its first column with $CK(C)$. Also let $C'$ be obtained from $C$ by replacing its first column with $\vone_d$. Since $CK(C)=L(C)c=(L(C)/(4d))\vone_d$, the first column of $\widetilde C$ is $L(C)/(4d)$ times the first column of $C'$, and all other columns agree. Factoring this scalar out of the first column gives
\begin{equation*}
 \det\widetilde C=\frac{L(C)}{4d}\det C'.
\end{equation*}

To compute the same determinant another way, write $C=[C_1,\ldots,C_d]$, so $C_j$ is the $j$th column, and write $(K(C))_j$ for the $j$th coordinate of $K(C)$. By definition,
\begin{equation*}
 CK(C)=\sum_{j=1}^d (K(C))_j C_j.
\end{equation*}
The first column of $\widetilde C$ is this sum, and its remaining columns are $C_2,\ldots,C_d$. Since the determinant is linear in its first column,
\begin{equation*}
 \det\widetilde C
 =\sum_{j=1}^d (K(C))_j\det[C_j,C_2,\ldots,C_d].
\end{equation*}
For $j=1$, the matrix inside the determinant is $C$. For every $j\ge2$, its first and $j$th columns are both $C_j$, so its determinant is zero. Only the $j=1$ term remains, giving
\begin{equation*}
 \det\widetilde C=(K(C))_1\det C.
\end{equation*}
Equating the two expressions for $\det\widetilde C$ and multiplying by $4d$ gives
\begin{equation*}
 4d\,(K(C))_1\det C=L(C)\det C'.
\end{equation*}
The displayed identity shows that $\det C$ divides the product $L(C)\det C'$. All three are polynomials with rational coefficients in the upper-triangular entries of $C$, and $\det C$ is irreducible by \Cref{lem:size:determinant}. By \Cref{lem:size:irreducible-product}, $\det C$ therefore divides $L(C)$ or $\det C'$.

It cannot divide $\det C'$. The first column of $C'$ consists of ones, so each term in its determinant is a product of $d-1$ entries of $C$. Thus $\det C'$ has degree at most $d-1$. It is also a nonzero polynomial: at $C=I$, subtracting the other columns of $C'$ from its first column gives $I$ without changing the determinant, so $\det C'=1$ there. In contrast, $\det C$ has degree $d$. Hence $\det C$ cannot divide $\det C'$ and must instead divide $L(C)$.

Suppose $\Delta\ne0$. We will find a real $t$ for which $\Sigma+t\Delta$ is singular. Since $\Sigma$ is positive definite, set
\begin{equation*}
 M=\Sigma^{-1/2}\Delta\Sigma^{-1/2}.
\end{equation*}
The matrix $M$ is symmetric because $\Delta$ and $\Sigma^{-1/2}$ are symmetric. It is nonzero because $\Delta=\Sigma^{1/2}M\Sigma^{1/2}$ and $\Delta\ne0$. Thus $M\ne0$ has a nonzero real eigenvalue $\lambda$ and a nonzero vector $v$ with $Mv=\lambda v$.

Set $t=-1/\lambda$, so $(I+tM)v=(1+t\lambda)v=0$. Using $\Delta=\Sigma^{1/2}M\Sigma^{1/2}$, we obtain
\begin{equation*}
 (\Sigma+t\Delta)(\Sigma^{-1/2}v)
 =\Sigma^{1/2}(I+tM)v=0.
\end{equation*}
The vector $\Sigma^{-1/2}v$ is nonzero because $\Sigma^{-1/2}$ is invertible and $v\ne0$. Hence $\Sigma+t\Delta$ has a nonzero vector in its kernel and is singular.
Since $\det C$ divides $L(C)$, we have $L(C)=\det C \cdot H(C)$ for some polynomial $H$. Since $\det (\Sigma + t\Delta) = 0$, this forces $L(\Sigma+t\Delta)=0$. But \Cref{lem:size:polynomials} gives $L(\Sigma+t\Delta)=L(\Sigma)>0$ because \eqref{eq:size:input-direction} holds. This contradiction proves $\Delta=0$.
\end{proof}

\subsection{Proof of the lower bound}
\label{app:size:lower-bound}

\begin{proof}[Proof of \Cref{thm:size:lower-bound}]
Fix $d\ge2$ and let the adversary choose the distribution from \Cref{sec:agent-size}. It is normalized and has linearly independent features by \Cref{lem:size:distribution}. Since $Y$ is a linear combination of the features, $Y=f^*$. With knowledge of this distribution, the designer may choose any graph $G=(A,E)$, single-feature allocation, and output agent for which $f_G=Y$.

Suppose, for contradiction, that $G$ violates \eqref{eq:size:parent-pairs}. By \Cref{lem:size:input-count}, there is a nonzero symmetric matrix $\Delta$ satisfying \eqref{eq:size:input-direction} at every agent. Since $G$ is exact on the chosen distribution, \Cref{lem:size:exact-inputs} gives $\Delta=0$, a contradiction. Thus $G$ satisfies \eqref{eq:size:parent-pairs}.
\end{proof}

\end{document}

%% file: math_commands.tex
\usepackage{amsmath,amsfonts,bm}

\def\eqref#1{equation~\ref{#1}}

\def\1{\bm{1}}

\def\vone{{\bm{1}}}

\DeclareMathAlphabet{\mathsfit}{\encodingdefault}{\sfdefault}{m}{sl}
\SetMathAlphabet{\mathsfit}{bold}{\encodingdefault}{\sfdefault}{bx}{n}

\def\gC{{\mathcal{C}}}
\def\gD{{\mathcal{D}}}

\def\sC{{\mathbb{C}}}

\newcommand{\E}{\mathbb{E}}

\newcommand{\R}{\mathbb{R}}

\DeclareMathOperator*{\argmax}{arg\,max}
\DeclareMathOperator*{\argmin}{arg\,min}

%% file: references.bib
@inproceedings{kearns2026networked,
  title={Networked Information Aggregation via Machine Learning},
  author={Kearns, Michael and Roth, Aaron and Ryu, Emily},
  booktitle={Proceedings of the 2026 Annual {ACM-SIAM} Symposium on Discrete Algorithms, {SODA} 2026, Vancouver, BC, Canada, January 11-14, 2026},
  pages={4799--4845},
  year={2026},
  organization={SIAM},
  doi={10.1137/1.9781611978971.173}
}

@article{bateni2026networked,
  title={Networked Information Aggregation for Binary Classification},
  author={Bateni, MohammadHossein and Hadizadeh, Zahra and Hajiaghayi, MohammadTaghi and JafariRaviz, Mahdi and Taherijam, Shayan},
  journal={CoRR},
  volume={abs/2605.01082},
  year={2026},
  doi={10.48550/ARXIV.2605.01082},
  eprinttype={arXiv},
  eprint={2605.01082}
}

@article{degroot1974reaching,
  title={Reaching a consensus},
  author={DeGroot, Morris H},
  journal={Journal of the American Statistical Association},
  volume={69},
  number={345},
  pages={118--121},
  year={1974},
  publisher={Taylor \& Francis}
}

@article{golub2010naive,
  title={Naive learning in social networks and the wisdom of crowds},
  author={Golub, Benjamin and Jackson, Matthew O},
  journal={American Economic Journal: Microeconomics},
  volume={2},
  number={1},
  pages={112--149},
  year={2010},
  publisher={American Economic Association}
}

@article{aumann1976agreeing,
  title={Agreeing to Disagree},
  author={Aumann, Robert J},
  journal={The Annals of Statistics},
  volume={4},
  number={6},
  pages={1236--1239},
  year={1976}
}

@inproceedings{collina2025tractable,
  title={Tractable agreement protocols},
  author={Collina, Natalie and Goel, Surbhi and Gupta, Varun and Roth, Aaron},
  booktitle={Proceedings of the 57th Annual ACM Symposium on Theory of Computing},
  pages={1532--1543},
  year={2025}
}

@inproceedings{collina2026collaborative,
  title={Collaborative prediction: Tractable information aggregation via agreement},
  author={Collina, Natalie and Globus-Harris, Ira and Goel, Surbhi and Gupta, Varun and Roth, Aaron and Shi, Mirah},
  booktitle={Proceedings of the 2026 Annual ACM-SIAM Symposium on Discrete Algorithms (SODA)},
  pages={4712--4798},
  year={2026},
  organization={SIAM}
}

@article{wolpert1992stacked,
  title={Stacked generalization},
  author={Wolpert, David H.},
  journal={Neural Networks},
  volume={5},
  number={2},
  pages={241--259},
  year={1992},
  doi={10.1016/S0893-6080(05)80023-1}
}

@inproceedings{kim2019multiaccuracy,
  title={Multiaccuracy: Black-Box Post-Processing for Fairness in Classification},
  author={Kim, Michael P. and Ghorbani, Amirata and Zou, James Y.},
  booktitle={Proceedings of the 2019 {AAAI/ACM} Conference on AI, Ethics, and Society, {AIES} 2019, Honolulu, HI, USA, January 27-28, 2019},
  pages={247--254},
  year={2019},
  doi={10.1145/3306618.3314287}
}

@article{yang2019federated,
  title={Federated Machine Learning: Concept and Applications},
  author={Yang, Qiang and Liu, Yang and Chen, Tianjian and Tong, Yongxin},
  journal={{ACM} Trans. Intell. Syst. Technol.},
  volume={10},
  number={2},
  pages={12:1--12:19},
  year={2019},
  doi={10.1145/3298981}
}

@article{cheng2021secureboost,
  title={SecureBoost: {A} Lossless Federated Learning Framework},
  author={Cheng, Kewei and Fan, Tao and Jin, Yilun and Liu, Yang and Chen, Tianjian and Papadopoulos, Dimitrios and Yang, Qiang},
  journal={{IEEE} Intell. Syst.},
  volume={36},
  number={6},
  pages={87--98},
  year={2021},
  doi={10.1109/MIS.2021.3082561}
}

@book{dummit2003abstract,
  title={Abstract Algebra},
  author={Dummit, D.S. and Foote, R.M.},
  isbn={9780471433347},
  lccn={2003057652},
  url={https://books.google.com/books?id=KJDBQgAACAAJ},
  year={2003},
  publisher={Wiley}
}

@inproceedings{guo2024multiagent,
  title={Large Language Model Based Multi-agents: {A} Survey of Progress and Challenges},
  author={Guo, Taicheng and Chen, Xiuying and Wang, Yaqi and Chang, Ruidi and Pei, Shichao and Chawla, Nitesh V. and Wiest, Olaf and Zhang, Xiangliang},
  booktitle={Proceedings of the Thirty-Third International Joint Conference on Artificial Intelligence, {IJCAI} 2024, Jeju, South Korea, August 3-9, 2024},
  pages={8048--8057},
  publisher={ijcai.org},
  year={2024},
  doi={10.24963/IJCAI.2024/890}
}

@article{bateni2026optimal,
  title={Optimal Rates for Agentic Networked Information Aggregation},
  author={Bateni, MohammadHossein and Hadizadeh, Zahra and Hajiaghayi, MohammadTaghi and JafariRaviz, Mahdi and Taherijam, Shayan},
  journal={arXiv preprint arXiv:2609.05318},
  year={2026}
}

@article{pal2026optimal,
  title={Optimal Lower Bounds for Networked Information Aggregation},
  author={Pal, Ambar},
  journal={arXiv preprint arXiv:2608.15472},
  year={2026}
}
